\theoremstyle{definition}
\newtheorem{theorem}{Theorem}
\newtheorem{proposition}{Proposition}
\newtheorem{lemma}{Lemma}
\newtheorem{definition}{Definition}
\newcommand{\RNum}[1]{\expandafter{\romannumeral #1\relax}}
\begin{document}
\begin{spacing}{1.0}	
\title{Latent Dirichlet Allocation Model Training with Differential Privacy}
\author{Fangyuan~Zhao, Xuebin~Ren, Shusen~Yang, Qing~Han, Peng~Zhao, and Xinyu~Yang
}

\maketitle

\begin{abstract}
Latent Dirichlet Allocation (LDA) is a popular topic modeling technique for hidden semantic discovery of text data and serves as a fundamental tool for text analysis in various applications. 		
However, the LDA model as well as the training process of LDA may expose the text information in the training data, thus bringing significant privacy concerns. To address the privacy issue in LDA, we systematically investigate the privacy protection of the main-stream LDA training algorithm based on Collapsed Gibbs Sampling (CGS) and propose several differentially private LDA algorithms for typical training scenarios. In particular, we present the first theoretical analysis on the inherent differential privacy guarantee of CGS based LDA training and further propose a centralized privacy-preserving algorithm (HDP-LDA) that can prevent data inference from the intermediate statistics in the CGS training. Also, we propose a locally private LDA training algorithm (\textsf{LP-LDA}) on crowdsourced data to provide local differential privacy for individual data contributors. Furthermore, we extend \textsf{LP-LDA} to an online version as \textsf{OLP-LDA} to achieve LDA training on locally private mini-batches in a streaming setting. Extensive analysis and experiment results validate both the effectiveness and efficiency of our proposed privacy-preserving LDA training algorithms. 

\end{abstract}
\begin{IEEEkeywords}
  Topic model, Latent Dirichlet Allocation, collapsed Gibbs sampling, differential privacy
\end{IEEEkeywords}

\section{Introduction}
\IEEEPARstart{L}{atent} Dirichlet Allocation (LDA)~\cite{blei2003latent} is a basic building block widely used in many machine learning (ML) applications. 
In essence, LDA works by mapping the high dimensional word space to a low dimensional topic space while preserving the implicit probabilistic relationship. Therefore, LDA is often used as a dimension reduction tool for extracting main features from massive text datasets, thereby simplifying the subsequent text processing tasks like classification and similarity judgment.
With such great benefits, a series of large-scale LDA platforms have been developed in the era of big data, including light LDA~\cite{yuan2015lightlda} for Microsoft, Peacock~\cite{wang2014towards} and LDA*~\cite{yut2017lda} for Tencent, and Yahoo!LDA~\cite{smola2010architecture} for Yahoo!.	

As shown in Fig.~\ref{LDAapplications}, with a broad spectrum of application fields, LDA may be trained on information-sensitive datasets from both large enterprises/organizations and massive crowdsourcing users.
For example, patients' electronic health records can be fed in an LDA model to help doctors diagnose possible diseases~\cite{hoogendoorn2016utilizing}. Social media profiles can be utilized with LDA models to perform fine-grained community discovery~\cite{zhang2007lda}.
However, similar to other ML techniques, the LDA model improves its utility by constantly exploring the raw data, thus inevitably memorizing some knowledge about the dataset. By utilizing this characteristic, several attack models have been proposed to distill private information of the training data from machine learning models. For example, membership inference attack~\cite{shokri2017membership} has been proved to be able to extract the membership of training samples. Model inversion attack~\cite{fredrikson2014privacy} can be launched to recover the training data by observing the model predictions. As a typical ML model, LDA may also suffer from these attacks and cause severe privacy risks.

\begin{figure}[t]
\centering
\includegraphics[width=0.95\linewidth]{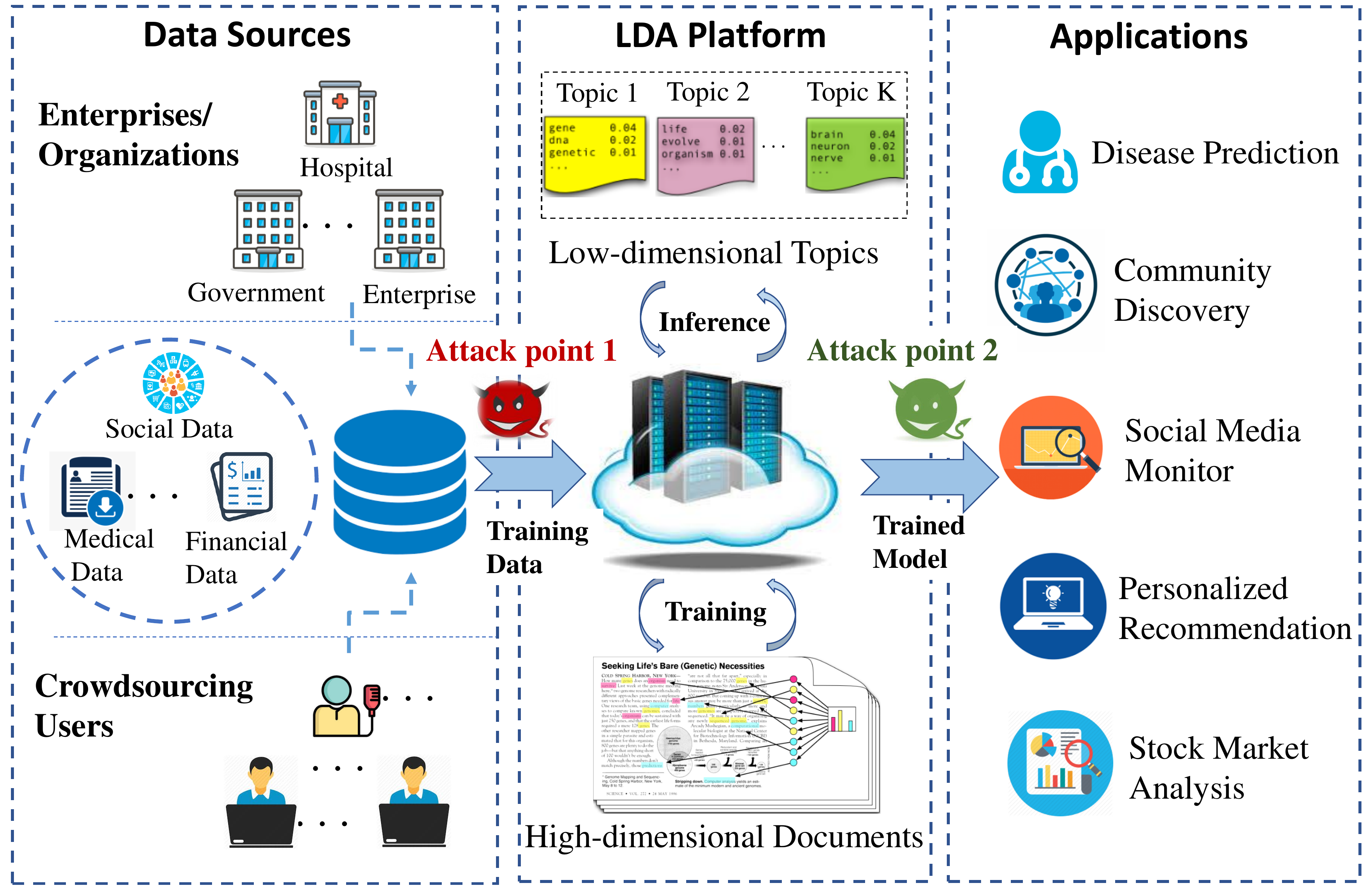}
\caption{Application Scenarios of LDA Model Training}
\label{LDAapplications}
\end{figure}

Differential privacy (DP)~\cite{dwork2006calibrating}, as a rigorous paradigm for privacy preservation, provides not only a mathematical framework for quantifying the privacy risks of existing algorithms, but also an efficient guidance for customizing privacy-preserving algorithms. As a result, it has become the de-facto standard of privacy preservation and has been adopted in a broad wide of applications like data publication~\cite{ren2018textsf,li2019impact,zhu2014correlated} and machine learning~\cite{chaudhuri2011differentially,abadi2016deep}. Similarly, DP for LDA has also attracted lots of research interests~\cite{park2016variational,zhu2016privacy,wang2020federated}. For example, Park et al.~\cite{park2016variational} provided DP for variational Bayesian algorithm based LDA training.
Zhu et. al.~\cite{zhu2016privacy} introduced DP into the collapsed Gibbs sampling(CGS)~\cite{griffiths2002gibbs} training process, by adding noise to the word counts statistics in the last iteration of the sampling process. Wang et. al.~\cite{wang2020federated} proposed a local private solution to LDA training in a federated setting. 	

However, there are several limitations regarding the existing studies. On one hand, few works consider strong adversaries with full knowledge of the training mechanism and access to intermediate statistics for CGS-based LDA training (a common adversary model similar to~\cite{abadi2016deep}). In particular, in the training process, both the word-count information and the sampled topics for each word can reveal the information about specific training samples. Nevertheless, the existing work~\cite{zhu2016privacy} simply focused on protecting the former but neglected the latter.
On the other hand, these methods mainly prevent third-party adversaries from stealing the trained model (e.g., Attack Point 2 in Fig.~\ref{LDAapplications}). They implicitly assume the LDA model is trained on centralized datasets by a trustworthy server. However, the central server may act as an ``honest but curious'' adversary and steal training data silently (e.g., Attack Point 1 in Fig.~\ref{LDAapplications}). Then, users may be reluctant to share their data directly.   
Furthermore, most of the existing works focus on batched LDA for static text data, thus cannot efficiently cope with many practical scenarios where training data comes in a streaming fashion.

To address the issues, in this paper, we not only investigate to provide more comprehensive protection for the whole training process by utilizing the inherent privacy of the CGS algorithm on centralized datasets,
but also present a solution to LDA training with local differential privacy (LDP). In addition, we further extend the LDP solution to an efficient online LDA scenario. Our contributions are summarized as follows:

\begin{enumerate}

\item We present the first study on the inherent privacy of CGS-based LDA by discovering the consistency between the topic sampling of CGS and the exponential mechanism of DP. Based on the study, we propose \textsf{HDP-LDA}, the first privacy-preserving LDA algorithm to protect the whole training process of CGS-based LDA on centralized datasets.

\item We propose \textsf{LP-LDA}, a novel privacy-preserving mechanism that supports training an LDA model on crowd-sourced datasets. \textsf{LP-LDA} can provide local differential privacy for individual data contributors.
\item We propose \textsf{OLP-LDA}, a privacy-preserving online LDA algorithm for crowd-sourced data streams. In particular, we first provide a baseline online LDA framework \textsf{O-LDA}, and then propose to utilize prior knowledge to refine model accuracy on locally private data  streams.
\item We conduct extensive experiments on several real-world datasets to validate the effectiveness of the proposed algorithms. Experiment results show that the proposed algorithms can achieve much higher model utility while providing sufficient privacy guarantees.
\end{enumerate}

This paper is an extension of our preliminary work~\cite{DBLP:conf/ijcai/ZhaoRYY19}, which focuses on the privacy preservation of batch LDA training on static datasets.
Compared with \cite{DBLP:conf/ijcai/ZhaoRYY19}, more extensive analysis and experiments have been added to present a comprehensive study. Besides, the current paper further proposes
a novel online algorithm \textsf{OLP-LDA} to achieve efficient local differential privacy for LDA training on crowd-sourced data streams. In \textsf{OLP-LDA}, we first propose an online LDA training framework \textsf{O-LDA}, and then present a novel Bayesian denoising mechanism to enhance the utility of \textsf{O-LDA} on  privacy-preserved data batches by leveraging  prior knowledge. Finally, we perform extensive experiments to validate the performance of \textsf{OLP-LDA}.

The remainder of this paper is organized as follows. We discuss the related studies in Section~\ref{sec:related work}. We review the background of LDA and differential privacy in Section~\ref{sec:preliminaries}. We present the intrinsic privacy study of CGS based LDA in Section~\ref{sec:CGS-LDA}. Then, we give the LDP solutions for the batch and online  modes in Sections~\ref{sec:LP-LDA} and~\ref{sec:OLP-LDA}, respectively. 
The experiments and simulation results are described and explained in Section~\ref{sec:experiment}. Finally, we conclude this paper in Section~\ref{sec:conclusion}.

\section{Related work}\label{sec:related work}
\subsection{Machine Learning with Differential Privacy}

Many studies~\cite{chaudhuri2009privacy,chaudhuri2012near,jagannathan2009practical,xu2019ganobfuscator,huang2019dp,zhang2016dynamic} have adopted DP in the privacy preservation of ML models. The basic idea is to perturb the ML models in different parts. 
\textit{Output perturbation} on the final model result is the most straightforward solution. However, many ML models incur unbounded  sensitivities, which makes it difficult to implement. 
To mitigate this issue, \textit{sample-and-aggregate}~\cite{nissim2007smooth,dwork2009differential,czerniak2003application} framework is proposed to first train on disjointedly sample partitions and then aggregate the trained results with DP. Still, this technique applies to ML on relatively small datasets.
\textit{Objective perturbation} randomizes the cost function of ML models~\cite{chaudhuri2011differentially,bassily2014private}. 
\textit{Intermediate perturbation} aims to randomize the intermediate parameters during iterative training, which is effective for deep learning~\cite{park2016variational,shokri2015privacy,abadi2016deep}. Recently, \textit{input perturbation} that trains ML models on the perturbed datasets has attracted extensive research interests. A relevant notion in DP is called \textit{local differential privacy}~\cite{erlingsson2014rappor,sun2019relationship}, which shows that meaningful statistics could be obtained from massive randomized crowdsourced data. 
Both \textit{input perturbation} and \textit{local differential privacy} aim to eliminate the assumption for trustworthy servers, thus providing a stronger privacy guarantee.

\subsection{LDA Training with Differential Privacy}
Similar to other ML models, LDA can also achieve DP protection by the aforementioned strategies except for objective perturbation, since it has no explicit cost function. To the best of our knowledge, most of the current works achieve privacy protection by perturbing the intermediate parameters during the training process. For instance, Park et al.~\cite{park2016variational} proposed to obtain DP guarantee for LDA models by perturbing the expected sufficient statistics in each iteration of the variational Bayesian method, which is a parameter estimation algorithm for LDA.

Zhu et. al.~\cite{zhu2016privacy} presented a differentially private LDA algorithm by perturbing the sampling distribution in the last iteration of the CGS process~\cite{griffiths2002gibbs}, which is another typical training algorithm for LDA. Decarolis et. al.~\cite{decarolis2018end} decomposed the LDA training into a workflow based on the spectral algorithm (a parameter estimation algorithm for LDA) and then perturbed those intermediate statistics located in the cut of the workflow. There is other efficient differentially private parameter estimation algorithm using Bayesian inference with a Gibbs sampler~\cite{bernstein2018differentially}. However, the Gibbs sampler in it only considers a single conjugate structure and cannot be directly extended to LDA model training, which involves a coupled conjugate structure.
Besides, the above DP methods \cite{zhu2016privacy, griffiths2002gibbs, decarolis2018end} cannot defend against the untrustworthy data curators by design. Wang et. al.~\cite{wang2020federated} presented a locally private LDA training algorithm for the federated setting, in which the data are perturbed before uploading in each iteration. However, it is not a general method for traditional batch or online LDA learning. In this paper, we aim to present DP solutions to both batch and online LDA training which can defend against the adversaries with full knowledge of the training process, even the untrustworthy data curators.

\subsection{Intrinsic Privacy of Randomized Algorithm}

Several recent studies \cite{wang2015privacy,zhang2017privbayes,dimitrakakis2014robust,zhang2016differential} have begun to look into the intrinsic privacy guarantee provided by the randomized algorithms. For example, Wang et al.~\cite{wang2015privacy} and Dimitrakakis et al. \cite{dimitrakakis2014robust}
first demonstrated that, when meeting certain conditions of the loss function, the posterior Bayesian sampling and the stochastic gradient Markov Chain Monte Carlo (MCMC) techniques could possess some inherent privacy guarantee without the introduction of extra noise. Foulds et al.~\cite{foulds2016theory} further extended this conclusion to the general MCMC methods. By utilizing the inherent randomness of MCMC, they achieved a certain level of privacy protection equivalent to that assured by a Laplace mechanism. Minami et al.~\cite{minami2016differential} then relaxed the required conditions for the loss function in~\cite{wang2015privacy}. Nevertheless, these works investigated the inherent privacy guarantee of randomized algorithms in theory on the basis of some ideal assumptions about the parameters, such as the bounds of their sensitivities are known. However, in the LDA model training, it is often infeasible to compute or bound the parameters. In such a case, accurately measuring the inherent privacy guarantee remains a great challenge.

\section{Preliminaries}\label{sec:preliminaries}

\subsection{Collapsed Gibbs Sampling based LDA}
\subsubsection{LDA Generative Model}
LDA is a generative model, which describes the hidden semantic architecture of document corpus generation. In the view of LDA, each document $d_m$ containing $N_m$ words in the corpus (or text dataset\footnote{For simplicity, we use both terms  interchangeably.}) $D$, is a mixture of $K$ different topics and can be represented by a $K$-dimensional "document-topic" distribution $\theta_m$.  Each topic $k$ is characterized by a mixture of $V$ words, represented by a $V$-dimensional "topic-word" distribution $\phi_k$.

As shown in Fig.~\ref{Fig:LDAgraphmodel}, LDA defines the generative process of a given corpus as follows:

\begin{enumerate}
\item For each topic $k$, draw a ``topic-word" distribution $\phi_k\sim Dir(\beta)$ over all $V$ words, where $\beta$ is the hyperparameter describing the prior observation for the ``topic-word'' count.
\item For each document $d_m$, draw a ``document-topic" distribution $\theta_m\sim Dir(\alpha)$ over all $K$ topics, where $\alpha$ is the hyperparameter describing the prior observation for the ``document-topic'' count.
\item For each word $w_i$ in $d_m$, $1\leq i\leq N_m$, sample a topic $k\sim\theta_m$ and a word $t\sim\phi_k$.
\end{enumerate}

\begin{figure}[t]
\centering
\includegraphics[width=0.35\textwidth]{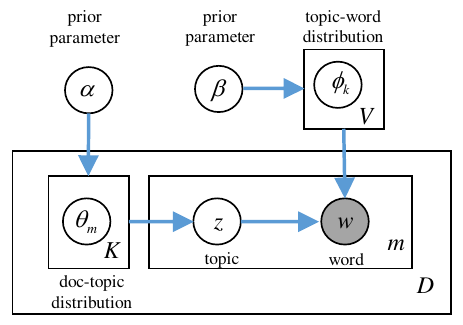}
\caption{LDA Graph Model}
\label{Fig:LDAgraphmodel}
\end{figure}

\subsubsection{Collapsed Gibbs Sampling}

The objective of LDA training is to learn the topic-word distribution $\mathbf{\phi}_k$ for each topic $k$, which can be used to infer the document-topic distribution $\mathbf{\theta}_m$ for any unseen document $d_m$.  Collapsed Gibbs Sampling (CGS) is the most popular training algorithm for LDA. As a special MCMC method, CGS works by generating topic samples alternatively for all the words in $D$, and then conducting Bayesian estimation for the topic-word distribution based on the generated topic samples. Three main procedures of CGS are summarized as follows:
\begin{itemize}
\item \textbf{Initialization.} In the beginning, each word $w\in D$ is randomly assigned with a topic $k\in \mathcal{K}$, and the word-count information $n_m^k$ and $n_k^t$ is counted\footnote{For simplicity, $n_m^k$ is called document-topic count and $n_k^t$ is called topic-word count in this paper.}.  $n_k^t$ denotes the number of times that word $t$ has been assigned with the topic $k$, and $n_m^k$ refers to the number of times that topic $k$ has been assigned to a word of the document $d_m$.
\item \textbf{Burn-in.} In each iteration, the topic assignment for each word $w\in D$ is updated alternatively by sampling from a multinomial distribution $\mathbf{P}=[p_1,...,p_k,...,p_K]$. Each component of $\mathbf{P}$ can be computed by
\begin{equation}\label{Equ:sampling equation}
p_k\propto\frac{n_{k}^{t}+\beta}{\sum_{t=1}^{V}(n_{k}^{t}+\beta)}\cdot\frac{n_{m }^{k}+\alpha}{\sum_{k=1}^{K}(n_{m}^{k}+\alpha)}
\end{equation}
where $p_k$ refers to the probability that topic $k$ is sampled. The word-count information $n_m^k$ and $n_k^t$ is updated in each sampling. After the given $T$ iterations, the burn-in process stops and the topic samples $\mathbf{z}$ can be obtained.
\item \textbf{Estimation.} The topic-word distribution $\phi_k$ for each topic $k$ is estimated based on the topic samples $\mathbf{z}$ and the words $\mathbf{w}\in D$. In particular, each component of $\phi_k$ can be estimated by
\begin{equation}\label{Equ: trained topic word distribution}
\mathbb{E}[\phi_k^t|\mathbf{z},\mathbf{w}]=\frac{n_{k}^{t}+\beta}{\sum_{t=1}^{V}(n_{k}^{t}+\beta)},
\end{equation}
where $\phi_k^t$ refers to the probability that word $t$ is generated by topic $k$ (corresponds to the assumption of LDA that words are generated by topics).
\end{itemize}
The detailed algorithm of CGS can be referred to in~\cite{heinrich2005parameter}.

\subsection{Differential Privacy and Exponential Mechanism} \label{sec:DP and EM}

Differential privacy (DP) provides a rigorous framework to quantify the privacy guarantee by analyzing the statistical difference between the algorithm outputs on neighboring datasets.

\begin{definition}
(\textbf{Differential Privacy}~\cite{dwork2006calibrating}) A randomized mechanism $\mathcal{M}:D\rightarrow Y$ satisfies $\epsilon\text{-differential privacy}$ if for any neighboring datasets $D, D'$ that differ by one record (i.e., $|D\oplus D'|=1$) and any output $S\subseteq Y$, there is
\begin{equation*}
\mathrm{Pr}[\mathcal{M}(D)\in S]\leq \mathrm{exp}(\epsilon) \cdot \mathrm{Pr}[\mathcal{M}(D')\in S],
\end{equation*}
where $\mathrm{Pr}[\cdot]$ is the probability and $\epsilon$ is the privacy level of $\mathcal{M}$.
\end{definition}

Exponential mechanism is a fundamental technique to achieve $\epsilon$-DP for the situations where a query requires an approximately "best" answer returned privately without any perturbation. The core idea of the exponential mechanism is to return an answer sampled from the answer set based on a certain distribution.
\begin{definition}\label{defination: exponential mechanism}
(\textbf{Exponential Mechanism}~\cite{dwork2006calibrating}) A mechanism $\mathcal{M}_E (x, u, \mathcal{R}): D\rightarrow R$ satisfies $\epsilon$-DP if $\mathcal{M}_E (x, u, \mathcal{R})$ outputs an element $r\in R$ with probability $p_r$ satisfies that
\begin{equation*}
p_r \propto \mathrm{exp}({\frac{\epsilon }{2\Delta u}}u(x,r))
\end{equation*}
where $u(x,r)$ is the utility function and $\Delta u$ is its sensitivity.
\end{definition}

\subsection{Local Differential Privacy}

As a variant of DP, local differential privacy (LDP) describes a new privacy paradigm that any two different inputs will be mapped to the same value with similar probability.

\begin{definition}
(\textbf{Local Differential Privacy}~\cite{dwork2014algorithmic}) A randomized function $f$ satisfies $\epsilon$-local differential privacy if and only if for any two input tuples $t$ and $t'$ in the domain of $f$, and for any output $t^*$ in the range of $f$, there is:
\begin{equation*}
\mathrm{Pr}[f(t)=t^*]\leq \mathrm{exp}(\epsilon) \cdot \mathrm{Pr}[f(t')=t^*],
\end{equation*}
where $\mathrm{Pr}[\cdot]$ is the probability and $\epsilon$ is the local differential privacy parameter.
\end{definition}

LDP can be implemented via randomized response mechanism and well adapted to the crowd-sourcing scenario where the central server may not be trustworthy to end users who prefer to protect their own data individually.

\section{HDP-LDA: A Hybrid Private LDA Training Algorithm}\label{sec:CGS-LDA}
In this section, we first point out the limitations of existing methods in protecting the intermediate statistics in LDA training. Then we present a systematical study on the inherent DP guarantee of CGS-based LDA training on centralized datasets. Finally, we propose a hybrid privacy-preserving algorithm \textsf{HDP-LDA}, which can protect all the intermediate statistics of the whole CGS-based LDA training.

\subsection{Limitations of the Existing Methods}\label{Limitations of the Existing method}
A direct method to achieve DP in the CGS-based LDA algorithm is to add noise to the inner statistics~\cite{foulds2016theory}\cite{zhu2016privacy}, e.g., word counts $n_k^t$ and $n_m^k$, based on which the sampling probability would be computed to perform topic sampling. In~\cite{foulds2016theory}, the authors provide a general method to achieve DP in Gibbs Sampling, i.e., adding Laplace noise to the sufficient statistics, $n_k^t$ and $n_m^k$ in LDA at the beginning of the Gibbs Sampling process. \cite{zhu2016privacy} proposes to add Laplace noise to $n_k^t$ and $n_m^k$ in the final iteration. Actually, both methods cannot really protect the training process against strong adversaries with full knowledge of the training mechanism and access to intermediate statistics in LDA. The reasons are as follows:
\begin{itemize}
\item \textbf{Insufficient protection on word-counts.} In some training scenarios, e.g., in the distributed training of CGS-based LDA~\cite{newman2009distributed}, the topic-word counts $n_k^t$ would be released frequently to synchronize the training progress among all parties participating in training. In such a scenario, simply adding noise in the first or final iteration cannot sufficiently prevent  privacy leakage in the word-count information\footnote{For simplicity, we also say "protect the word-count information".} released in the training process.
\item \textbf{No protection on the sampled topics.} Before each topic sampling, the CGS algorithm would first access to the word $w$ to be sampled on and then compute the sampling distribution based on the word-count information $n_k^t$ and $n_m^k$ according to Equation~(\ref{Equ:sampling equation}).
Therefore, the sampling process is not a post-process after sanitizing the word-counts since it depends not only on the sanitized word-counts but also on the word $w$ in the raw dataset. That is to say, the sampled topics would cause additional privacy cost. In some practical scenarios~\cite{wang2014towards}, the topic assignments might be also exchanged among different parties to collaboratively train the model.
\end{itemize}

As analyzed above, we need a privacy-preserving algorithm which can prevent privacy leakage from both the word-count information and the sampled topics. For the former, we can introduce noise in each iteration to protect the word-counts. For the latter, we propose to utilize the inherent privacy of the CGS algorithm to achieve protection.

\subsection{Model Assumptions}
Here we give some assumptions on the adversary model and the neighboring datasets.
\subsubsection{Adversary Model}
We assume that the data curator is trustworthy and has no interest in inferring data privacy. But there exists a strong external adversary who can observe the sampled topic assignments $\textbf{K}$ and the word-count matrices $N_k^t$ and $N_m^k$ in each iteration of the training process. Based on the observed statistics, the adversary attempts to infer the private information of the training data.

\subsubsection{Neighboring Datasets}
 We construct the neighboring dataset $D'$ by replacing a single word $w_r=t\in D$ by $t'$ and call this process as \textit{word replacement}. And we expect to prevent the adversary from detecting the impact of this \textit{word replacement} on the CGS algorithm and thus stealing the sensitive information.

\subsection{Inherent Privacy of CGS Training Algorithm}
In the following, we will systemically study the inherent privacy of CGS-based LDA training algorithm.

\subsubsection{Basic Idea}\label{CGS-EM}
It has been shown that, Gibbs sampling process has some degree of inherent DP for free~\cite{foulds2016theory} since it works in the same way as an exponential mechanism for DP. As a variant, Collapsed Gibbs Sampling (CGS) naturally inherits this property.
Intuitively, CGS conducts random sampling iteratively to learn a topic-word distribution, which acts as a mechanism that probabilistically outputs a topic from the topic set. 
As described in Section~\ref{sec:DP and EM}, the exponential mechanism works by probabilistically sampling an answer from the answer set. This intuitive consistency motivates us to analyze CGS process in the view of the exponential mechanism.

Without loss of generality, we consider the sampling process of any word $w$ in the $i$-th iteration. Suppose its sampling distribution on $K$ topics is $\mathbf{P}=(p_1,p_2,...,p_K)^{\top}$, where $p_k$ represents the probability that topic $k$ is sampled. Define a function $u(w,k)=\log{p_k}$ with sensitivity $\Delta u_k \neq 0$, then $p_k$ could be written as
\begin{align}\label{intuition of inhenrent dp}
\nonumber	p_{k}&=\mathrm{exp}({\log p_{k}})=\mathrm{exp}({\frac{\Delta u_k}{\Delta u_k}\log p_{k}}) \\
&=\mathrm{exp}({\frac{\Delta u_k \cdot u(w,k)}{\Delta u_k}}).
\end{align}
According to Definition~\ref{defination: exponential mechanism}, Equation~(\ref{intuition of inhenrent dp}) can be viewed as the probability that an exponential mechanism $\mathcal{M}_E (w, u, \mathcal{K}):\mathcal{W}\rightarrow \mathcal{K}$ outputs the topic $k\in \mathcal{K}$ according to its utility $u(w,k)=\log{p_k}$. And $\mathcal{M}_E (w, u, \mathcal{K})$ provides $(\Delta u_k)$-DP. Notably, if taking the unnormalized probability $r_k$, e.g., $p_k\propto r_k$, to define the utility function as $u(w,k)=\log{r_k}$, then $\mathcal{M}_E (w, u, \mathcal{K})$ preserves $2(\Delta u_k)$-DP.

\begin{table}[!t]\centering\caption{Notations}
\begin{tabular}{ll}
	\hline
	\ $\alpha, \beta$      & hyper-parameters for LDA  \\
	\ $\mathcal{W}$   & word space\\
	\ $\mathcal{K}$   & topic space\\
	\ $\phi_k$      & topic-word distribution for topic $k$\\
	\ $\phi_k^t$    & probability that word $t$ is generated by topic $k$ \\
	\ $n_m^k$       & count of words with topic $k$ in document $d_m$ \\
	\ $n_k^t$       & count of word $t$ with topic $k$ in corpus $D$\\
    \ $n_t$        & total word count of $t\in D$ \\
	\ $p_k$        &probability that topic $k$ is sampled\\  
	\ $w_r$        &replaced word\\
	\ $\mathbf{w^{+}}$        &set of related words\\
	\ $\mathbf{w^{-}}$ & set of unrelated words\\
	\ $\epsilon^r_i$   & inherent privacy loss on $w_r$ in the $i$-th iteration\\
	\ $\epsilon^t_i$, $\epsilon^{t'}_i$ & inherent privacy loss on related words $t\,\text{or}\,t'$\\
	\hline
\end{tabular}
\label{tab:plain}
\end{table}
This observation enables us to calculate the inherent privacy of the CGS algorithm. To do so, we first investigate the inherent privacy loss in each iteration of the CGS training algorithm and then compose the privacy in total iterations by utilizing the composition theorem of DP.
\subsubsection{Inherent Privacy in Each Iteration}
In the $i$-th iteration, to bound the inherent privacy, it requires analyzing the impact of the \textit{word replacement} on each sampling. According to the relationship with the replaced word, we consider the following three cases for each word in $D$ respectively.
\begin{itemize}
\item \textit{Replaced word} is defined as the word $w_r=t \in D$ which is replaced by $t'$ in the neighboring dataset $D'$. For example, in Fig.~\ref{Relatedwords}, the word \textit{apple} marked in red is the \textit{Replaced word} in $D$.
\item \textit{Related words} refer to the words in $\textbf{w}^{+}$ which satisfies $\{w=t \,\text{or}\, w=t', \forall w\in \textbf{w}^{+}\}$. $\textbf{w}^{+}$ doesn't contain the \textit{Replaced word}. The words \textit{apple} and \textit{banana} marked in green shown in Fig.~\ref{Relatedwords} are \textit{related words} in $D$.
\item \textit{Unrelated words} are the words in $\textbf{w}^{-}$ which satisfies $\{w\neq t \,\text{and}\, w\neq t', \forall w\in \textbf{w}^{-}\}$. All the rest words in $D$ marked in black are unrelated words in the example shown in Fig.~\ref{Relatedwords}.
\end{itemize}

\begin{figure}[t]
\centering
\includegraphics[width=0.9\linewidth]{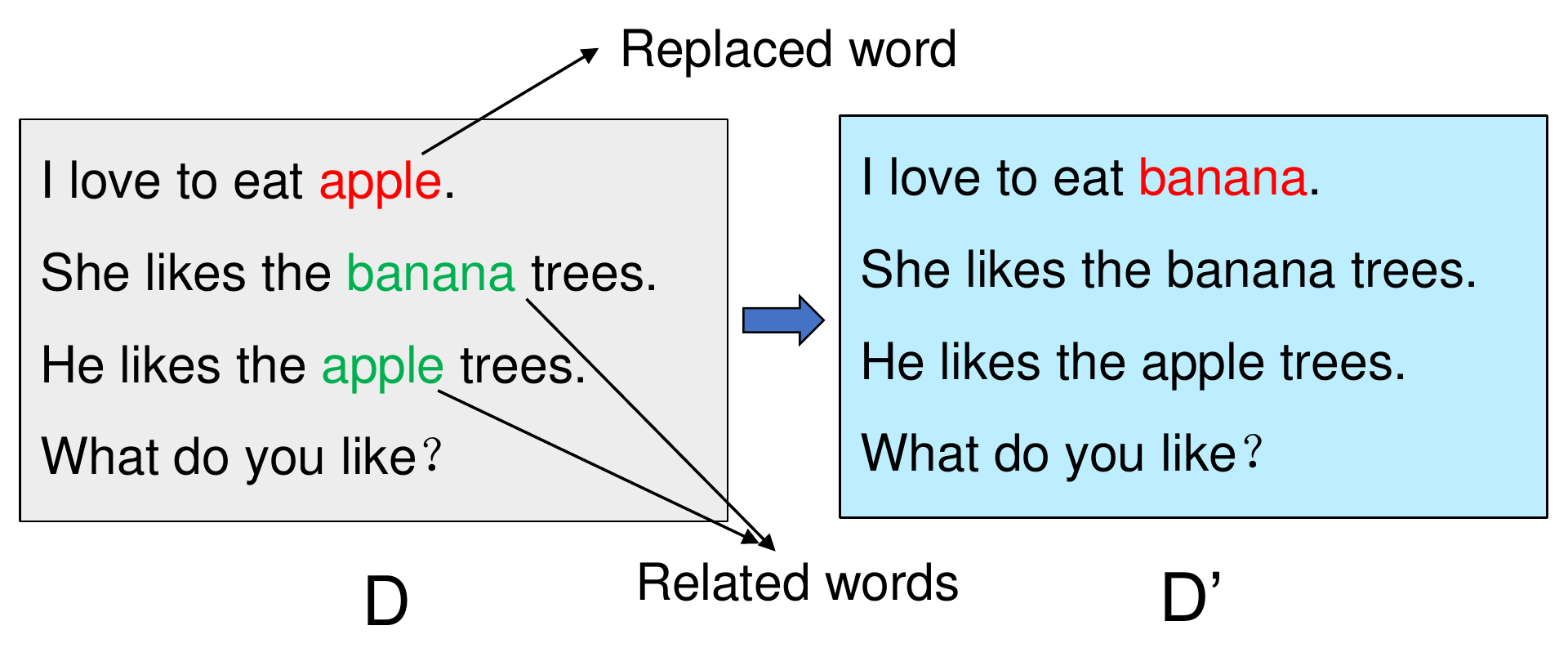}
\caption{An Example of Neighboring Datasets}
\label{Relatedwords}
\end{figure}

To analyze the privacy loss on the three types of words, the basic idea is to utilize the consistency between each topic sampling process and the exponential mechanism of DP as observed in Section~\ref{CGS-EM}. 

First, we analyze the privacy loss incurred by the topic sampling on the \textit{replaced word} $w_r$. 
In particular, in $D$, this sampling is performed on $w_r=t$ while in $D'$, this sampling is performed on $w_r=t'$. Here, we present a proposition to compute the incurred privacy loss in this sampling.
\begin{proposition}\label{proposition:privacy loss of part 1}
Suppose $D'$ is constructed from $D$ by replacing $w_r=t\in D$ by $t'$, then the privacy loss $\epsilon_i^r$ incurred by the topic sampling on the \textit{replaced word} $w_r$ in the $i$-th iteration can be bounded by
\begin{equation}\label{Equ:privacy loss on the replaced words}
\begin{aligned}
\epsilon_i^r \leq 2\max_k{\{|\log{\frac{n_{k}^{t'}+\beta}{n_k^t+\beta}}|\}},
\end{aligned}
\end{equation}
where $n_{k}^{t'}$ and $n_{k}^{t}$ represent the counts of topic $k$ assigned to $t'$ and $t$ in $D$, respectively.
\end{proposition}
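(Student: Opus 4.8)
The plan is to cast the single topic-sampling step performed on the \textit{replaced word} $w_r$ as an instance of the exponential mechanism, exactly as set up in Section~\ref{CGS-EM}, and then read off the privacy loss $\epsilon_i^r$ directly from the sensitivity of the associated utility function. Concretely, I would let $\mathbf{P}=(p_1,\dots,p_K)^{\top}$ be the sampling distribution of Equation~(\ref{Equ:sampling equation}) evaluated at the position of $w_r$, take the unnormalized sampling weight $r_k$ from that same equation, and define the utility $u(w_r,k)=\log r_k$. By the observation following Equation~(\ref{intuition of inhenrent dp}), this sampling step is an exponential mechanism $\mathcal{M}_E(w_r,u,\mathcal{K})$ that, when the \emph{unnormalized} weight is used, enjoys $2(\Delta u_k)$-DP. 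Hence the entire argument reduces to bounding $\Delta u_k$, the sensitivity of $u$ under the \textit{word replacement} that sends $w_r=t$ in $D$ to $w_r=t'$ in $D'$.

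The crux is a cancellation argument comparing the utility on $D$ (where $w_r=t$) with the utility on $D'$ (where $w_r=t'$). I would observe that, in the sampling weight for the token at position $r$, the document-topic factor $\frac{n_m^k+\alpha}{\sum_k(n_m^k+\alpha)}$ and the topic-word normalizer $\sum_t(n_k^t+\beta)$ depend only on the topic assignments of the remaining tokens and on the aggregate per-topic counts, none of which is affected by changing the identity of the single word at position $r$. These quantities are therefore identical across $D$ and $D'$ and drop out of the difference $u(w_r{=}t,k)-u(w_r{=}t',k)$, leaving only the topic-word numerator, so that $u(w_r{=}t,k)-u(w_r{=}t',k)=\log\frac{n_k^t+\beta}{n_k^{t'}+\beta}$.

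With this cancellation established, the sensitivity is $\Delta u_k=\max_k\bigl|\log\frac{n_k^{t'}+\beta}{n_k^t+\beta}\bigr|$, and substituting into the $2(\Delta u_k)$-DP guarantee of the unnormalized exponential-mechanism view yields the claimed bound $\epsilon_i^r\le 2\max_k\bigl|\log\frac{n_k^{t'}+\beta}{n_k^t+\beta}\bigr|$. I expect the main obstacle to be precisely the cancellation step: one must be careful that CGS forms its sampling weights from the \emph{leave-one-out} counts that exclude the current token at position $r$, and verify that, because $D$ and $D'$ agree everywhere except at position $r$, these leave-one-out counts—and hence the document-topic factor and the normalizer—genuinely coincide on the two datasets. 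Getting this bookkeeping right (including reconciling the leave-one-out counts with the full counts $n_k^t,n_k^{t'}$ reported in the statement) is what licenses collapsing the per-topic utility gap to the clean log-ratio of topic-word counts, after which the factor of two from the unnormalized formulation completes the proof.
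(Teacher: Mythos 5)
Your proposal is correct and takes essentially the same route as the paper's own proof: both cast the sampling step on $w_r$ as the exponential mechanism of Section~\ref{CGS-EM}, both rely on the fact that the document-topic factor and the topic-word normalizer coincide on $D$ and $D'$ (so that only the ratio $\frac{n_k^{t'}+\beta}{n_k^t+\beta}$ survives), and both obtain the factor of $2$ from the unnormalized-versus-normalized accounting. The only difference is presentational: you invoke the paper's remark that the unnormalized utility $u=\log r_k$ yields $2(\Delta u_k)$-DP as a black box, whereas the appendix re-derives that factor explicitly by splitting $|\log(p'_k/p_k)|$ into the weight-ratio term and the normalization-constant term.
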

\begin{proof}
Shown in Appendix~\ref{proof proposition part1}.
\end{proof}

Then, we consider the privacy loss incurred by the topic sampling on the \textit{related words} $\mathbf{w^{+}}$. For each word $w\in\mathbf{w^{+}}$, the \textit{word replacement} impacts the topic sampling process through the topic-word counts $n_k^t$ and $n_k^{t'}$, which would be used to compute the sampling distribution.

In particular, for some specific topic $k$, suppose the topic-word count for $t$ and $t'$ are $n_k^t$ and $n_k^{t'}$ respectively in $D$, the corresponding topic-word count would be $n_k^t-1$ and $n_k^{t'}+1$ in $D'$. Here, we also give a proposition to compute the inherent privacy loss.

\begin{proposition}\label{proposition:privacy loss of part 2}
The privacy loss in the CGS sampling process on each related word $w=t\in \mathbf{w}^{+}$ or $w=t'\in \mathbf{w}^{+}$ in the $i$-th iteration can be bounded by
\begin{equation*}
\begin{aligned}
\epsilon_i^{t} \leq 2\log{(1+\frac{1}{\beta})},\quad \epsilon_i^{t'} \leq 2\log{(1+\frac{1}{\beta})},
\end{aligned}
\end{equation*}
where $\epsilon_i^{t'}$ and $\epsilon_i^{t'}$ represent the privacy loss on $w=t'$ and $w=t$ in $\mathbf{w^{+}}$ respectively.
\end{proposition}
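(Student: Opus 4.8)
The plan is to reuse the exponential-mechanism interpretation established in Section~\ref{CGS-EM}: the topic sampling of a single word $w$ is exactly an exponential mechanism $\mathcal{M}_E(w,u,\mathcal{K})$ whose \emph{unnormalized} utility is $u(w,k)=\log r_k$, where $r_k$ is the right-hand side of Equation~(\ref{Equ:sampling equation}). By that observation the sampling preserves $2(\Delta u_k)$-DP, so it suffices to bound the utility sensitivity $\Delta u_k=\max_k|\log r_k-\log r_k'|$ under the \textit{word replacement}, where $r_k$ and $r_k'$ denote the sampling weights computed on $D$ and $D'$. The privacy loss on a related word is then $\epsilon_i^t\le 2\,\Delta u_k$, and the whole task reduces to showing $\Delta u_k\le\log(1+\tfrac1\beta)$.

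First I would couple the two chains, fixing all topic assignments identically on $D$ and $D'$ so that the sole discrepancy is the identity of $w_r$ ($t$ on $D$ versus $t'$ on $D'$) carrying a common topic $k_0$. This replacement shifts exactly two topic-word counts, $n_{k_0}^{t}\mapsto n_{k_0}^{t}-1$ and $n_{k_0}^{t'}\mapsto n_{k_0}^{t'}+1$, while leaving every document-topic count $n_m^k$ unchanged, since the word stays in the same document with the same topic.

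The key step, and the one I expect to be the main obstacle to state cleanly, is to observe that this replacement is \emph{normalizer-preserving} for the sampling of a related word. For every topic $k$ the denominator $\sum_{v}(n_k^{v}+\beta)$ counts all words assigned to topic $k$, and moving one occurrence from $t$ to $t'$ within topic $k_0$ leaves that total invariant; likewise the document factor $\frac{n_m^k+\alpha}{\sum_{k'}(n_m^{k'}+\alpha)}$ is untouched. Hence in the ratio $r_k/r_k'$ both the topic-word denominator and the document factor cancel, and only the numerator survives, differing from the $D'$ value solely at $k=k_0$. Consequently, for a related word $w=t$,
\begin{equation*}
\Delta u_k \le \max_k\Big|\log\frac{n_k^{t}+\beta}{n_k^{t}-1+\beta}\Big| = \log\frac{1+\beta}{\beta}=\log\Big(1+\frac{1}{\beta}\Big),
\end{equation*}
the maximum being attained at the smallest admissible count $n_{k_0}^{t}=1$ (the replaced occurrence guarantees $n_{k_0}^{t}\ge 1$). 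The symmetric computation for $w=t'$, using $n_k^{t'}\mapsto n_k^{t'}+1$ and the worst case $n_{k_0}^{t'}=0$, yields the same bound $\log(1+\tfrac1\beta)$.

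Finally I would apply the factor of two from the unnormalized exponential mechanism: because the global normalizer $\sum_k r_k$ itself does vary under the replacement, we invoke the $2(\Delta u_k)$-DP version rather than the tighter $(\Delta u_k)$-version, obtaining $\epsilon_i^{t}\le 2\log(1+\tfrac1\beta)$ and, identically, $\epsilon_i^{t'}\le 2\log(1+\tfrac1\beta)$, as claimed.
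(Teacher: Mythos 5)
Your proposal is correct and follows essentially the same route as the paper: the paper proves Proposition~\ref{proposition:privacy loss of part 2} by the ``same logic'' as Proposition~\ref{proposition:privacy loss of part 1}, i.e., bounding $\Delta\log p_k$ by $2\max_k\Delta\log r_k$ (the factor $2$ absorbing the varying global normalizer $\sum_k r_k$) and then evaluating $\Delta\log r_k$ for a related word, where the single shifted count $n_{k_0}^{t}\mapsto n_{k_0}^{t}-1$ (resp.\ $n_{k_0}^{t'}\mapsto n_{k_0}^{t'}+1$) with invariant per-topic denominator and document factor gives the worst-case ratio $\log\frac{1+\beta}{\beta}=\log(1+\frac{1}{\beta})$. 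Your additional observations (the coupling of topic assignments, the normalizer-preserving nature of the replacement, and the worst-case counts $n_{k_0}^{t}=1$ and $n_{k_0}^{t'}=0$) are exactly the details the paper leaves implicit.
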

\begin{proof}
The proof logic is similar to that of Proposition~\ref{proposition:privacy loss of part 1}.
\end{proof}

Finally, we consider the \textit{unrelated words} $\mathbf{w^{-}}$. Actually, for each word $w=\hat{t}\in \mathbf{w^{-}}$, the \textit{word replacement} could not impact the sampling process on $w=\hat{t}$ .

So far, we have analyzed the impacts of \textit{word replacement} on the sampling process on all three types of words involved in the $i$-th iteration. Based on the impact analysis, the following theorem can bound the total privacy loss in the $i$-th iteration.
\begin{theorem}\label{theorem:composition in single iteration}
Given the \textit{replaced word} $w_r=t\in D$, the total privacy loss $\epsilon_i$ incurred by CGS algorithm $\mathcal{S}_i$ in the $i$-th iteration could be bounded by
\begin{equation}\label{Equ:privacy loss in a single iteration}
\begin{aligned}
\epsilon_i &\leq \epsilon_i^r+(n_t-1)\epsilon_i^{t}+n_{t'}\epsilon_i^{t'}
\\&\leq 2\{\log{(\frac{\max_{k,t'}{\{n_{k}^{t'}\}}}{\beta}+1)}
\\& +(\max_{t'}\{n_{t'}\}+n_{t}-1)\cdot\log{(1+\frac{1}{\beta})}\},
\end{aligned}
\end{equation}
where $n_t$ and $n_{t'}$  denote the total word-counts of $t$ and $t'$ appearing in $D$ respectively. $n_{k}^{t'}$ denotes the total count of topic $k$ assigned to $t'$ in $D$ when performing sampling on the \textit{replaced word} $w_r$.
\end{theorem}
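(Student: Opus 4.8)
The plan is to derive the two inequalities in turn. First I would view the $i$-th iteration $\mathcal{S}_i$ as a sequence of elementary sampling mechanisms, one for each word of $D$, whose composite output (the full vector of sampled topics together with the running counts) is what the adversary observes, and then decompose the iteration's loss according to the word partition already fixed---the single \emph{replaced word} $w_r=t$, the \emph{related words} $\mathbf{w}^{+}$, and the \emph{unrelated words} $\mathbf{w}^{-}$. The unrelated words contribute nothing, since the \textit{word replacement} leaves their sampling distributions unchanged; among the related words there are exactly $n_t-1$ occurrences of $t$ (every occurrence of $t$ in $D$ except the one located at $w_r$) and $n_{t'}$ occurrences of $t'$ (none of which is the replaced word). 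Applying the sequential composition theorem of DP then adds the individual losses, yielding
\begin{equation*}
\epsilon_i \leq \epsilon_i^r + (n_t-1)\,\epsilon_i^{t} + n_{t'}\,\epsilon_i^{t'},
\end{equation*}
which is the first claimed inequality.

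Next I would substitute $\epsilon_i^{t},\epsilon_i^{t'}\leq 2\log(1+\tfrac{1}{\beta})$ from Proposition~\ref{proposition:privacy loss of part 2}, collapsing the two related-word terms into $2(n_t-1+n_{t'})\log(1+\tfrac{1}{\beta})$, and then overestimate $n_{t'}$ by $\max_{t'}\{n_{t'}\}$ so that the resulting bound holds uniformly over every admissible replacement word $t'$ (the worst-case neighbor required by the DP definition). For the replaced-word term I would insert $\epsilon_i^r\leq 2\max_k\{|\log\frac{n_k^{t'}+\beta}{n_k^t+\beta}|\}$ from Proposition~\ref{proposition:privacy loss of part 1} and bound the log-ratio at its extreme: as $n_k^t$ and $n_k^{t'}$ are nonnegative, $|\log\frac{n_k^{t'}+\beta}{n_k^t+\beta}|$ is maximized when the smaller count is $0$ (smoothed to $\beta$) and the larger is as big as possible, giving $\log(\frac{\max_{k,t'}\{n_k^{t'}\}}{\beta}+1)$ after maximizing over topics and over all words that may play the role of $t'$. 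Collecting the three pieces produces the second inequality.

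The step I expect to be the main obstacle is justifying the composition rigorously, because within a single iteration the counts $n_k^t,n_k^{t'}$ are updated after each sampling, so the elementary mechanisms are \emph{not} independent and the state seen when $w_r$ (or a given related word) is sampled depends on the entire preceding history. The key to resolving this is to verify that the per-sample bounds of the two propositions hold \emph{uniformly} over all reachable intermediate states, so that sequential composition may add them regardless of the sampling order; this is precisely why Proposition~\ref{proposition:privacy loss of part 2} is stated as the count-free quantity $2\log(1+\tfrac{1}{\beta})$ and why the replaced-word bound is expressed through a maximum over topics. A secondary point to confirm is that the single $\max_{k,t'}$ in the final expression dominates both regimes of the log-ratio---$n_k^{t'}\ge n_k^t$ and $n_k^{t'}<n_k^t$---which holds as long as the maximum ranges over all words, so that the count for $t$ is itself captured.
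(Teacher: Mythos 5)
Your proposal matches the paper's own proof in all essentials: the same three-way decomposition into the replaced word, the $n_t-1$ and $n_{t'}$ related occurrences, and the unchanged unrelated words, followed by (adaptive) sequential composition, substitution of Propositions~\ref{proposition:privacy loss of part 1} and~\ref{proposition:privacy loss of part 2}, and a final maximization over the arbitrary replacement word $t'$ to obtain the worst-case bound. Your explicit remark that the per-sample bounds must hold uniformly over all reachable intermediate count states (since counts update within the iteration) is exactly the point the paper handles implicitly by conditioning each factor on the preceding outputs, so the argument is sound.
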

\begin{proof}
Shown in Appendix~\ref{proofcomposition in single iteration}.
\end{proof}

Theorem~\ref{theorem:composition in single iteration} demonstrates that the inherent privacy loss in a single CGS iteration is the sum of the privacy losses on the \textit{replaced word} and all the \textit{related words}.
And as we can see, the inherent privacy loss can be adjusted by the hyper parameter $\beta$. The larger $\beta$, the less privacy loss, and vice versa. That is reasonable since $\beta$ represents the prior information for the topic-word count and a larger $\beta$ means that the sampling process is mainly determined by the prior information instead of the training data. Then, less data information can be inferred in the sampling process.

\subsubsection{Inherent Privacy in Total CGS Training}

So far, we have identified an upper bound for the privacy loss in a single iteration. Now, based on the composition theorem of DP, we can bound the total privacy loss in the whole training process with multiple iterations.
\begin{theorem}\label{theorem:composition in total CGS}
Suppose the privacy loss of each word $w$ in the $i$-th iteration is $\epsilon_i^w$, then the total privacy loss $\epsilon$ in the whole training process with $n$ iterations could be bounded by
\begin{equation}\label{Equ:privacy loss in total CGS}
\begin{aligned}
\epsilon \leq \max_{w\in D}{\{\sum_{i=1}^n{\epsilon_i^w}\}}.
\end{aligned}
\end{equation}

\end{theorem}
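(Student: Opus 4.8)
The plan is to lift the single-iteration bound of Theorem~\ref{theorem:composition in single iteration} to the whole $n$-iteration run by applying the composition theorem of DP to the full sampling transcript. First I would fix neighboring datasets $D,D'$ that differ only in the replaced word $w_r=t\mapsto t'$, and regard the observable output of CGS as the entire collection of sampled topics $\mathbf{z}=\{z_w^{(i)}\}$ produced for every word $w\in D$ over iterations $i=1,\dots,n$. Since CGS resamples one word's topic at a time conditioned on the current assignment state, the privacy-loss variable $\log\big(\Pr[\mathbf{z}\mid D]/\Pr[\mathbf{z}\mid D']\big)$ telescopes into a sum of per-step log-ratios indexed by the pairs $(i,w)$, which is the object I would bound term by term.

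The crux of the per-step analysis is to show that, conditioned on a common assignment history in the two runs, the conditional sampling distribution for a word $w$ coincides under $D$ and $D'$ whenever $w$ is \emph{unrelated}: replacing $t$ by $t'$ leaves the count $n_k^{\hat t}$, the topic totals $\sum_t n_k^t$, and every document-topic count $n_m^k$ unchanged, so the corresponding log-ratio is exactly zero. Only the trajectory of the \textit{replaced word} $w_r$ and of the \textit{related words} in $\mathbf{w}^{+}$ contribute, and each such contribution in iteration $i$ is already bounded by $\epsilon_i^{r}$, $\epsilon_i^{t}$, or $\epsilon_i^{t'}$ through Propositions~\ref{proposition:privacy loss of part 1} and~\ref{proposition:privacy loss of part 2}. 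This reproduces Theorem~\ref{theorem:composition in single iteration} as the per-iteration aggregate and isolates the words that actually carry privacy cost.

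With these per-step bounds in hand, I would compose in two directions. Composing sequentially across the $n$ iterations makes the cost carried by a fixed word position $w$ accumulate to $\sum_{i=1}^{n}\epsilon_i^{w}$. Composing across word positions --- treating each position as a distinct record, so that the single-record change at $w_r$ is governed by a parallel-composition bound --- replaces the sum over positions by their maximum, yielding $\epsilon\le\max_{w\in D}\sum_{i=1}^{n}\epsilon_i^{w}$.

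The main obstacle is precisely this second composition step: justifying a maximum over words rather than a naive sum over all affected positions. Because the replacement at $w_r$ perturbs the related words only indirectly, through the shared topic-word counts $n_k^{t}$ and $n_k^{t'}$, I would need to organize the accounting per record and exhibit a coupling of the two Markov chains that keeps their states synchronized on all unrelated words, so that (i) the telescoping decomposition is valid, (ii) the unrelated-word terms genuinely vanish, and (iii) the privacy cost can be attributed to individual word positions with the worst-case position dominating. Verifying the legitimacy of conditioning on a common history under this coupling, and confirming that the cross-effects on related words do not break the per-record attribution, is the delicate part of the argument; the remaining sequential composition over iterations is routine.
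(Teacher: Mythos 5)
Your first half tracks the paper's argument faithfully: fix the neighboring pair $(D,D')$ determined by the single replacement $w_r=t\mapsto t'$, factor $\Pr[\mathbf{o}\mid D']/\Pr[\mathbf{o}\mid D]$ over iterations by conditioning on the common transcript prefix, observe that the conditional sampling distributions at unrelated positions coincide, and bound the replaced-word and related-word terms via Propositions~\ref{proposition:privacy loss of part 1} and~\ref{proposition:privacy loss of part 2}. Note that no Markov-chain coupling needs to be exhibited here: the adaptive-composition argument compares the probabilities of the \emph{same} output sequence $\mathbf{o}$ under $D$ and $D'$, so the conditioning histories in the two runs agree by construction, which is exactly how the paper's proof proceeds.

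The genuine gap is your second composition step. You read $\max_{w\in D}$ as a parallel-composition bound across word positions within a single run, i.e., as a device for replacing the sum of per-position costs by their maximum. This is both invalid and not what the theorem asserts. Parallel composition requires the component mechanisms to act on disjoint portions of the data so that a one-record change touches only one component; in CGS every sampling step reads the shared counts $n_k^{t}$ and $n_k^{t'}$, so the single replacement at $w_r$ simultaneously perturbs the sampling distributions at all related positions. That is precisely why the per-iteration bound of Theorem~\ref{theorem:composition in single iteration} is a \emph{sum}, $\epsilon_i \le \epsilon_i^r+(n_t-1)\epsilon_i^{t}+n_{t'}\epsilon_i^{t'}$, and no accounting scheme can legitimately turn that sum into a max --- the ``delicate part'' you defer cannot be carried out. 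In the paper, $\epsilon_i^w$ denotes the \emph{total} per-iteration loss (the quantity bounded by Theorem~\ref{theorem:composition in single iteration}) when $w$ is the word being replaced; the $\max_{w\in D}$ then arises for a much simpler reason: differential privacy must hold for every neighboring pair, each choice of replaced word defines a different pair, and one worst-cases over that choice (the paper's ``due to the arbitrariness of $w_r$''). For each fixed pair the total loss is just $\sum_{i=1}^{n}\epsilon_i^{w_r}$ by sequential adaptive composition; the sum over affected positions is already contained in each $\epsilon_i^{w_r}$, and nothing further needs to be, or can be, maximized over positions.
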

\begin{proof}
Shown in Appendix~\ref{proof composition in total CGS}.
\end{proof}

\subsubsection{Limitations of Inherent Privacy}\label{discussions on inherent privacy}
We have systemically analyzed the inherent privacy of the CGS training algorithm. Here, we present two limitations of the inherent privacy of CGS regarding the privacy protection.
\begin{itemize}
\item \textbf{Rapid Accumulation of Privacy Loss}. As shown in the second term of Equation~(\ref{Equ:privacy loss in a single iteration}), the privacy loss accumulates almost linearly with respect to the word count $n_t$ and $n_{t'}$. This is mainly because the \textit{word replacement} impacts not only the replaced word, but also all the related words in the sampling process. 
\item \textbf{Lack of Protection on Word-count Information}.
Obviously, the inherent privacy cannot protect the word-count information, e.g., $n_k^t$, which also reflects the raw data information. Since there is no noise introduced, the \textit{word replacement} would incur an unmaskable change on the word-counts, e.g., $n_k^t-1$ and $n_k^{t'}+1$, thus further causing the privacy leakage if we assume an adversary who can observe that.
\end{itemize}

\subsection{Hybrid Privacy-preserving LDA Training}

In Section~\ref{Limitations of the Existing method}, we summarized the limitations of the existing works in protecting the sampled topics, while in Section~\ref{discussions on inherent privacy}, we identified that the inherent privacy is lack of protection on the word-count information. To address these issues, we propose a hybrid privacy-preserving algorithm named \textsf{HDP-LDA} that combines the inherent privacy of CGS approach and external privacy based on noise injection. The basic idea is to introduce a proper noise in each iteration of CGS to protect the word-count information while mitigating the rapid privacy loss accumulation of inherent privacy. The detailed algorithm process is shown in Algorithm~\ref{alg: Hybrid privacy preserving Algorithm}. Two main operations in the algorithm are explained as follows.

\begin{itemize}
\item \textbf{Adding noise:} As analyzed, both limitations of inherent privacy are caused by the difference on the topic-word counts $n_k^t$ between $D$ and $D'$. Therefore, we introduce some noise to obfuscate the difference on $n_k^t$ in each iteration (Line 9 in Algorithm~\ref{alg: Hybrid privacy preserving Algorithm}). Theorem~\ref{chain effect mitigation theorem} proves that Algorithm~\ref{alg: Hybrid privacy preserving Algorithm} can mitigate the rapid privacy loss accumulation of inherent privacy.
\item \textbf{Clipping:} Quantifying the inherent privacy requires the knowledge of the upper bound of topic-word counts $\max_{k,t}{\{n_k^t\}}$, as shown in Equation~(\ref{Equ:privacy loss in a single iteration}). A natural bound for $n_k^t$ is the total count $n_t$ of word $t$, which, however, might be too loose if there are too many $K$ topics in total. Therefore, we resort to a clipping method to limit the inherent privacy in each iteration. Notably, this clipping is only performed on a copy of $n_k^t$ for the computation of sampling distribution (Line $12$), but does not impact the subsequent updating of $n_k^t$ in CGS (Lines 15 $\sim$ 16).
\end{itemize}

\begin{theorem}\label{chain effect mitigation theorem}
Algorithm~\ref{alg: Hybrid privacy preserving Algorithm} satisfies $(\epsilon_L+\epsilon_I)$-DP in each iteration, in which
\begin{equation}\label{Equ:chain effect mitigation}
\begin{aligned}
\epsilon_I=2\log{(\frac{C}{\beta}+1)}
\end{aligned}
\end{equation}
denotes the inherent privacy loss, $\epsilon_L$ denotes the privacy loss incurred by the Laplace noise, and $C$ denotes clipping bound for $n_{k}^{t}$.
\end{theorem}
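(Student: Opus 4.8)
The plan is to view one iteration of Algorithm~\ref{alg: Hybrid privacy preserving Algorithm} as the composition of two mechanisms and bound them separately. Within an iteration the adversary observes the Laplace-perturbed topic-word counts $\tilde N_k^t$ released in Line~9 together with the topic assignments $\mathbf{K}$ produced by the per-word sampling that uses the clipped noisy counts (Line~12). I would first note that the document-topic counts $N_m^k$ need no separate treatment: they are a deterministic function of $\mathbf{K}$, so observing them is mere post-processing of $\mathbf{K}$. Hence it suffices to control the pair $(\tilde N_k^t,\mathbf{K})$, and by the chain rule I factor $\Pr[\tilde N,\mathbf{K}\mid D]=\Pr[\tilde N\mid D]\cdot\Pr[\mathbf{K}\mid\tilde N,D]$ and bound each factor across neighboring $D,D'$.

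For the first factor, the \textit{word replacement} $w_r=t\to t'$ alters only a bounded number of entries of the count matrix by one (concretely $n_k^t-1$ and $n_k^{t'}+1$), so $N_k^t$ has bounded $\ell_1$-sensitivity; the Laplace noise added in Line~9 therefore makes the release $\epsilon_L$-DP, i.e. $\Pr[\tilde N\mid D]\le e^{\epsilon_L}\Pr[\tilde N\mid D']$. The heart of the argument is the second factor, where I want to show $\Pr[\mathbf{K}\mid\tilde N,D]\le e^{\epsilon_I}\Pr[\mathbf{K}\mid\tilde N,D']$. Fixing a target assignment $\mathbf{K}$ and processing the words in a fixed order, each conditional sampling probability is given by Equation~(\ref{Equ:sampling equation}) evaluated at the clipped noisy counts (the conditioning variable $\tilde N$) and at the document-topic counts $n_m^k$. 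The key observation that breaks the ``chain'' responsible for the $(n_t-1)\epsilon_i^t+n_{t'}\epsilon_i^{t'}$ term of Theorem~\ref{theorem:composition in single iteration} is that $n_m^k$ depends only on the topic assignments, which are pinned down by the fixed target $\mathbf{K}$, and is therefore identical under $D$ and $D'$; likewise the topic-word counts entering the sampling are the conditioned-upon $\tilde N$, again identical under $D$ and $D'$. Consequently every \textit{unrelated word} and every \textit{related word} in $\mathbf{w}^{+}$ contributes the same sampling factor under both datasets, and these factors cancel in the ratio.

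The only surviving discrepancy is the factor for the \textit{replaced word} itself, whose sampling distribution selects the topic-word entry $\tilde n_k^{t}$ in $D$ but $\tilde n_k^{t'}$ in $D'$. Applying Proposition~\ref{proposition:privacy loss of part 1} bounds this factor's log-ratio by $\epsilon_i^r\le 2\max_k|\log\frac{n_k^{t'}+\beta}{n_k^t+\beta}|$ (cf. Equation~(\ref{Equ:privacy loss on the replaced words})); since the clipping of Line~12 forces each count into $[0,C]$, the extreme case is one entry equal to $C$ and the other to $0$, giving $\epsilon_i^r\le 2\log(\frac{C}{\beta}+1)=\epsilon_I$. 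Composing the two stages then yields $\Pr[\tilde N,\mathbf{K}\mid D]\le e^{\epsilon_L+\epsilon_I}\Pr[\tilde N,\mathbf{K}\mid D']$, which is the claimed per-iteration $(\epsilon_L+\epsilon_I)$-DP. I expect the main obstacle to be making the conditioning step fully rigorous: one must verify that fixing $\tilde N$ and the target $\mathbf{K}$ really does equalize all related-word factors, being careful that the within-document coupling through $n_m^k$ and the shared normalizers $\sum_t(\tilde n_k^t+\beta)$ genuinely coincide across $D$ and $D'$, so that $w_r$ remains the sole source of divergence.
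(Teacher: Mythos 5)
Your proposal follows essentially the same route as the paper's own proof: the identical two-stage decomposition into the Laplace release of the counts (bounded by $\epsilon_L$, since the word replacement shifts exactly two entries by one and each is covered by $Lap(2/\epsilon_L)$ noise) followed by the sampling stage conditioned on the sanitized counts, the identical observation that all related- and unrelated-word factors cancel exactly because the sampling distribution is computed from the privatized counts and the document-topic counts fixed by the target assignment, and the identical clipping argument bounding the replaced word's contribution by $2\log(\frac{C}{\beta}+1)$ via Proposition~\ref{proposition:privacy loss of part 1}, composed by summing over the intermediate noisy counts. The proposal is correct and needs no further comparison.
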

\begin{proof}
Shown in Appendix~\ref{proof chain effect mitigation theorem}.
\end{proof}

As shown in Theorem~\ref{chain effect mitigation theorem}, the privacy loss in \textsf{HDP-LDA} consists of two parts: privacy loss $\epsilon_L$ incurred by the Laplace noise and the inherent privacy loss $\epsilon_I$ of CGS algorithm. Equation~(\ref{Equ:chain effect mitigation}) presents the inherent privacy loss. Comparing with Equation~(\ref{Equ:privacy loss in a single iteration}), we can see that the rapid accumulation of inherent privacy loss has been mitigated since the second term of Equation~(\ref{Equ:privacy loss in a single iteration}) has been removed.

\begin{algorithm}[t]
\SetAlgoLined
\caption{HDP-LDA}
\label{alg: Hybrid privacy preserving Algorithm}
\LinesNumbered
\SetKwComment{Comment}{start}{end}
\KwIn{Document corpus $D$, Prior parameters $\alpha$, $\beta$, Topic number $K$, Clipping bound $C$}
\KwOut{Trained topic-word distribution $\Phi$, Privacy loss $\epsilon=T\cdot(\epsilon_L+\epsilon_{I})$}
\tcp{\textbf{Initialization}}
\For {$d_m\in D$}{
	\For {$w=t\in d$}{
		Sample topic: $k\sim Mult(\frac{1}{K}\cdot\textbf{I}_K)$\;
		Initialize word counts $n_k^t$ and $n_m^k$\;
	}
}
\tcp{\textbf{Collapsed Gibbs Sampling}}
Set $iter=0$\;
\While {$iter<T$} {
	Add noise to each $n_k^t$ independently: $n_k^t\leftarrow n_k^t+\eta,\quad \eta\sim Lap(2/\epsilon_L)$\;
	\For {$d\in D$}{
		\For {$w=t\in d$}{
			Clip: $(n_{k}^{t})^{temp}\leftarrow \min\{n_{k}^{t}, C\}$\;
			Compute sampling distribution $\mathbf{p}$:
			$p_k \propto \frac{(n_{k}^{t})^{temp}+\beta}{\sum_{t=1}^{V}(n_k^t+\beta)}\cdot\frac{n_{m }^{k}+\alpha}{\sum_{k=1}^{K}(n_{m}^{k}+\alpha)}$\;
			Compute inherent privacy loss: $\epsilon_I \leftarrow 2\log{(\frac{C}{\beta}+1)}$\;
			Sample topic and update word count $n_k^t$\;
		}
	}
	$iter \leftarrow iter+1$\;
}
  Compute the trained model $\Phi$\;
\end{algorithm}

\section{\textsf{LP-LDA}: LDA Model Training with LDP}\label{sec:LP-LDA}

We have developed a comprehensive privacy protection approach for protecting the CGS training process on a centralized curated dataset, where the central server is assumed trustworthy. Nevertheless, in many distributed applications, the data curator may not be reliable and the individual data owners may be reluctant to share their sensitive data. In this case, we propose an LDP solution of \textsf{LP-LDA} for LDA that can train on crowdsourced data with LDP. \textsf{LP-LDA} is constituted by two parts: local perturbation at the user side and training on reconstructed dataset at the server side.

\subsection{Local Perturbation}\label{sec:localper}

The local perturbation at the user side includes the following steps:
\begin{itemize}
\item \textit{Step 1}. Each document $m$ is encoded as a binary vector $\mathbf{V}_m$, in which each bit $\mathbf{V}_m [j]$ represents the presence of the $j$-th word in the word bag of the corpus.
\item \textit{Step 2}. Each bit $\mathbf{V}_m [j]$ of the binary vector $\mathbf{V}_m$ is then randomly flipped according to the following randomized response rule:
\begin{align*}
\hat{\mathbf{V}}_m [j]=
\begin{cases}
\mathbf{V}_m [j], ~~& \text{with probability of}~ 1-f\\
1, ~~& \text{with probability of}~ f/2\\
0, ~~& \text{with probability of}~ f/2\\
\end{cases}
\end{align*}
where $f \in [0,1]$ is a parameter that specifies the randomness of flipping and adjusts the local privacy level.
\item \textit{Step 3}. Then the noisy binary vector $\hat{\mathbf{V}}_m [j]$ is sent to the central server by each user. Obviously, $\hat{\mathbf{V}}_m [j]$ is locally sanitized without concerning user's privacy.
\end{itemize}

\subsection{Training on the Reconstructed Dataset}\label{sec:reconstruct}

After receiving the flipped binary vectors from a large number of data contributors, the central server can aggregate the vectors, reconstruct the dataset, and then perform training on the reconstructed dataset. The rationale behind this is that the training result of topic-word distribution is insensitive to the document partitions and only depends on the total word counts in the corpus.
\begin{itemize}
\item \textit{Step 1}. For each bit in the noisy binary vectors, the server counts the number of $1'$s as $n_t=\sum_{m=1}^M \hat{\mathbf{V}}_m [t]$.
\item \textit{Step 2}. The server then estimates the true count $N_t$ of each bit in the original binary vectors $\mathbf{V}_m$  as $\hat{N}_t=(2n_t-f M)/2(1-f)$.
\item \textit{Step 3}. For each bit, the server first computes the difference $\delta_t=\hat{N}_t-n_t$.
\item \textit{Step 4}. For each bit $t$, if $\delta_t>0$, the server randomly samples $\delta_t$ binary vectors with the $t$-th bit as $0$ and sets the $t$-th bit as $1$; if $\delta_t<0$, then the server randomly samples $|\delta_t|$ binary vectors with the $t$-th bit as $1$ and sets the $t$-th bit as $0$; otherwise, keeps the noisy bit vectors as received.
\item \textit{Step 5}. Based on the noisy bit vectors, the server reconstructs a dataset and performs the CGS process on it.
\end{itemize}
Algorithm \ref{alg:LP-LDA} presents the detailed procedures of LP-LDA on both the user side and the server side.

\subsection{Privacy Analysis of \textsf{LP-LDA}}

\begin{theorem}
\textsf{LP-LDA} satisfies $\log\frac{1-f/2}{f/2}$-LDP for each word, and $V\cdot \log\frac{1-f/2}{f/2}$-LDP for each document. 
\end{theorem}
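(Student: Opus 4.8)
The plan is to reduce the entire \textsf{LP-LDA} pipeline to the single primitive that actually touches raw data, namely the bit-wise randomized response in \textit{Step 2} of the local perturbation. Everything the server does afterwards---counting the $1$'s, estimating $\hat{N}_t$, the randomized reconstruction in \textit{Step 4}, and the CGS training itself---operates only on the already-sanitized vectors $\hat{\mathbf{V}}_m$ together with public quantities such as $M$ and $f$. By the post-processing invariance of (local) differential privacy, none of these downstream stages can increase the privacy loss, so it suffices to bound the output likelihood ratios of the randomized response. The deterministic binary encoding in \textit{Step 1} is likewise a fixed pre-processing of the document and does not weaken the guarantee.

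First I would establish the per-word (single-bit) bound. Writing out the flipping rule, for a true bit value $v\in\{0,1\}$ the conditional output probabilities take only two values,
\begin{equation*}
\mathrm{Pr}[\hat{\mathbf{V}}_m[j]=v\mid \mathbf{V}_m[j]=v]=1-\tfrac{f}{2},\qquad \mathrm{Pr}[\hat{\mathbf{V}}_m[j]=1-v\mid \mathbf{V}_m[j]=v]=\tfrac{f}{2}.
\end{equation*}
Hence for any two input bits and any observed output, the ratio of output probabilities is at most $(1-f/2)/(f/2)$, which is exactly the LDP condition with $\epsilon_{\text{word}}=\log\frac{1-f/2}{f/2}$. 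This settles the first claim.

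Next I would lift this to a full document, which is encoded as a length-$V$ binary vector whose coordinates are flipped \emph{independently}. For two arbitrary documents, their encodings may differ in as many as $V$ coordinates. Using independence, the joint likelihood ratio factorizes over the $V$ bits; each coordinate contributes a factor bounded by $(1-f/2)/(f/2)$, while coordinates on which the two encodings agree contribute a factor of exactly $1$. Multiplying gives a worst-case ratio of $\bigl((1-f/2)/(f/2)\bigr)^{V}$, i.e.\ $\epsilon_{\text{doc}}=V\log\frac{1-f/2}{f/2}$; equivalently, this is the sequential composition of $V$ independent $\epsilon_{\text{word}}$-LDP releases.

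The calculation is short, so the only points requiring care---rather than a genuine obstacle---are (i) confirming that the worst case is attained by an input pair differing in all $V$ coordinates together with an arbitrary output, so the exponent is tight rather than a mere over-estimate, and (ii) making the post-processing argument airtight: the reconstruction in \textit{Step 4} draws fresh randomness, but since it reads only the sanitized vectors and public constants, it remains a randomized post-processing map and cannot erode the per-user LDP guarantee. Once these two observations are in place, both bounds follow immediately.
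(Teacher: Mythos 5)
Your proposal is correct and follows essentially the same route as the paper: the per-word bound comes from the likelihood ratio $(1-f/2)/(f/2)$ of the bit-wise randomized response, and the per-document bound follows from independence of the $V$ bit perturbations (your direct factorization of the joint likelihood ratio is just the sequential composition theorem the paper invokes). Your explicit post-processing argument for the server-side reconstruction and training matches the paper's closing remark that these stages are post-processes on the noisy vectors and hence leave the LDP guarantee unchanged.
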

\begin{proof}
Suppose a word $t$ appears in a noisy bit vector, then the probability of it being kept from the original bit vector is $\mathrm{Pr}(\hat{\mathbf{V}}_m [t]=1|\mathbf{V}_m [t]=1)=1-f/2$ and the probability of it being flipped from the original bit vector is $\mathrm{Pr}(\hat{\mathbf{V}}_m [t]=1|\mathbf{V}_m [t]=0)=f/2$. Then, according to the definition, it guarantees the local differential privacy of
\begin{equation*}
\setlength{\abovedisplayskip}{3pt}
\setlength{\belowdisplayskip}{3pt}
\begin{aligned}
\epsilon=\left|\log\frac{\mathrm{Pr}(\hat{\mathbf{V}}_m [t]=1|\mathbf{V}_m [t]=1)}{\mathrm{Pr}(\hat{\mathbf{V}}_m [t]=1|\mathbf{V}_m [t]=0)}\right|=\log\frac{1-f/2}{f/2}.
\end{aligned}
\end{equation*}
The analysis holds for any bit $t$ that $\hat{\mathbf{V}}_m [t]=0$.

Each bit of the document $\mathbf{V}_m$ is perturbed independently. Then according to the sequential composition theorem of DP, \textsf{LP-LDA} guarantees $V\cdot\epsilon$-local DP for the entire document.
\end{proof}
Since the reconstruction and training process are essentially post-processes on the noisy bit vectors, the LDP guarantee remains unchanged for all the documents.

\subsection{Utility Analysis of \textsf{LP-LDA}}
\setlength{\abovedisplayskip}{6pt}
\begin{theorem}
Let $N_t$ and $n_t$ denote the counts of word $t$ in the original and perturbed datasets, respectively, then
\begin{equation}\label{moment estimator}
\setlength{\abovedisplayskip}{3pt}
\setlength{\belowdisplayskip}{3pt}
\hat{N_{t}}=\frac{2n_t-fM}{2(1-f)}
\end{equation}
is an unbiased estimator of $N_t$ with the variance of
\begin{equation}\label{variance of moment estimator}
\setlength{\abovedisplayskip}{3pt}
\setlength{\belowdisplayskip}{3pt}
\mathrm{Var}(\hat{N_{t}})=\frac{(2-f)fM}{4(1-f)^2}+\frac{(M-N_t)N_t}{M}.
\end{equation}
\end{theorem}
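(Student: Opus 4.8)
The plan is to prove the two claims separately, first unbiasedness and then the variance, relying on the randomized-response flip probabilities already established in the privacy analysis: a document that truly contains word $t$ reports a $1$ with probability $1-f/2$, while a document that does not contain it reports a $1$ with probability $f/2$.

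For unbiasedness, I would first compute $\mathbb{E}[n_t]$. Writing $n_t=\sum_{m=1}^{M}\hat{\mathbf{V}}_m[t]$ and splitting the $M$ documents into the $N_t$ that contain $t$ and the $M-N_t$ that do not, linearity of expectation gives $\mathbb{E}[n_t]=N_t(1-f/2)+(M-N_t)(f/2)=N_t(1-f)+fM/2$. Substituting this into the definition $\hat{N_t}=\frac{2n_t-fM}{2(1-f)}$ and simplifying, the $fM/2$ term cancels and $\mathbb{E}[\hat{N_t}]=N_t$, which establishes the first claim.

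For the variance, the key observation is that $\hat{N_t}$ is an affine function of $n_t$, so $\mathrm{Var}(\hat{N_t})=\mathrm{Var}(n_t)/(1-f)^2$, and it remains to compute $\mathrm{Var}(n_t)$. I would evaluate this with the law of total variance, conditioning on the true bits $\mathbf{V}_m[t]$. Conditioned on the true bits, each $\hat{\mathbf{V}}_m[t]$ is an independent Bernoulli variable whose success probability is either $1-f/2$ or $f/2$; a short calculation shows that in both cases its conditional variance equals $f(2-f)/4$, so the ``within'' term contributes $M\,f(2-f)/4$, which after dividing by $(1-f)^2$ is exactly the first term of the claimed variance. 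The ``between'' term is the variance of the conditional mean $\mathbb{E}[\hat{\mathbf{V}}_m[t]\mid \mathbf{V}_m[t]]=f/2+(1-f)\mathbf{V}_m[t]$; treating each document as independently containing $t$ with probability $N_t/M$, this contributes $(1-f)^2\,N_t(M-N_t)/M^2$ per document, i.e. $(1-f)^2\,N_t(M-N_t)/M$ in total, which after dividing by $(1-f)^2$ yields the second term $N_t(M-N_t)/M$. Summing the two pieces gives the stated variance.

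The main obstacle is conceptual rather than computational: the perturbation randomness alone accounts only for the first term, so one must correctly identify and model the second source of variability, namely the sampling of which documents actually contain the word, so that its contribution matches $N_t(M-N_t)/M$. Recognizing that this is precisely the binomial variance $Mp(1-p)$ with $p=N_t/M$, and arranging the total-variance decomposition so that the conditional-mean variance carries the factor $(1-f)^2$ that cancels against the $1/(1-f)^2$ prefactor, is the crux of the argument; the remaining algebra (verifying the cancellations in both the mean and the variance computations) is routine.
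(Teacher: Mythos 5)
Your proposal is correct, and the unbiasedness part coincides with the paper's (the paper phrases it as a moment estimator: the first theoretical moment $\mathbb{E}(n_t)=N_t(1-f/2)+(M-N_t)(f/2)$ matched to the sample moment $n_t$, which is exactly your expectation computation). The variance part, however, takes a genuinely different and in fact more complete route. The paper writes $n_t=n_1+n_2$ with $n_1\sim B(N_t,1-f/2)$ (retained ones) and $n_2\sim B(M-N_t,f/2)$ (flipped-in ones), and then asserts
\[
\mathrm{Var}(\hat{N_t})=\frac{\mathrm{Var}(n_1)+\mathrm{Var}(n_2)+2\,\mathrm{Cov}(n_1,n_2)}{(1-f)^2}
=\frac{(2-f)fM}{4(1-f)^2}+\frac{(M-N_t)N_t}{M},
\]
where $\mathrm{Var}(n_1)+\mathrm{Var}(n_2)=Mf(2-f)/4$ accounts for the first term, so the entire second term is silently attributed to the covariance, which is never computed; note that under the paper's own stated model (fixed $N_t$, independent flips on disjoint sets of documents) that covariance is zero, so the paper's derivation, read literally, does not produce the second term at all. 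Your law-of-total-variance argument, conditioning on the true bits and modeling each document as containing $t$ independently with probability $N_t/M$, is precisely the missing ingredient: the ``within'' term $Mf(2-f)/4$ recovers the perturbation variance, and the ``between'' term $(1-f)^2 N_t(M-N_t)/M$ supplies the second term after the $(1-f)^{-2}$ cancellation. In short, your proof makes explicit the sampling assumption on the document composition that the paper's formula implicitly requires, and derives both terms rather than asserting one of them; the paper's version is shorter but hides this step. The minor blemish in your write-up is the same one you flag yourself: the mean is computed under the fixed-bits view while the variance uses the random-bits view, though since both views give the same $\mathbb{E}[n_t]$, unbiasedness is unaffected.
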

\begin{proof}
Let $n_1$ denote the count of word $t$ retained from the real datasets and $n_2$ denote the noisy part, then $n_1$ and $n_2$ follow two Binomial distributions, i.e.,
$n_{1}\sim B(N_{t}, 1-f/2)$, $n_{2}\sim B(M-N_{t}, f/2)$. Let $ X=n_{1}+n_{2}$, then its first theoretical moment $\mathbb{E}(X)=N_{t}(1-f/2)+(M-N_{t})\cdot(f/2)$ and its first sample moment $\bar{X}= n_{t}$. Therefore,
\begin{equation*}
\hat{N_{t}}=\frac{2n_t-fM}{2(1-f)}
\end{equation*}
is the moment estimator as well as an unbiased estimator. Its variance is then
\begin{equation*}
\begin{aligned}
\mathrm{Var}(\hat{N_{t}})&=\frac{\mathrm{Var}(n_t)}{(1-f)^2}=\frac{ \mathrm{Var}(n_{1}+n_{2})}{(1-f)^2}\\&=\frac{ \mathrm{Var}(n_{1})+\mathrm{Var}(n_{2})+2\mathrm{Cov}(n_1,n_2)}{(1-f)^2}\\&=\frac{(2-f)fM}{4(1-f)^2}+\frac{(M-N_t)N_t}{M}.
\end{aligned}
\end{equation*}
\end{proof}

\section{\textsf{OLP-LDA}: Online LDA Model Training with LDP}\label{sec:OLP-LDA}

\subsection{Motivations}
In \textsf{LP-LDA}, we consider a static scenario that one-time LDA is trained on the locally sanitized dataset. Here, we consider the following two practical issues that may encounter in LDA model training.
\begin{itemize}
\item \textbf{Online Training}. Many practical applications require that ML models can be continuously trained on streaming datasets, which are contributed by users in mini-batches and accumulated over time. However, it is usually infeasible to store all streaming batches and perform batch model training such as LP-LDA.
\item \textbf{Prior Knowledge}. In many real-world scenarios, the training server may not begin model training from zero but possess some prior datasets for building an initial model. These prior datasets may be from the publicly available datasets or the purchased high-quality crowdsourced datasets, which are often non-private.
\end{itemize}

Considering the above issues, we further propose \textsf{OLP-LDA}, a privacy-preserving online LDA training algorithm. \textsf{OLP-LDA} aims to not only realize efficient online training on mini-batches with LDP, but also greatly improve the model utility by extracting knowledge from the prior dataset.

\textsf{OLP-LDA} consists of two components: the baseline online LDA training framework \textsf{O-LDA}, and the Bayesian denoising technique for  the continuous reconstruction of noisy batches.

\subsection{\textsf{O-LDA}: Framework of Online LDA Model Training}
\subsubsection{Basic Idea of Online LDA Training}
Given the prior information $P(\Theta|D_0)$ of LDA model parameter $\Theta$ from the prior dataset $D_0$, the Online LDA training aims to update the LDA model $P(\Theta|D)$ with the evolving mini-batch sequence $D_{1:L}=\{D_{1},...,D_{l},...,D_{L}\}$ where $D_{l}$ represents the $l-$th mini-batch. Considering the correlations between mini-batches, the online training process could be regarded as a Bayesian learning process, in which $P(\Theta|D)$ is updated based on a recurrence relationship:
\begin{equation*}
P(\Theta|D_{0:l})\propto P(\theta|D_{0:l-1})P(D_{l}|\Theta),
\end{equation*}
where the posterior $P(\Theta|D_{0:l-1})$ learned from $D_{0:l-1}$ would be used as the prior when learning from $D_{l}$.

\subsubsection{\textsf{O-LDA} Generative Model}	
To capture the correlations between consecutive mini-batches $D_{l-1}$ and $D_{l}$, we introduce a correlation factor $\lambda$. And the prior parameters $\beta^l$ for $D_{l}$ is represented as the combination of $\beta^{l-1}$ for $D_{l-1}$ and the topic-word matrix $N_{k,t}^{l-1}$ learned from $D_{l-1}$. That is
\begin{equation}\label{equation: prior update}
\beta^{l}=\beta^{l-1}+\lambda N_{k,t}^{l-1}.
\end{equation}
In Equation~(\ref{equation: prior update}), a larger $\lambda$ means a stronger dependency between $D_{l-1}$ and $D_{l}$. When $\lambda=0$, the training result of $D_{l-1}$ will not influence the training process of $D_{l}$. Apparently, $\beta^{l}$ could also be written as
\begin{equation}
\beta^{l}=\beta \mathbf{1}_{K,V}+\sum_{i=1}^{l-1}\lambda N_{k,t}^{i},
\end{equation}
where $\beta$ denotes a hyperparameter fixed for the whole corpus.

As a result, we redefine the corpus generative process of the \textsf{O-LDA} model as follows:

\begin{enumerate}
\item Generate the first mini-batch $D_{1}$ according to the standard LDA with given hyperparameters $\alpha$ and $\beta$, and draw a correlation factor $\lambda\sim U(0,1)$ for the corpus.
\item For each mini-batch $D_{l}, l\geq2$, generate the topic-word distribution $\Phi^l\sim Dir(\beta^l)$ for $D_{l}$.
\item For each document $m$ in mini-batch $D_{l}$, generate all the words according to the standard LDA with hyperparameter $\alpha$ and topic-word distribution $\Phi^l$.
\end{enumerate}

Then, the sampling distribution for word $w=t$ in mini-batch $D_{l}$ shown in Equation~(\ref{Equ:sampling equation}) should be replaced as
\begin{equation}
\begin{aligned}
p(z_w=k)\propto \frac{n_{k,t}^{l}+\beta^{l}_{k,t}}{\sum_{t=1}^{V}(n_{k,t}^{l}+\beta^{l}_{k,t})}\cdot\frac{n_{m}^{k}+\alpha}{\sum_{k=1}^{K}(n_{m}^{k}+\alpha)},
\end{aligned}
\end{equation}
where $\beta^{l}_{k,t}$ denotes the element in the $k$-th row and $t$-th column of $\beta^{l}$. Similar to the standard LDA, the training result $\Phi$ of topic word distribution is updated by
\quad
\begin{equation}
\mathbb{E}[\phi_k^t|D_{1:l}]=\frac{n_{k,t}^{l}+\beta^{l}_{k,t}}{\sum_{t=1}^{V}(n_{k,t}^{l}+\beta^{l}_{k,t})}.
\end{equation}
\quad

The parameter $\lambda$ could be optimized with each mini-batch to reach a better performance using the MCMC method~\cite{amoualian2016streaming}.

\subsection{Bayesian Denoising Scheme}

\subsubsection{Basic Idea of Bayesian Denoising}	
Recall that the locally sanitized dataset can be reconstructed based on the moment estimation of the word counts in LP-LDA (Section~\ref{sec:reconstruct}). Bayesian denoising scheme aims to further improve this estimation, i.e., reducing the variance in Equation~(\ref{variance of moment estimator}), with the knowledge in the prior dataset.
The denoising process can be also viewed as a Bayesian estimation problem, where the word count $N_t$ is the underlying parameter for a prior distribution $\pi(N_t)$ and the observation $n_t$. The objective is to find an estimation function $B(n_{t})$ of $n_t$ to minimize the following Bayes risk:
\begin{equation*}
R(\pi(N_t))=\min_{B(n_{t})}\mathbb{E}_{N_{t}}[\mathbb{E}_{n_{t}|N_{t}}[\Vert{B(n_{t})-N_{t}}\Vert^2|N_{t}]].
\end{equation*}
According to~\cite{lehmann2006theory}, the optimal estimator can be calculated as the posterior expectation
\begin{equation*}
\mathbb{E}[N_t|n_t]=\mathop{\arg\min}_{B(n_{t})}R(\pi(N_t))
\end{equation*}
which depends on the prior distribution $\pi(N_t)$ and the likelihood function $P(n_t|N_t)$.

\subsubsection{Prior Distribution}
The prior distribution $\pi(N_t)$ can be assumed to follow a Gaussian distribution $N(\mu_t, \sigma^2)$ for simplicity, where $\mu_t$ denotes the word count information extracted from the prior dataset and $\sigma^2$ represents the level of belief to the prior information. Gaussian distribution is a common assumption and $\sigma^2$ could be adjusted according to the practical requirement.	

\subsubsection{Likelihood Function}
The likelihood function $P(n_t|N_t)$ can be computed as follows. As did in \textsf{LP-LDA}, each bit of the binary word vector transformed from the document has a probability of $f/2$ to be flipped at the user side, so the likelihood function of the observed noisy word count could be written as
\begin{equation}\label{ordinary likelihood}
\begin{aligned}
P(X=n_t|N_t)&=\sum_{i=0}^{min\{N_t,n_t\}}p(X_1=i,X_2=n_{t}-i)\\
&=\sum_{i=0}^{min\{N_t,n_t\}}\binom{N_t}{i}(1-\frac{f}{2})^{i}(\frac{f}{2})^{N_{t}-i}\times\\&\binom{M-N_t}{n_{t}-i}(1-\frac{f}{2})^{M-N_{t}-n_{t}+i}(\frac{f}{2})^{n_{t}-i},
\end{aligned}
\end{equation}
where $X_1$ denotes the count of word $t$ retained from the real datasets, $X_2$ denotes the noisy part and $X=X_1+X_2$, and $M$ denotes the document number in this mini-batch. Note that $X_1$ and $X_2$ follow two Binomial distributions
\begin{equation}\label{equation: both binomial}
X_1\sim B(N_t, 1-f/2)\quad X_2\sim B(M-N_t, f/2).
\end{equation}

It is still intractable to present the posterior $\mathbb{E}[N_t|n_t]$ in a closed form due to the complicated likelihood function in Equation~(\ref{ordinary likelihood}). To tackle this problem, we consider deriving an approximate results based on the following Gaussian conjugate property.

\begin{lemma}\label{lemma:Gauss}(Gaussian Conjugate Property~\cite{bishop2006pattern})
Given a marginal Gaussian distribution $p(X)$ and a conditional Gaussian distribution $p(Y|X)$, the conditional distribution $p(X|Y)$ satisfies that
\begin{equation*}
p(X|Y)=N(X|w\mu_{0}+(1-w)\frac{Y-b}{a},\frac{\sigma_0^2\sigma_1^2}{\sigma_1^2+a^2\sigma_0^2}),
\end{equation*}	
if it holds that,
\begin{equation*}
p(X)=N(X|\mu_{0},\sigma_0^2),\quad p(Y|X)=N(Y|aX+b,\sigma_1^2),
\end{equation*}	
where $w=\frac{\sigma_1^2}{\sigma_1^2+a^2\sigma_0^2}$.
\end{lemma}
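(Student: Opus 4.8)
The plan is to prove this standard conjugacy result by a direct application of Bayes' rule followed by completing the square, exploiting the fact that the product of two Gaussian densities, viewed as a function of $X$, is again proportional to a Gaussian. First I would write $p(X|Y)\propto p(Y|X)\,p(X)$, holding $Y$ fixed and tracking only the $X$-dependence. Substituting the two given densities, the exponent of the product is the quadratic form
\begin{equation*}
-\frac{1}{2\sigma_0^2}(X-\mu_0)^2-\frac{1}{2\sigma_1^2}(Y-aX-b)^2.
\end{equation*}
Expanding and regrouping by powers of $X$, the coefficient of $X^2$ is $-\tfrac{1}{2}\bigl(\tfrac{1}{\sigma_0^2}+\tfrac{a^2}{\sigma_1^2}\bigr)$ and the coefficient of $X$ is $\tfrac{\mu_0}{\sigma_0^2}+\tfrac{a(Y-b)}{\sigma_1^2}$, with every remaining term independent of $X$ and hence absorbable into the normalizing constant.

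Next I would read off the posterior parameters by matching this quadratic against the canonical form $-\tfrac{1}{2\sigma^2}(X-\mu)^2$. The coefficient of $X^2$ identifies the posterior precision as $\tfrac{1}{\sigma_0^2}+\tfrac{a^2}{\sigma_1^2}$, whose reciprocal is exactly the claimed variance $\tfrac{\sigma_0^2\sigma_1^2}{\sigma_1^2+a^2\sigma_0^2}$. The posterior mean is then the linear coefficient scaled by this variance, namely $\mu=\sigma^2\bigl(\tfrac{\mu_0}{\sigma_0^2}+\tfrac{a(Y-b)}{\sigma_1^2}\bigr)$. Substituting $w=\tfrac{\sigma_1^2}{\sigma_1^2+a^2\sigma_0^2}$ and simplifying term by term, the first contribution collapses to $w\mu_0$, and the second, after recognizing $\tfrac{a^2\sigma_0^2}{\sigma_1^2+a^2\sigma_0^2}=1-w$, collapses to $(1-w)\tfrac{Y-b}{a}$; together these reproduce the stated mean $w\mu_0+(1-w)\tfrac{Y-b}{a}$.

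The computation is routine completing-the-square algebra, so I do not expect a genuine conceptual obstacle; the only point requiring care is the bookkeeping that cleanly separates the $X$-dependent terms from the $X$-free terms, so that the proportionality argument is legitimate and the discarded factor really is a constant in $X$. Because any such constant is fixed once the expression is recognized as a normalized Gaussian, no explicit normalization integral needs to be evaluated, and the result follows directly from the two matched coefficients.
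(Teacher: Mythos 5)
Your proof is correct: the Bayes' rule plus completing-the-square computation yields exactly the stated posterior precision $\frac{1}{\sigma_0^2}+\frac{a^2}{\sigma_1^2}$ and mean $w\mu_0+(1-w)\frac{Y-b}{a}$. The paper does not prove this lemma at all --- it cites it as a known result from Bishop's textbook, where the proof is precisely this standard completing-the-square argument for linear-Gaussian models --- so your proposal reconstructs the intended argument and nothing more needs to be said.
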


Lemma \ref{lemma:Gauss} implies that the posterior of a parameter $\theta$ will be also a Gaussian distribution when the prior distribution of $\theta$ and the likelihood function of the observed data conditioned on $\theta$ are based on Gaussian distribution. Based on Lemma \ref{lemma:Gauss}, we consider approximating the binomial distributions shown in Equation~(\ref{equation: both binomial}) using Gaussian distributions. The following lemma provides theoretical support for such approximation.

\begin{lemma}\label{lemma:CLT}(De Moivre-Laplace Central Limit Theorem)
Suppose that in a series of $n$ independent Bernoulli trials, event A has a probability $p$ of occurrence ($0<p<1$) in each trial. Denote $S_n$ as the occurrence time of A, and let
\begin{equation*}
Y_n^{*}=\frac{S_n-np}{\sqrt{npq}},
\end{equation*}
where $q=1-p$, Then for each real number $y$, it holds that
\begin{equation*}
\lim_{n\to \infty}P(Y_n^{*}\leq y)=\frac{1}{\sqrt{2\pi}}\int_{-\infty}^{y}e^{-t^2/2}dt.
\end{equation*}
\end{lemma}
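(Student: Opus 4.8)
The plan is to recognize this as the classical integral de Moivre--Laplace limit theorem, which is precisely the special case of the Lindeberg--L\'evy central limit theorem for a sum of i.i.d.\ Bernoulli variables, and to prove it via characteristic functions. First I would write $S_n=\sum_{i=1}^{n}X_i$, where $X_1,\dots,X_n$ are the independent Bernoulli trials with $\mathbb{E}[X_i]=p$ and $\mathrm{Var}(X_i)=pq$, so that the standardized quantity decomposes as $Y_n^{*}=\sum_{i=1}^{n}(X_i-p)/\sqrt{npq}$, a normalized sum of centered, independent, identically distributed summands.

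Next I would compute the characteristic function $\varphi_{Y_n^{*}}(t)=\mathbb{E}[e^{\mathrm{i}tY_n^{*}}]$. By independence this factorizes as $\varphi_{Y_n^{*}}(t)=\bigl(\varphi_{X_1-p}(t/\sqrt{npq})\bigr)^{n}$, where $\varphi_{X_1-p}$ is the characteristic function of the centered Bernoulli variable. A second-order Taylor expansion about the origin gives $\varphi_{X_1-p}(s)=1-\tfrac{1}{2}pq\,s^{2}+o(s^{2})$, since the first moment of $X_1-p$ vanishes and its second moment equals $pq$. Substituting $s=t/\sqrt{npq}$ yields $\varphi_{X_1-p}(t/\sqrt{npq})=1-\tfrac{t^{2}}{2n}+o(1/n)$, so that
\begin{equation*}
\varphi_{Y_n^{*}}(t)=\Bigl(1-\tfrac{t^{2}}{2n}+o(1/n)\Bigr)^{n}\longrightarrow e^{-t^{2}/2}
\end{equation*}
as $n\to\infty$, which is exactly the characteristic function of the standard normal distribution.

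Finally I would invoke L\'evy's continuity theorem: pointwise convergence of the characteristic functions to a limit that is continuous at the origin implies convergence in distribution of $Y_n^{*}$ to the standard normal law. Because the limiting standard normal CDF $\tfrac{1}{\sqrt{2\pi}}\int_{-\infty}^{y}e^{-t^{2}/2}\,dt$ is continuous everywhere, convergence in distribution upgrades to pointwise convergence of the CDFs at every real $y$, which is the claimed statement. I expect the main obstacle to be the analytic bookkeeping in the limit $(1+a_n/n)^n\to e^{a}$: one must control the $o(1/n)$ remainder carefully enough to pass to the limit inside the $n$-fold product, and appeal to the continuity theorem rather than attempting to match density functions directly. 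An alternative, more elementary but computationally heavier route would follow de Moivre's original local argument, applying Stirling's formula to $\binom{n}{k}p^{k}q^{n-k}$ with $k=np+y\sqrt{npq}$ to obtain the local approximation $\tfrac{1}{\sqrt{2\pi npq}}e^{-y^{2}/2}$ and then summing it as a Riemann sum converging to the Gaussian integral; there the difficulty instead shifts to establishing uniform error control in the Stirling expansion across the full range of $k$.
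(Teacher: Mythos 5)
Your proposal is correct, but note that the paper does not prove this lemma at all: it is stated as the classical De Moivre--Laplace central limit theorem and invoked as a known result (its only role is to justify the Gaussian approximation of the binomial counts in Equation~(\ref{equation: both binomial})), so there is no paper proof to compare against. Your characteristic-function argument is the standard textbook proof: the factorization $\varphi_{Y_n^{*}}(t)=\bigl(\varphi_{X_1-p}(t/\sqrt{npq})\bigr)^{n}$, the second-order Taylor expansion (valid since the Bernoulli variable has finite second moment), the limit $\bigl(1-\tfrac{t^2}{2n}+o(1/n)\bigr)^n\to e^{-t^2/2}$, and L\'evy's continuity theorem together with the everywhere-continuity of the normal CDF are all correctly deployed, and your remark about the alternative Stirling/local-limit route accurately reflects de Moivre's original argument and its harder uniform error control. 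The only caveat is proportional to the lemma's role here: since the paper treats it as an imported classical fact, a citation would suffice in context, and any written-out proof (yours included) is supplementary rather than a replacement for missing content.
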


Lemma \ref{lemma:CLT} states that a binomially distributed random variable approximates the Gaussian random variable with mean $np$ and standard deviation $\sqrt{npq}$ as $n$ grows larger. It has been validated that when $np>5$ or $n(1-p)>5$, this approximation is reasonable and effective.
Based on Lemma \ref{lemma:CLT}, both binomial distributions shown in Equation (\ref{equation: both binomial}) could be approximated as:
\begin{equation}\label{gaussian approximation}
\begin{aligned}
X_1&\sim N(N_t(1-\frac{f}{2}), N_t\cdot \frac{f}{2}(1-\frac{f}{2}))&\\X_2&\sim N((M-N_t)\frac{f}{2},(M-N_t)\frac{f}{2}(1-\frac{f}{2})).
\end{aligned}
\end{equation}

Furthermore, according to the additive property of Gaussian distributions, the word count variable $X$ should follow a Gaussian distribution
\begin{equation}\label{replaced likelihood}
\begin{aligned}
X\sim N((M-N_{t})\frac{f}{2}+N_{t}(1-\frac{f}{2}), \sigma_{p}^2),
\end{aligned}
\end{equation}
where $\sigma_{p}^2=M\frac{f}{2}(1-\frac{f}{2})+2cov(X_1,X_2)$ and then the likelihood function shown in Equation~(\ref{ordinary likelihood}) could be approximated as:
\begin{equation}\label{replaced likelihood}
\begin{aligned}
P(X=n_t|N_t)=\frac{1}{\sqrt{2\pi\sigma_p^2}}\mathrm{exp}(-\frac{[2X-Mf-2N_t(1-f)]^2}{4\sigma_p^2}).
\end{aligned}
\end{equation}
\begin{theorem}\label{theorem:bayesian denoising}
Suppose the prior distribution of count $N_t$ of word $t$ is set as $N(\mu_t,\sigma^2)$, and the likelihood function of the noisy count $n_t$ is approximated by Gaussian distribution, then the posterior distribution of $N_t$ should be as follows:
\begin{equation*}
\begin{aligned}
N(\omega\cdot \mu_t+(1-\omega)\frac{2n_t-fM}{2(1-f)},\frac{\sigma^2\sigma_p^2}{\sigma_p^2+(1-f)^2\sigma^2}),
\end{aligned}
\end{equation*}
and then the Bayesian estimator
\begin{equation}\label{equation:bayesian denoising}
\begin{aligned}
B(n_t)=\omega\cdot \mu_t+(1-\omega)\frac{2n_t-fM}{2(1-f)},
\end{aligned}
\end{equation}
where $\omega=\frac{Mf(2-f)}{Mf(2-f)+4\sigma^2(1-f)^2}$.
\end{theorem}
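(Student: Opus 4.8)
The plan is to recognize the posterior computation as a direct instance of the Gaussian conjugate property in Lemma~\ref{lemma:Gauss}, once the intractable likelihood has been replaced by its Gaussian approximation of Equation~(\ref{replaced likelihood}). First I would set up the correspondence between the quantities of our problem and the symbols in Lemma~\ref{lemma:Gauss}. The underlying parameter plays the role of $X$, so I take $X = N_t$ with prior $p(N_t) = N(N_t \mid \mu_t, \sigma^2)$, identifying $\mu_0 = \mu_t$ and $\sigma_0^2 = \sigma^2$, while the observed noisy count $n_t$ plays the role of the conditioning variable $Y$.

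The key step is to cast the approximated likelihood into the linear-Gaussian form $p(Y \mid X) = N(Y \mid aX + b, \sigma_1^2)$ required by the lemma. Reading the conditional mean directly from the Gaussian approximation in Equation~(\ref{replaced likelihood}), the noisy count has mean $(M - N_t)\tfrac{f}{2} + N_t(1 - \tfrac{f}{2}) = \tfrac{Mf}{2} + (1-f)N_t$, so I would read off $a = 1 - f$, $b = \tfrac{Mf}{2}$, and conditional variance $\sigma_1^2 = \sigma_p^2$. Working from the distributional statement rather than the exponent sidesteps any ambiguity in the normalizing factor of the density form.

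With the mapping fixed, I would apply Lemma~\ref{lemma:Gauss} verbatim. It returns posterior mean $w\mu_0 + (1-w)\tfrac{Y-b}{a}$ and posterior variance $\tfrac{\sigma_0^2\sigma_1^2}{\sigma_1^2 + a^2\sigma_0^2}$ with $w = \tfrac{\sigma_1^2}{\sigma_1^2 + a^2\sigma_0^2}$. Substituting the identified values gives posterior mean $\omega\mu_t + (1-\omega)\tfrac{2n_t - fM}{2(1-f)}$ (using $\tfrac{n_t - Mf/2}{1-f} = \tfrac{2n_t - fM}{2(1-f)}$) and variance $\tfrac{\sigma^2\sigma_p^2}{\sigma_p^2 + (1-f)^2\sigma^2}$, matching the statement with $\omega = \tfrac{\sigma_p^2}{\sigma_p^2 + (1-f)^2\sigma^2}$. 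The claim that the Bayesian estimator $B(n_t)$ equals this posterior mean then follows from the earlier observation that the posterior expectation $\mathbb{E}[N_t \mid n_t]$ minimizes the Bayes risk under squared loss.

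The remaining step, where I would be most careful, is reconciling $\omega = \tfrac{\sigma_p^2}{\sigma_p^2 + (1-f)^2\sigma^2}$ with the closed form $\omega = \tfrac{Mf(2-f)}{Mf(2-f) + 4\sigma^2(1-f)^2}$ asserted in the theorem; this requires evaluating $\sigma_p^2$. I would note that $X_1$ and $X_2$ are built from disjoint collections of bits, namely those originally equal to $1$ and those originally equal to $0$, each flipped independently, so $\mathrm{cov}(X_1,X_2) = 0$ and hence $\sigma_p^2 = M\tfrac{f}{2}(1 - \tfrac{f}{2}) = \tfrac{Mf(2-f)}{4}$. Inserting this value and clearing the factor of $4$ produces exactly the stated weight. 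The main obstacle is thus bookkeeping: keeping the linear coefficients $a$ and $b$ consistent between the likelihood and the lemma's template, and verifying that the covariance vanishes so that $\sigma_p^2$ collapses to the clean expression needed for the final form of $\omega$.
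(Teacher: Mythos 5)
Your proposal is correct and takes essentially the same route as the paper, whose proof is simply to apply Lemma~\ref{lemma:Gauss} to the Gaussian-approximated likelihood in Equation~(\ref{replaced likelihood}). You in fact supply the bookkeeping the paper leaves implicit: the identification $a=1-f$, $b=Mf/2$, $\sigma_1^2=\sigma_p^2$, and the observation that $\mathrm{cov}(X_1,X_2)=0$ (disjoint, independently flipped bits), so that $\sigma_p^2=Mf(2-f)/4$ and the posterior weight collapses to the stated $\omega=\frac{Mf(2-f)}{Mf(2-f)+4\sigma^2(1-f)^2}$.
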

\begin{proof}
Based on Equation~(\ref{replaced likelihood}) and Lemma 2, the result could be derived directly.
\end{proof}
Recall the moment estimator of the word count $N_t$ shown in Equation~(\ref{moment estimator}), we can observe that the Bayesian estimator  of $N_t$ is a linear combination of the moment estimator and the prior $\mu_t$. And the weight $\omega$ of $\mu_t$ is
controlled by the prior parameter $\sigma^2$ which is adjustable according to practical requirement. Intuitively, as $\sigma^2$ grows larger, the belief to the prior information becomes weaker, and the weight of $\mu_t$ becomes smaller.

\begin{theorem}\label{theorem:variance of bayesian denoising}
The Bayesian estimator $B(n_t)$ of $N_t$ in Equation~(\ref{equation:bayesian denoising}) has a variance of
\begin{equation}\label{variance of bayesian estimator}
\begin{aligned}
\mathrm{Var}(B(n_t))=(1-\omega)^2(\frac{(2-f)fM}{4(1-f)^2}+\frac{(M-N_t)N_t}{M}),
\end{aligned}
\end{equation}
where $\omega=\frac{Mf(2-f)}{Mf(2-f)+4\sigma^2(1-f)^2}$.
\end{theorem}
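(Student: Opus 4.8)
The plan is to exploit the fact that the Bayesian estimator $B(n_t)$ given in Equation~(\ref{equation:bayesian denoising}) is an \emph{affine} function of the single random quantity $n_t$, so that computing its variance reduces to a standard scaling argument combined with the variance formula already established for the moment estimator.

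First I would isolate the sources of randomness. In Equation~(\ref{equation:bayesian denoising}), the weight $\omega=\frac{Mf(2-f)}{Mf(2-f)+4\sigma^2(1-f)^2}$, the prior mean $\mu_t$, and the constants $f$ and $M$ are all deterministic: $\omega$ is a fixed function of the batch size $M$, the flipping parameter $f$, and the prior-belief parameter $\sigma^2$, while $\mu_t$ is extracted from the (non-private) prior dataset $D_0$ and hence does not depend on the current noisy observation $n_t$. Consequently the only random variable entering $B(n_t)$ is $n_t$, and $B(n_t)$ has the form $a+c\,n_t$ with $a=\omega\mu_t-(1-\omega)\frac{fM}{2(1-f)}$ and $c=\frac{1-\omega}{1-f}$ both constant.

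Next I would rewrite $B(n_t)$ so that the moment estimator appears explicitly. Observing that $\frac{2n_t-fM}{2(1-f)}$ is precisely the moment estimator $\hat{N_t}$ of Equation~(\ref{moment estimator}), we may write $B(n_t)=\omega\mu_t+(1-\omega)\hat{N_t}$. Applying the elementary identity $\mathrm{Var}(a+cX)=c^2\,\mathrm{Var}(X)$ — here the additive constant $\omega\mu_t$ drops out and the multiplicative factor is $(1-\omega)$ — yields $\mathrm{Var}(B(n_t))=(1-\omega)^2\,\mathrm{Var}(\hat{N_t})$.

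Finally I would invoke the variance of the moment estimator already derived in Equation~(\ref{variance of moment estimator}), namely $\mathrm{Var}(\hat{N_t})=\frac{(2-f)fM}{4(1-f)^2}+\frac{(M-N_t)N_t}{M}$, and substitute it to reach the claimed expression. There is no genuine obstacle in this argument; the only point deserving care is the justification that $\omega$ and $\mu_t$ are constants with respect to $n_t$, so that the additive term contributes nothing to the variance and the entire dependence on $n_t$ is carried through the factor $(1-\omega)$ multiplying $\hat{N_t}$.
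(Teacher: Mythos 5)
Your proposal is correct and follows exactly the same route as the paper's own proof: both recognize that $B(n_t)=\omega\mu_t+(1-\omega)\hat{N_t}$ is an affine function of the moment estimator with $\omega$ and $\mu_t$ constant, so $\mathrm{Var}(B(n_t))=(1-\omega)^2\mathrm{Var}(\hat{N_t})$, and then substitute Equation~(\ref{variance of moment estimator}). Your write-up is in fact more explicit than the paper's, which states the scaling identity in one line without spelling out the justification that $\omega$ and $\mu_t$ do not depend on $n_t$.
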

\begin{proof}
Since $\mu_t$ is a constant, then
\begin{equation*}
\mathrm{Var}(\omega\cdot \mu_t+(1-\omega)\hat{N}_t)=(1-\omega)^2\cdot D(\hat{N}_t),
\end{equation*}
where $\hat{N}_t$ denotes the estimator of $N_t$ in Equation~(\ref{moment estimator}).
\end{proof}

Theorem~\ref{theorem:variance of bayesian denoising} reveals that if we replace the moment estimator in Equation~(\ref{moment estimator}) by the Bayesian estimator in Equation~(\ref{equation:bayesian denoising}) to implement the dataset reconstruction process, the variance term could be reduced to $(1-\omega)^2$ of that of the moment estimator.

\subsection{\textsf{OLP-LDA} Algorithm}
We present the online LDA algorithm on locally private mini-batches by embedding the Bayesian denoising process into the training procedures of \textsf{O-LDA}.

Let $D_0$ be the prior dataset, $D_l$ be the $l$-th locally sanitized mini-batch arriving at the central server, $N_t^l$ be the real count of word $t$ in $D_l$, $\mu_t^l$ be the prior parameter for $N_t^l$, shown as the mean of the prior distribution, and $\eta_t^{l}$ be the Bayesian estimation $B(n_t^{l})$ of $N_t^l$ in Equation~(\ref{equation:bayesian denoising}). We give the updating rule of $\mu_t^l$ as follows:

\begin{equation}\label{equation: prior updating}
\mu_t^l=\frac{\mu_t^{l-1}+\eta_t^{l-1}}{2}\times \frac{|D_{l}|}{|D_{l-1}|},
\end{equation}
where $|D_{l}|$ refers to the documents number in mini-batch $D_{l}$. Especially, it holds that $\mu_t^{0}=\eta_t^{0}$ since $D_0$ is possessed by the central server as the clean dataset and no need to be reconstructed.
Algorithm~\ref{alg:algorithm3} shows the details of our proposed \textsf{OLP-LDA}.
\subsection{Analysis of \textsf{OLP-LDA}}
We discuss the impacts of the mini-batch size and the prior data size on the model utility in terms of Bayesian denoising.
\subsubsection{Impact of Mini-batch Size}
Given a mini-batch $D_l$, the Bayesian estimation $B(n_t^l)=\frac{B(n_t^l)}{|D_l|}|D_l|$ of $N_t^l$ is used to perform the denoising. Obviously, more accurate frequency estimation $\frac{B(n_t^l)}{|D_l|}$ would lead to more accurate denoising. According to Equation~(\ref{variance of bayesian estimator}), the variance of $\frac{B(n_t^l)}{|D_l|}$ should be:
\begin{equation}\label{Equa:impact of mini-batchsize}
\mathrm{Var}(\frac{B(n_t^l)}{|D_l|})=(1-\omega)^2(\frac{(2-f)f}{4(1-f)^2|D_l|}+\frac{(|D_l|-N_{t}^l)N_{t}^l}{|D_l|^3}),
\end{equation}
where
\begin{equation*}
(1-\omega)^2=(\frac{4\sigma^2(1-f)^2}{f(2-f)|D_l|+4\sigma^2(1-f)^2})^2.
\end{equation*}
As seen, the larger $|D_l|$, the smaller variance of $\frac{B(n_t^l)}{|D_l|}$, thus the more accurate denoising.
\subsubsection{Impact of Size of Prior Dataset}	
Given a prior dataset $D_0$, if we take a random variable $X^i_t\in\{0,1\}$ to represent whether a given word $t$ appears in the $i$-th document in $D_0$, then according to the law of large numbers, the mean of $X^i_t$ in the prior dataset should tend to the real frequency statistic $p_t$ in the whole corpus as the size $D_0$ keeps increasing, that is
\begin{equation}\label{Equa:impact of priordatasize}
\lim_{|D_0|\to\infty}\mathrm{Pr}\{|\frac{1}{|D_0|}\sum_{i=0}^{|D_0|}X^i_t-p_t|<\epsilon\}=1, \forall \epsilon>0,
\end{equation}
where $\frac{1}{|D_0|}\sum_{i=0}^{|D_0|}X^i_t=\frac{N_t^0}{|D_0|}$. That means that the larger prior dataset can extract more accurate word-count information. Then, according to Equation~(\ref{equation: prior updating}) and $\mu_t^{0}=\eta_t^{0}$, the more accurate prior information would be generated for the subsequent noisy mini-batches.

Therefore, we can conclude that both the larger size of the prior dataset and  mini-batches could further improve the model utility.

\begin{algorithm}[t]
\SetAlgoLined
\caption{\textsf{OLP-LDA}}
\label{alg:algorithm3}
\LinesNumbered
\SetKwComment{Comment}{start}{end}
\KwIn{Flipping propablity $f$, Hyperparameters $\alpha$, $\beta$, Topic number $K$, Correlation factor $\lambda$, Reconstruction factor $\omega$, Prior dataset $D_0$, Local dataset $D$}
\KwOut{Training results $\Phi^{0:\infty}$}
\tcp{\textbf{On the user side}}
\ForEach{document $d \in D$}{
	$\hat{d}=RR(d)$ \tcp*{Randomized response}
	Upload $\hat{d}$ to the server \;
}
\tcp{\textbf{On the server side}}
\setcounter{AlgoLine}{0}
Train initial model: $\Phi^{0},N_{k,t}^0=LDA(D_0, \alpha, \beta, K)$ \tcp*{Standard LDA training algorithm}
\ForEach{word $t$}{
	Initialize $\mu^0_t=\eta^{0}_t=\sum_{k}N_{k,t}^0$ \;
}
\For {l=$1:\infty$} {
	\For {$\hat{d}$ uploaded in the $l$-th time window}{
		Aggregate $\hat{d}$: $D_l=\{\hat{d_1},\hat{d_2},...,\hat{d}_{N_l}\}$ \;
		Generate prior for $D_l$ acc. to Equation~(\ref{equation: prior updating}) \;
		Reconstruct $D_l$ by Bayesian denoising scheme acc. to  Equation~(\ref{equation:bayesian denoising})
		\;
		Update $\beta=\beta+\lambda N_{k,t}^{l-1}$ \;
		Train model: $\Phi^{l},N_{k,t}^l=LDA(\alpha, \beta, K)$ \;
		\ForEach{word $t$}{
			Compute $\mu^l_t=\eta^{l-1}_t$\;
			Compute $\eta^{l}_t=B(n_t)$ acc. to Equation~(\ref{equation:bayesian denoising})\;
		}
	}
}		
\end{algorithm}

\section{Evaluation}\label{sec:experiment}

In this section, we conducted extensive simulation experiments on four real-world datasets to evaluate the effectiveness of our proposed algorithms.

\subsection{ Experiment Setup}
\textbf{Datasets:}
Four real-world datasets were used in our experiments:
\begin{itemize}
\item \textsf{KOS}\footnote{\url{http://archive.ics.uci.edu/ml/datasets/KOS}} contains $3, 430$ blog entries from dailykos website.
\item \textsf{NIPS}\footnote{\url{http://archive.ics.uci.edu/ml/datasets/NIPS}} contains $1, 500$ research papers from NIPS conference.
\item \textsf{Enron}\footnote{\url{http://archive.ics.uci.edu/ml/datasets/Enron}} contains email messages from about $150$ users. The first $10, 000$ documents are extracted for experiments.
\item \textsf{FVENA}
\footnote{\url{http://archive.ics.uci.edu/ml/datasets/Victorian+Era+Authorship+Attribution}} contains books written by $50$ authors in the 19th century. $1, 800$ documents from the first and eighth authors ($900$ for each) are extracted for experiments.

\end{itemize}
We extracted part of these datasets as our training datasets and the rest as the testsets. For simplicity, we conducted a pre-processing on these datasets, in which, all stop words were removed, and $1,000$ most frequent words in each dataset were chosen as the corresponding vocabulary list. Details about the datasets after pre-processing can be found in Table~\ref{tab:plain}.

\begin{table}
\centering\caption{Details of the Datasets}
\begin{tabular}{cccc}
	\hline
	Dataset  & $\#.$ words  & $\#.$ training docs & $\#.$ test docs\\
	\hline
	\ KOS       & 209169     &3000 &430 \\
	\ NIPS         &410753       &1350 &150\\
	\ Enron         &356363       &8000 &2000\\
	\ FVENA         &386061         &1350 &450 \\
	\hline
\end{tabular}
\label{tab:plain}
\end{table}

\textbf{Simulation Methodology:}

For \textsf{HDP-LDA}, we designed a simple \textit{Topic-based Attack} algorithm to validate its performance on protecting the sampled topics. Specifically, for the $n$-th word $w_{mn}=t$ in document $d_m$, we utilized all the sampled topics $\{k_1,...,k_i,...,k_T\}$ on $w_{mn}$ in $T$ iterations and the sanitized word-counts $\mathbf{s}=\{\hat{(n_k^t)}_i, i\leq T\}$ to infer $w_{mn}$. The detailed algorithm is shown in Algorithm~\ref{alg: Topic Attack Algorithm}. In our experiment, for simplicity, we selected the last word in $D$ as the attacked word and took the inferred probability $P[w_{mn}=t|k_1,...,k_i,...,k_T]$ computed by Equation~(\ref {Equ:Attack equation}) as the attack accuracy.

\begin{equation}\label{Equ:Attack equation}
\begin{aligned}
&Pr(w_{mn}=t|k_1,...,k_i,...,k_T)
\\&=\frac{Pr(w_{mn}=t, k_1,...,k_i,...,k_T)}{Pr(k_1,...,k_i,...,k_T)}
\\&=\frac{\prod_i Pr(w_{mn}=t, k_i)}{\prod_i Pr(k_i)}=\prod_i \frac{Pr(w_{mn}=t, k_i)}{Pr(k_i)}
\\&\approx\prod_i \frac{(\hat{n_{k_i}^t})_i/\sum_{k,t}(\hat{{n_{k}^t}})_i}{\sum_t{(\hat{n_{k_i}^t})_i}/\sum_{k,t}(\hat{{n_{k}^t}})_i}=\prod_i \frac{(\hat{n_{k_i}^t})_i}{\sum_{t}{\hat{(n_{k_i}^t)}_i}}.
\end{aligned}
\end{equation}

\begin{algorithm}[t]
\SetAlgoLined
\caption{Topic-based Attack Algorithm}
\label{alg: Topic Attack Algorithm}
\LinesNumbered
\SetKwComment{Comment}{start}{end}
\KwIn{The attacked word $w_{mn}$}
\KwOut{The inferred result $\bar{t}$}
Set $i=0$\;
\While {$i<T$} {
	Add noise: $\hat{(n_k^t)}_i\leftarrow n_k^t+\eta,\quad \eta\sim Lap(2/\epsilon_L)$\;
	Record $\hat{(n_k^t)}_i$\;
	\For {$w\in D$}{
		Sample topic: $k_i\sim \mathbf{P}$\;
		\If{$w=w_{mn}$}{
			Record $k_i$\;}
	}

	$i \leftarrow i+1$\;
}
Compute $\bar{t}=\mathop{\arg\max}\limits_{t}{P[w_{mn}=t|k_1,...,k_i,...,k_T]}$ acc. to Equation~(\ref{Equ:Attack equation})\;
\end{algorithm}

For both LDP solutions \textsf{LP-LDA} and \textsf{OLP-LDA}, all documents in the training datasets $D_{\text{train}}$ were perturbed with the randomized response technique described in Section~\ref{sec:localper} in a centralized manner to simulate the procedures of crowdsourcing users. Then, for \textsf{LP-LDA}, the reconstruction process and model training were implemented. For \textsf{OLP-LDA}, the prior dataset owned by the server was generated by randomly sampling from $D_{\text{train}}$ and used for constructing the initial model. The arriving mini-batches were generated by randomly drawing from the sanitized dataset, and then reconstructed according to the Bayesian denoising scheme. Finally, \textsf{OLP-LDA} was trained on those reconstructed mini-batches.

To evaluate the practical performance on privacy protection, the membership inference attack (MIA)~\cite{shokri2017membership} was implemented to simulate the inference on trained models. Since MIA works on supervised learning models, LDA models for unlabelled datasets cannot be directly verified. Instead, the privately trained LDA models were incorporated into a classifier to derive an LDA-based classification model, which then acts as the target model for inference. In the simulation, the labeled dataset \textsf{FVENA} was used to train and test the target models. The detailed MIA attack process can be referred to~\cite{shokri2017membership,rahman2018membership}.

All our experiments were run on a laboratory-based workstation equipped with 10 cores of Intel(R) Xeon(R) E5-2640 v4@2.40GHz and 30GB memory. And the proposed algorithms were implemented with Python (version 3.7). In our experiments, for all datasets, the topic number was set as $50$, and the default hyperparameters $\alpha$, $\beta$ were set as $1$, and $0.01$, respectively.

\textbf{Metrics:}
We select \textbf{Perplexity} as the metric of LDA utility. Perplexity measures the likelihood that the test data is generated by the trained LDA model. A lower perplexity means a higher likelihood, and hence better model utility. Given a test set $D_{\text{test}}$ with $M$ documents, denote $\phi_k^{t}$ as the learned parameters from $D_{\text{train}}$ and $\theta_{m}^{k}$ as the inferred parameters from $D_{\text{test}}$, the perplexity on $D_{\text{test}}$ can be computed as
\begin{equation*}
\mathrm{per}(D_{\text{test}})=\mathrm{exp}({-\frac{\sum_{m}\sum_{i}\log{(\sum_{k}\theta_m^k\phi_k^{w_i})}}{\sum_{m}|d_m|}})
\end{equation*}
where $|d_m|$ and $w_i$ denote the number of words and the $i$-th word in $d_m$ respectively.

\textbf{Comparison:} To validate our proposed algorithm, we also compare with two algorithms:
\begin{itemize}
\item
\textsf{CDP-LDA}~\cite{foulds2016theory}, in which DP is achieved by perturbing the word count matrices $N_k^t$ and $N_m^k$ with Laplace noise $Lap(1/\epsilon)$ in the first iteration.
\item
\textsf{CDP-LDA+}, which is an extended version of \textsf{CDP-LDA}, introduces Laplace noise into $N_k^t$ and $N_m^k$ in each iteration to protect the training process.
\end{itemize}

\subsection{Performance of \textsf{HDP-LDA}}

\begin{figure*}[htb]
\centering
\subfigure[KOS]{
	\includegraphics[width=5.5cm]{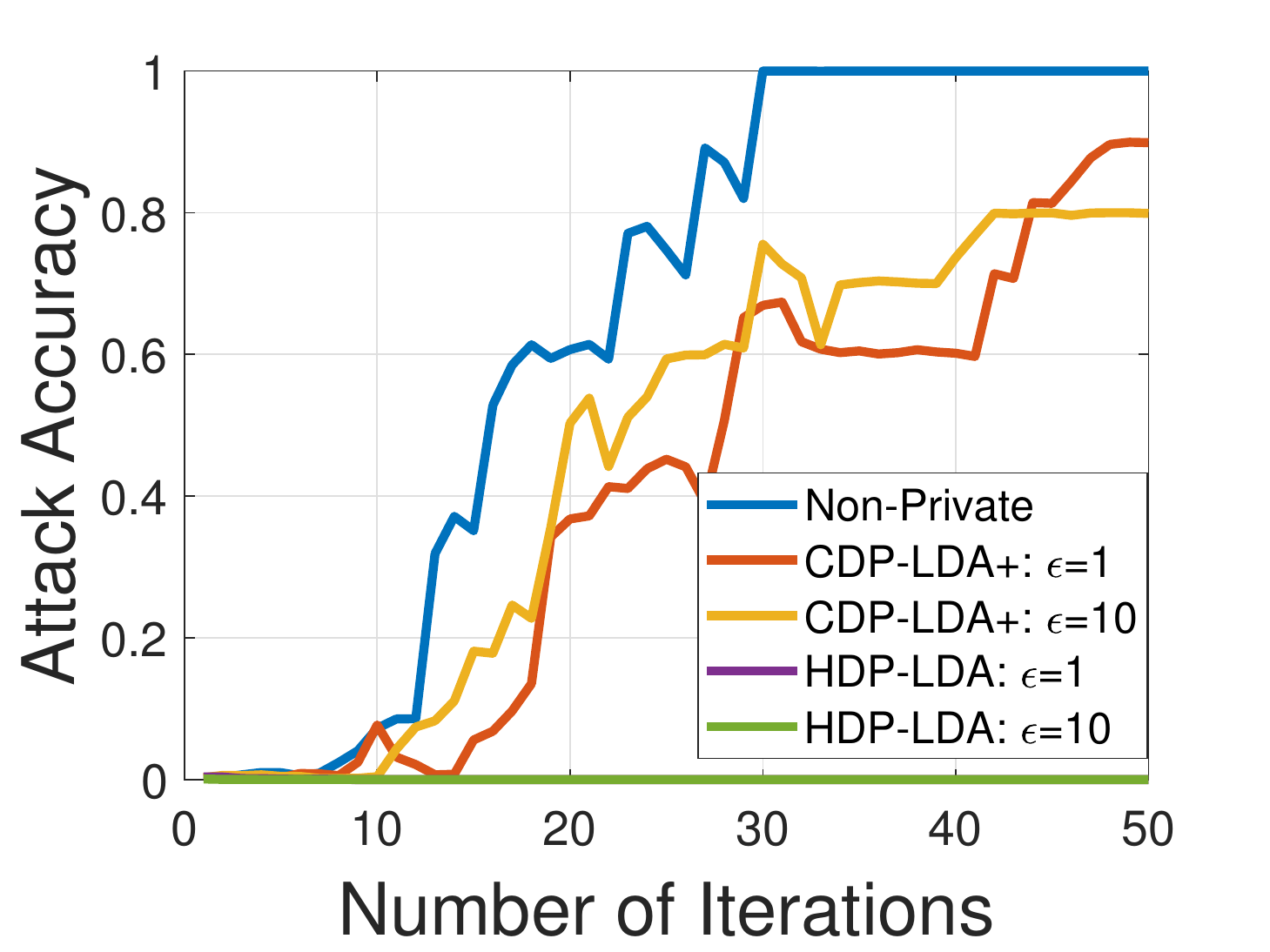}
    \label{subfig:KOS}
}
\subfigure[NIPS]{
	\includegraphics[width=5.5cm]{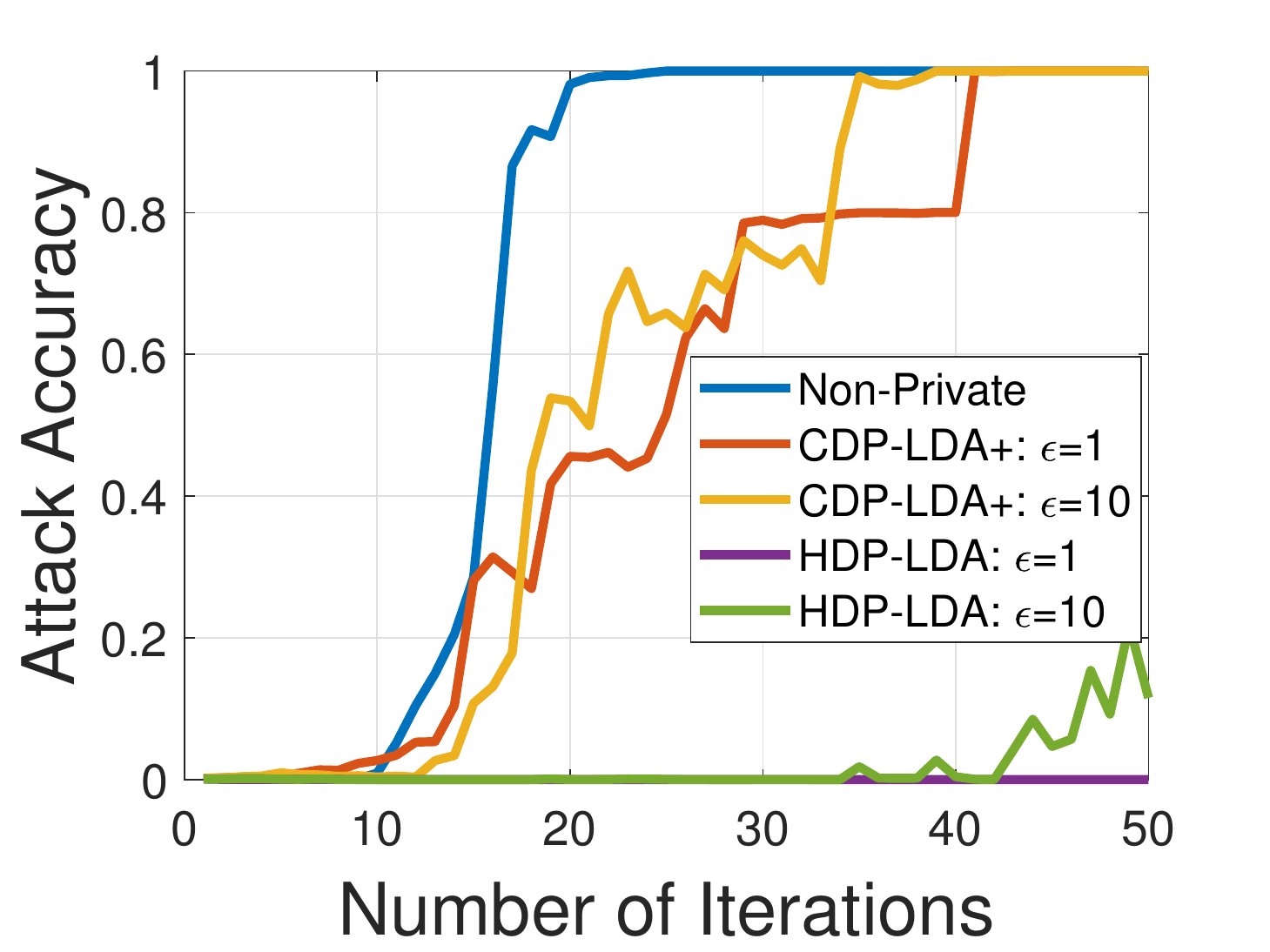}
	\label{subfig:NIPS}
}
\subfigure[Enron]{
	\includegraphics[width=5.5cm]{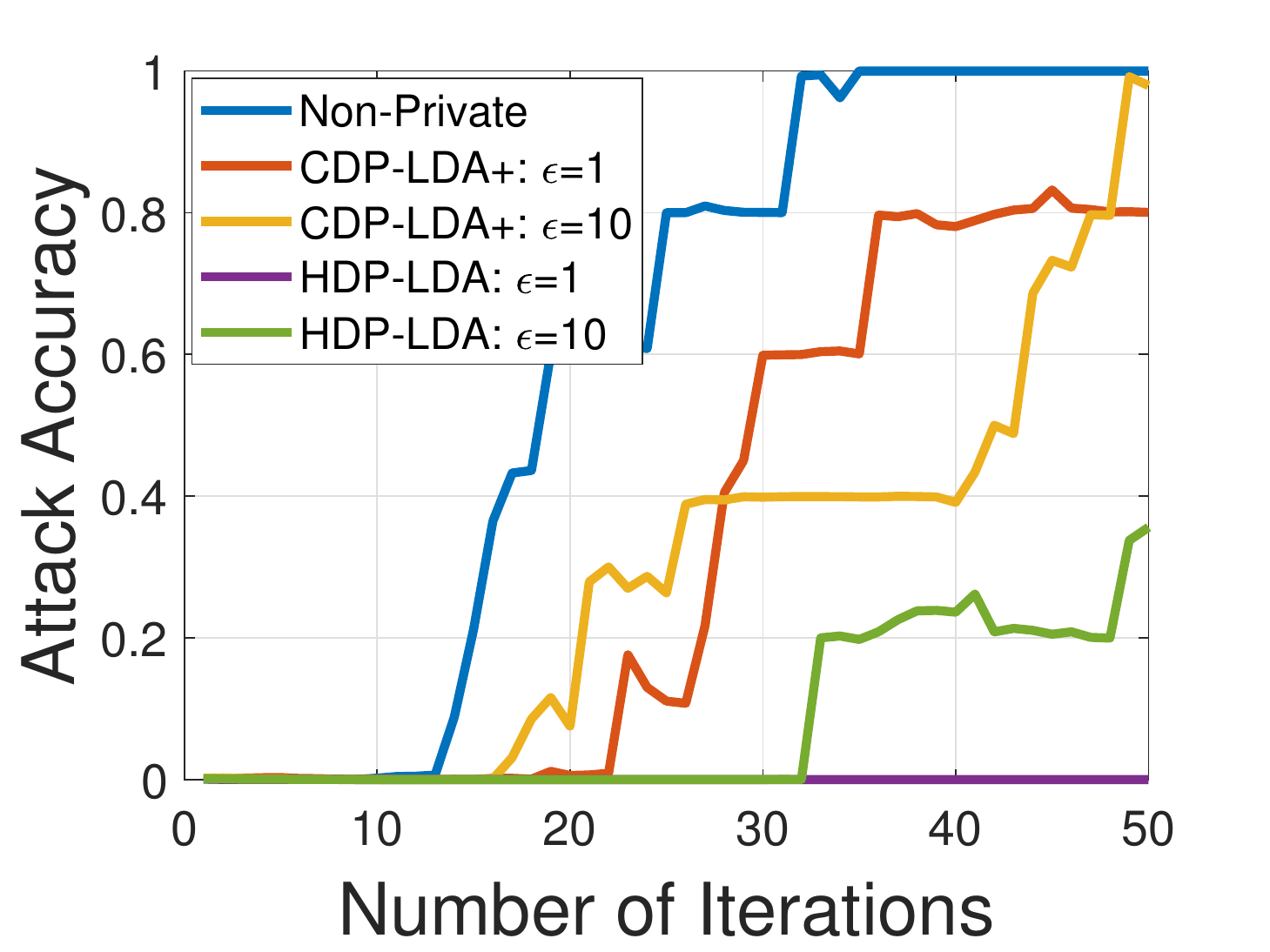}
	\label{subfig:Enron}
}
\caption{Defense Against Topic-based Attack}
\label{fig:inherent privacy}
\end{figure*}

\begin{figure*}[htb]
\centering
\subfigure[KOS]{
	\includegraphics[width=5.5cm]{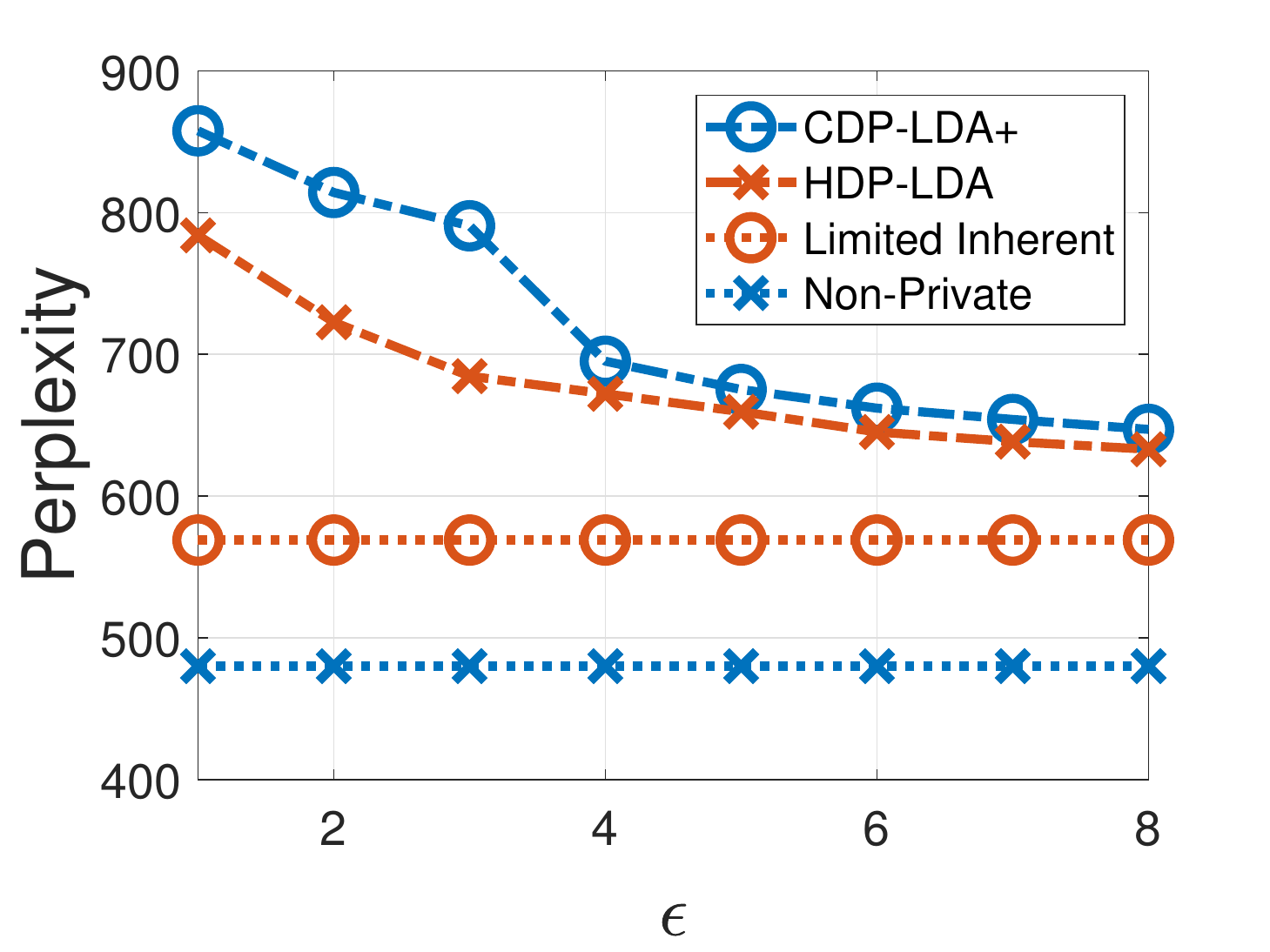}
    \label{subfig:KOS_}
}
\subfigure[NIPS]{
	\includegraphics[width=5.5cm]{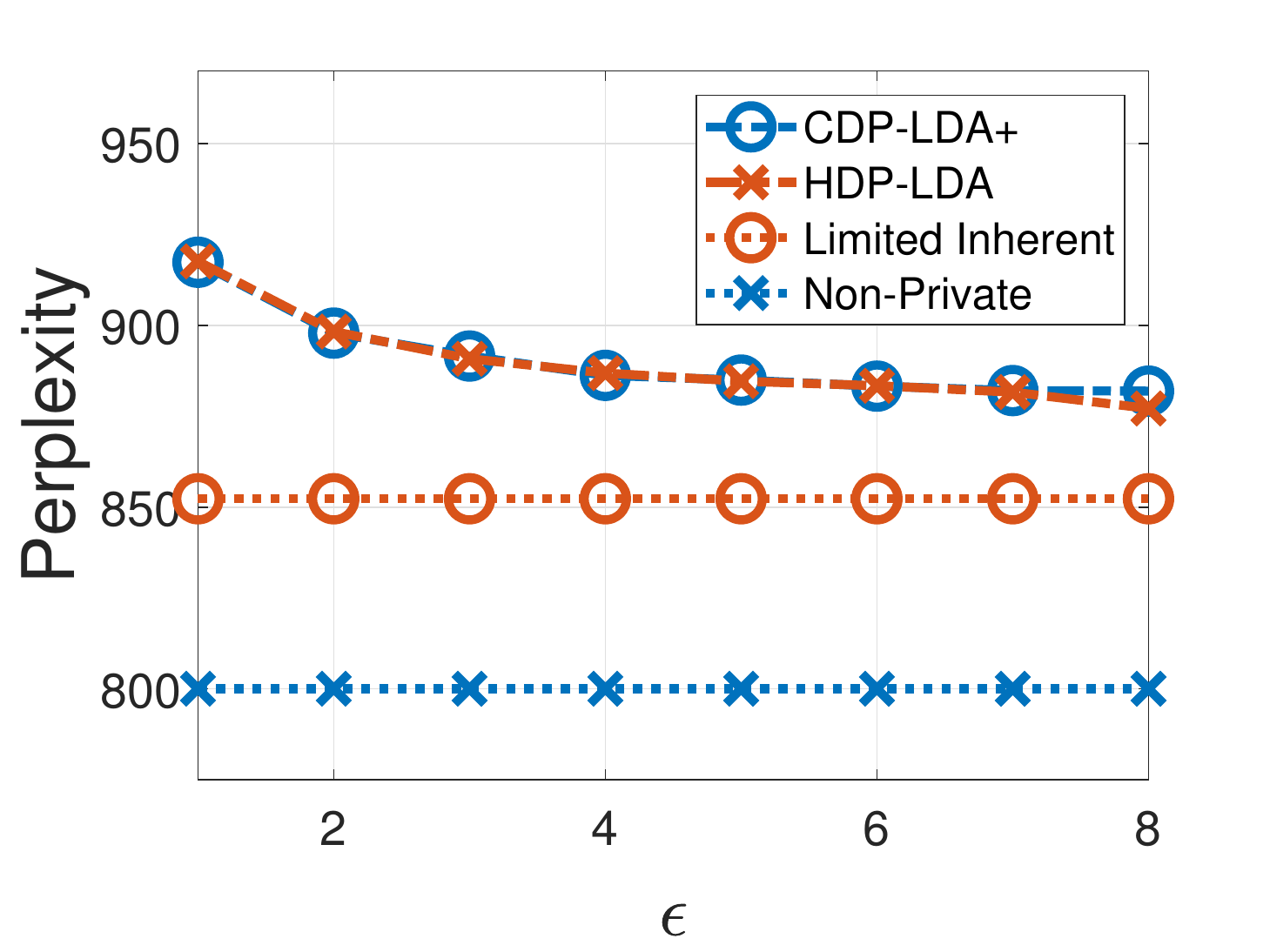}
	\label{subfig:NIPS_}
}
\subfigure[Enron]{
	\includegraphics[width=5.5cm]{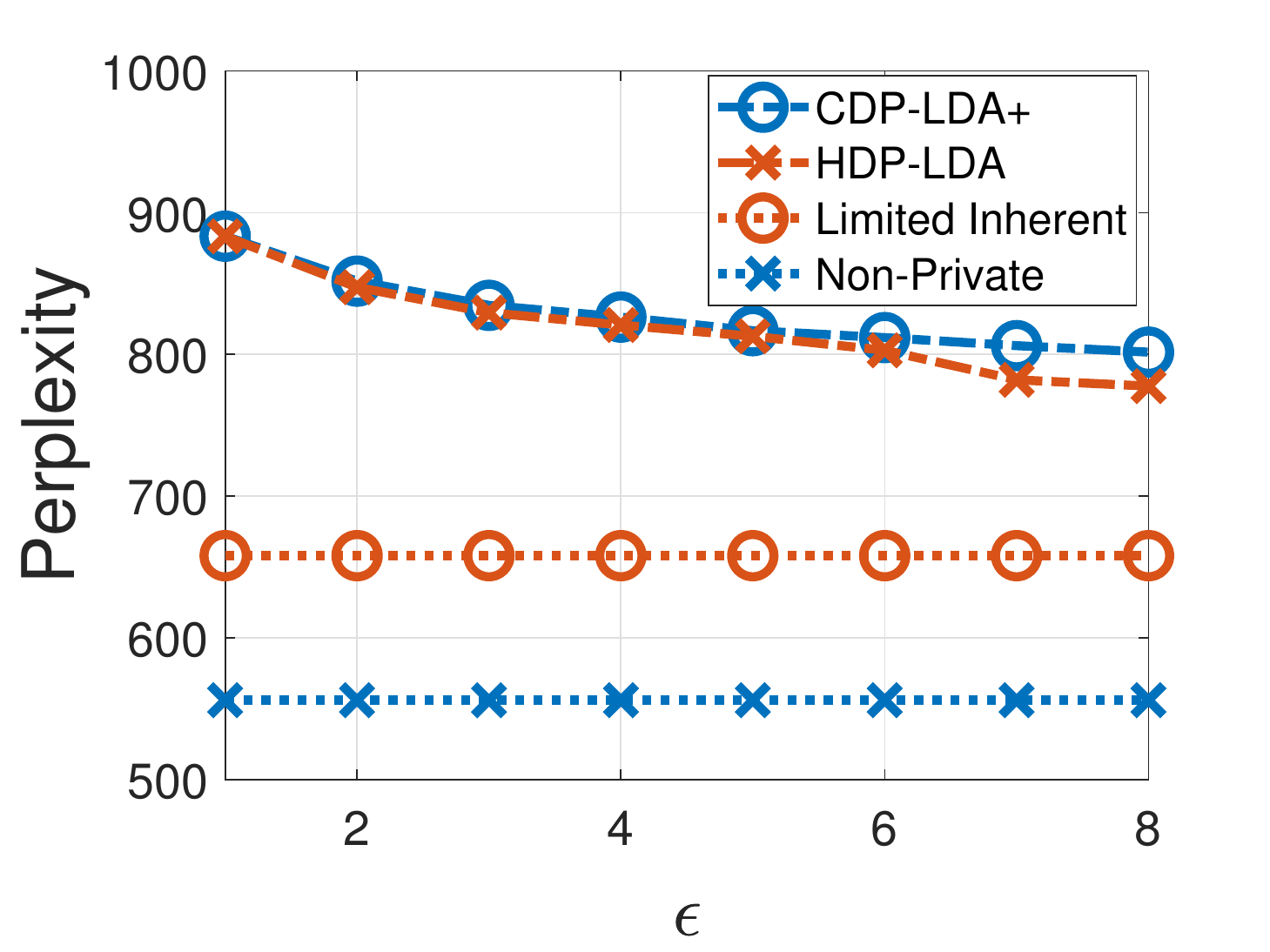}
	\label{subfig:Enron_}
}
\caption{Perplexity vs. Privacy level of \textsf{HDP-LDA}}
\label{fig:HDP-LDA vs.CDP-LDA+}
\end{figure*}

We first validate the performance of our proposed \textsf{HDP-LDA} algorithm in terms of both privacy protection and model utility. 

\subsubsection{Defense Against Topic-based Attack}
Fig.~\ref{fig:inherent privacy} reports the privacy protection of \textsf{HDP-LDA} by validating its defending ability against the \textit{Topic-based attack}. In Fig.~\ref{fig:inherent privacy}, \textsf{HDP-LDA} achieves DP by setting proper clipping bound $C$ and $\beta$ to limit the inherent privacy, according to Equation~(\ref{Equ:chain effect mitigation}). While \textsf{CDP-LDA+} achieves DP by adding Laplace noise. As shown, for a plain CGS algorithm without any intervention (referred to as \textsf{Non-Private} in Fig.~\ref{fig:inherent privacy}) and \textsf{CDP-LDA+}, the attack accuracy curves show sharp increases with the iteration number even in the strong privacy regime ($\epsilon=1$) of \textsf{CDP-LDA+}. This is because both \textsf{CDP-LDA+} and \textsf{Non-Private} can not limit the inherent privacy loss in the topic sampling process, which can accumulate rapidly with the iteration number. In contrast, we can see that \textsf{HDP-LDA} can effectively defend against the attack even when $\epsilon=10$. This demonstrates that \textsf{HDP-LDA} which limits the inherent privacy can effectively prevent the privacy leakage from the sampled topics.

\subsubsection{Utility vs. DP}
Fig.~\ref{fig:HDP-LDA vs.CDP-LDA+} compares the perplexity of \textsf{HDP-LDA} and \textsf{CDP-LDA+} under different privacy levels $\epsilon$ incurred by the Laplace noise. In Fig.~\ref{fig:HDP-LDA vs.CDP-LDA+}, \textsf{HDP-LDA} limits the inherent privacy level as $10$ in each iteration under which the topic sampling process can be protected as shown in Fig.~\ref{fig:inherent privacy}. The curves marked by \textsf{Limited Inherent} present the perplexity of \textsf{HDP-LDA} with limited inherent privacy as $10$ but no Laplace noise. Compared with \textsf{Non Private} (or plain CGS), \textsf{Limited Inherent} shows a utility degradation since it has stronger inherent privacy (through clipping the word-counts and setting the hyper-parameter $\beta$ larger).

However, the model utility of \textsf{HDP-LDA} is no worse than \textsf{CDP-LDA+}, and much better on \textsf{KOS} dataset, even if \textsf{CDP-LDA+} incurs more privacy loss than \textsf{HDP-LDA} when taking the inherent privacy loss into account. That is because in \textsf{HDP-LDA}, we set a larger $\beta$ in the training process as the prior information which could improve the model robustness to the noise.

\subsection{Performance of \textsf{LP-LDA}}
\begin{figure*}[htb]
\centering
\subfigure[\textsf{KOS}]{
	\includegraphics[width=5.5cm]{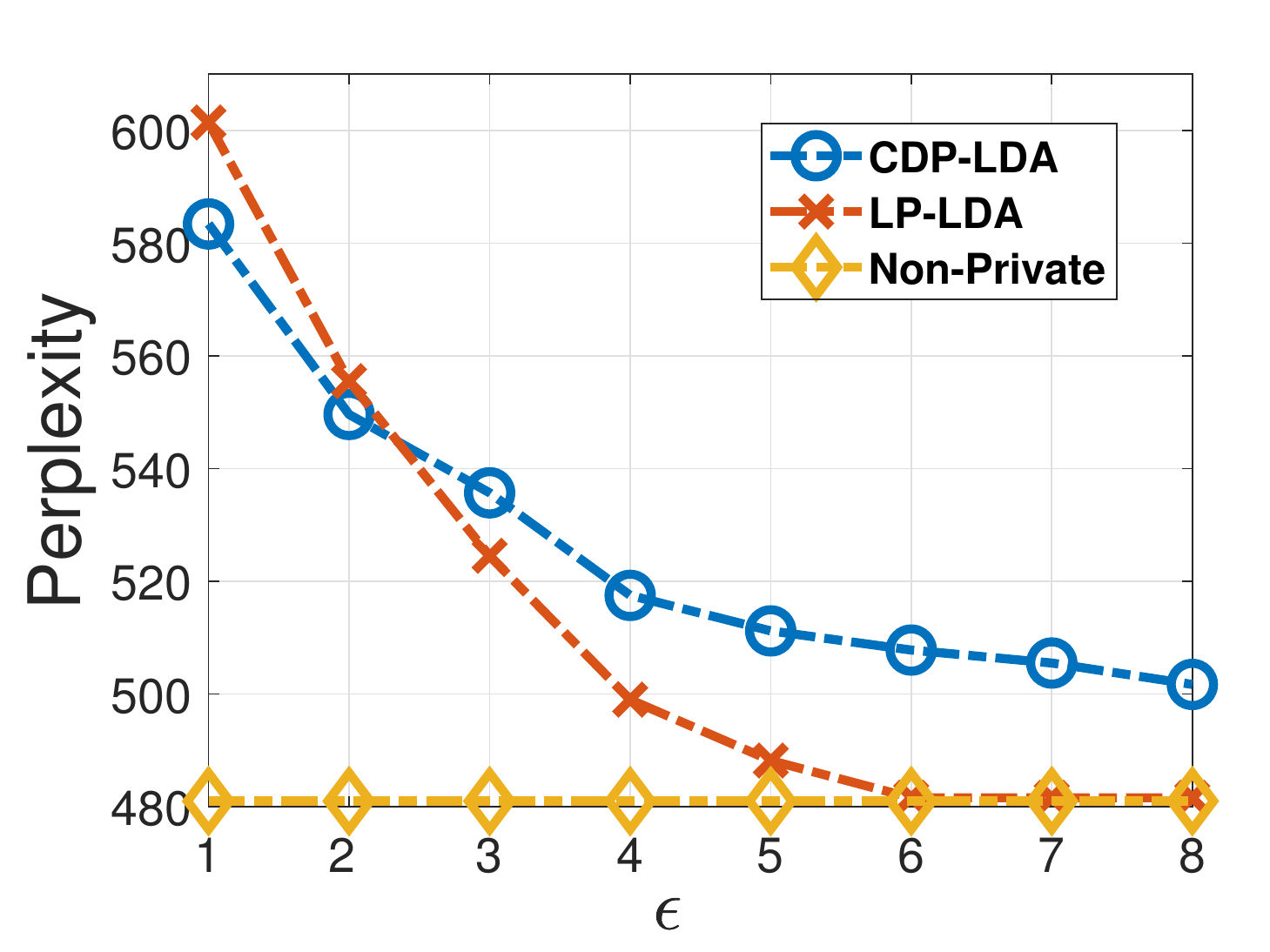}
}
\subfigure[\textsf{NIPS}]{
	\includegraphics[width=5.5cm]{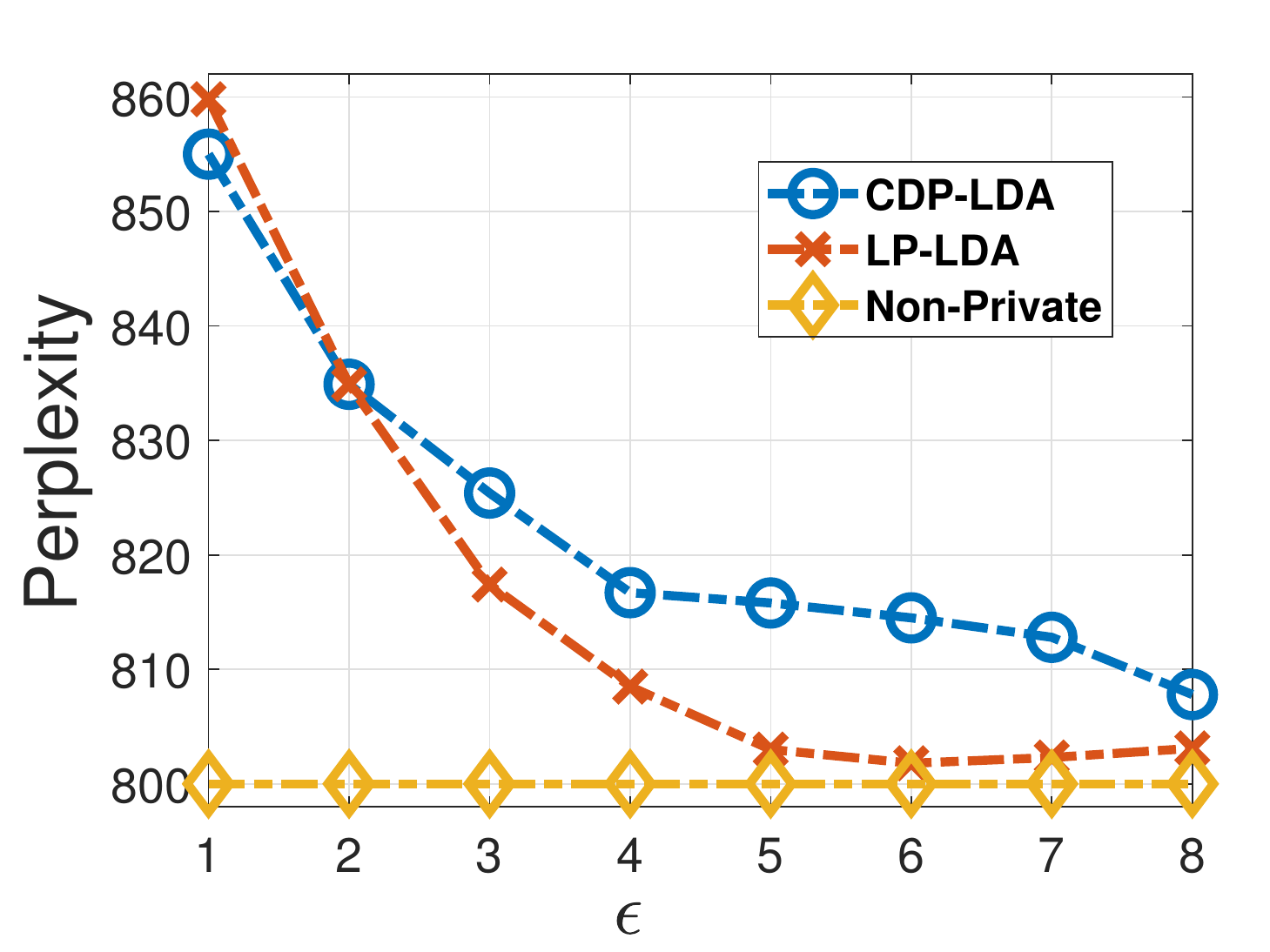}
}
\subfigure[\textsf{Enron}]{
	\includegraphics[width=5.5cm]{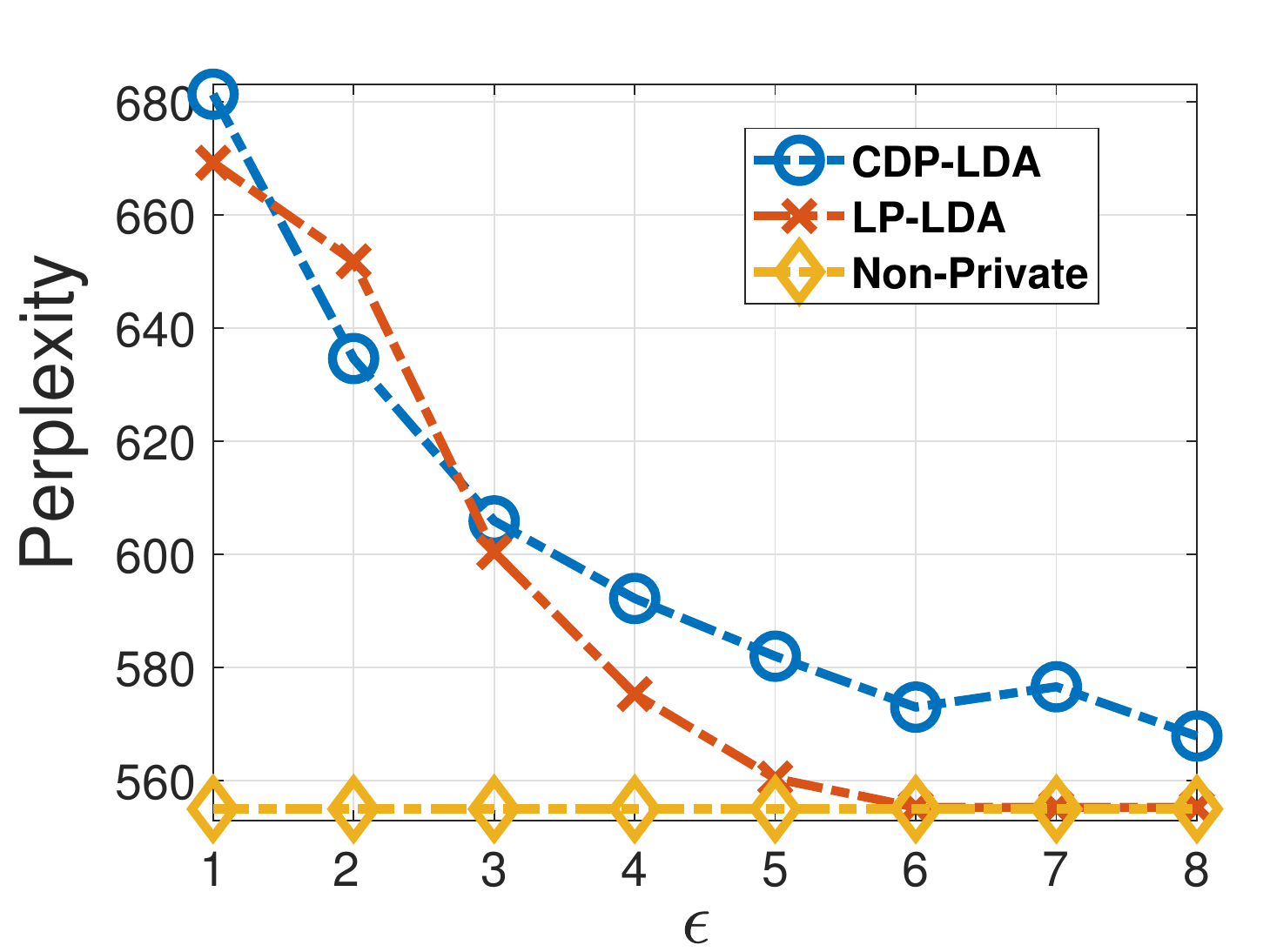}
}
\caption{Perplexity vs. Privacy Level of \textsf{LP-LDA}}
\label{fig:LP-LDA}
\end{figure*}

We then present the performance of our proposed LDP based LDA algorithm \textsf{LP-LDA} for static dataset.
\subsubsection{Utility vs. DP}
Fig.~\ref{fig:LP-LDA} shows the perplexity of \textsf{LP-LDA} in comparison with \textsf{CDP-LDA} under different levels of local differential privacy. As we can see, the perplexity curves of both \textsf{LP-LDA} and \textsf{CDP-LDA} show monotonous decrease as $\epsilon$ increases, which illustrates the tradeoff between the model utility and privacy. In particular, for stronger privacy regime with smaller $\epsilon$, the compared \textsf{CDP-LDA} performs better than \textsf{LP-LDA} because the injected Laplace noise incurs less randomness than the randomized response technique of \textsf{LP-LDA}. However, for weaker privacy regime with larger $\epsilon$, \textsf{LP-LDA} performs much better than \textsf{CDP-LDA}, even close to the non-private LDA model as $\epsilon$ keeps increasing.

\subsubsection{Defense Against MIA}
We also present the privacy guarantee of \textsf{LP-LDA} by verifying the defending ability of its LDA-based classification model against MIA. In particular, we compare the membership inference accuracy between \textsf{LP-LDA} and \textsf{CDP-LDA}. The baseline non-private model was implemented by performing the CGS training process for $300$ iterations. For reference, we also present the performance of the target LDA-based classification models.
\begin{figure*}[htp]
\centering
\subfigure[\textsf{Attack Accuracy}]{
	\includegraphics[width=5.5cm]{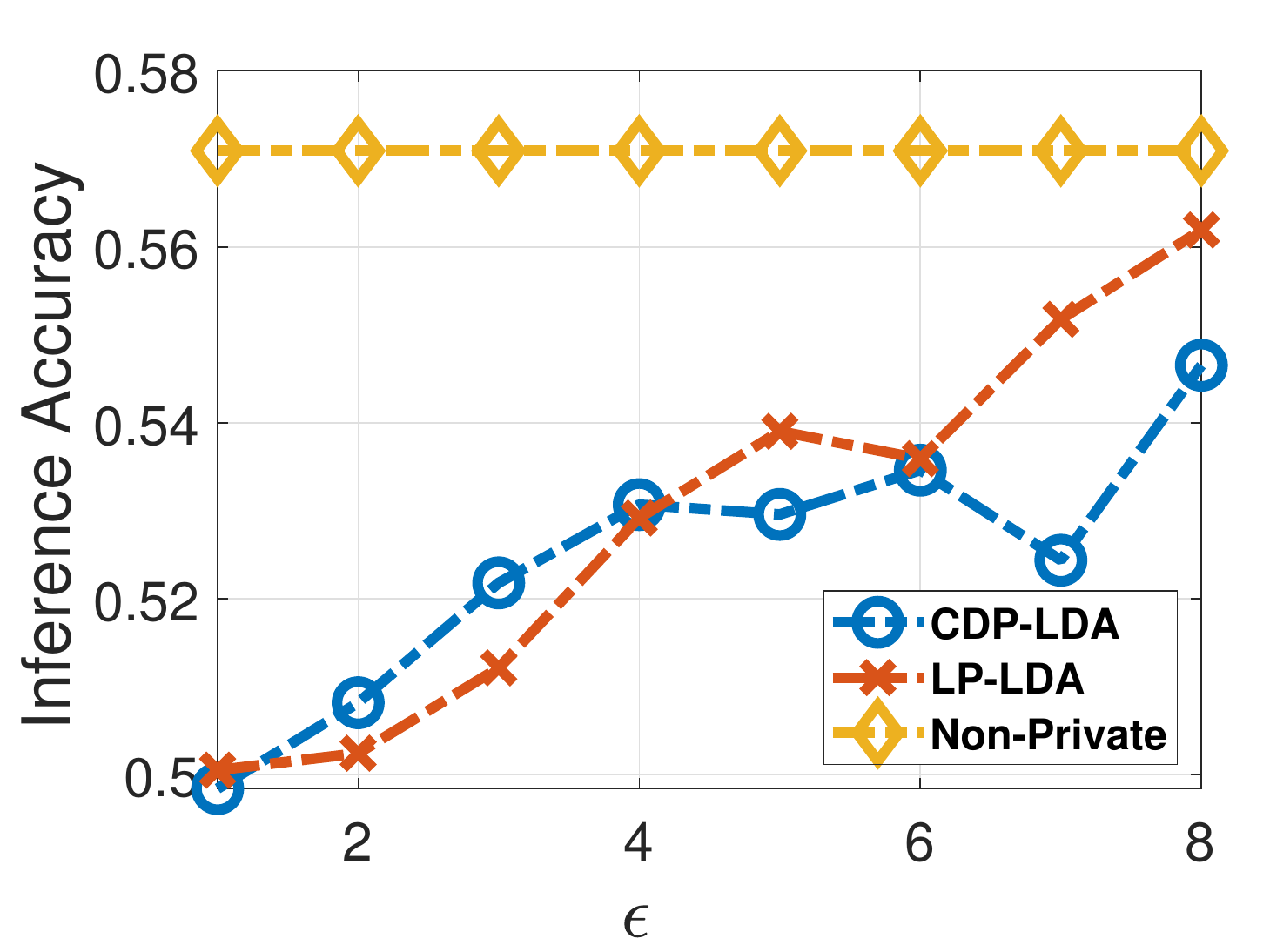}
	\label{attack_accuracy}
}
\subfigure[\textsf{Prediction Accuracy}]{
	\includegraphics[width=5.5cm]{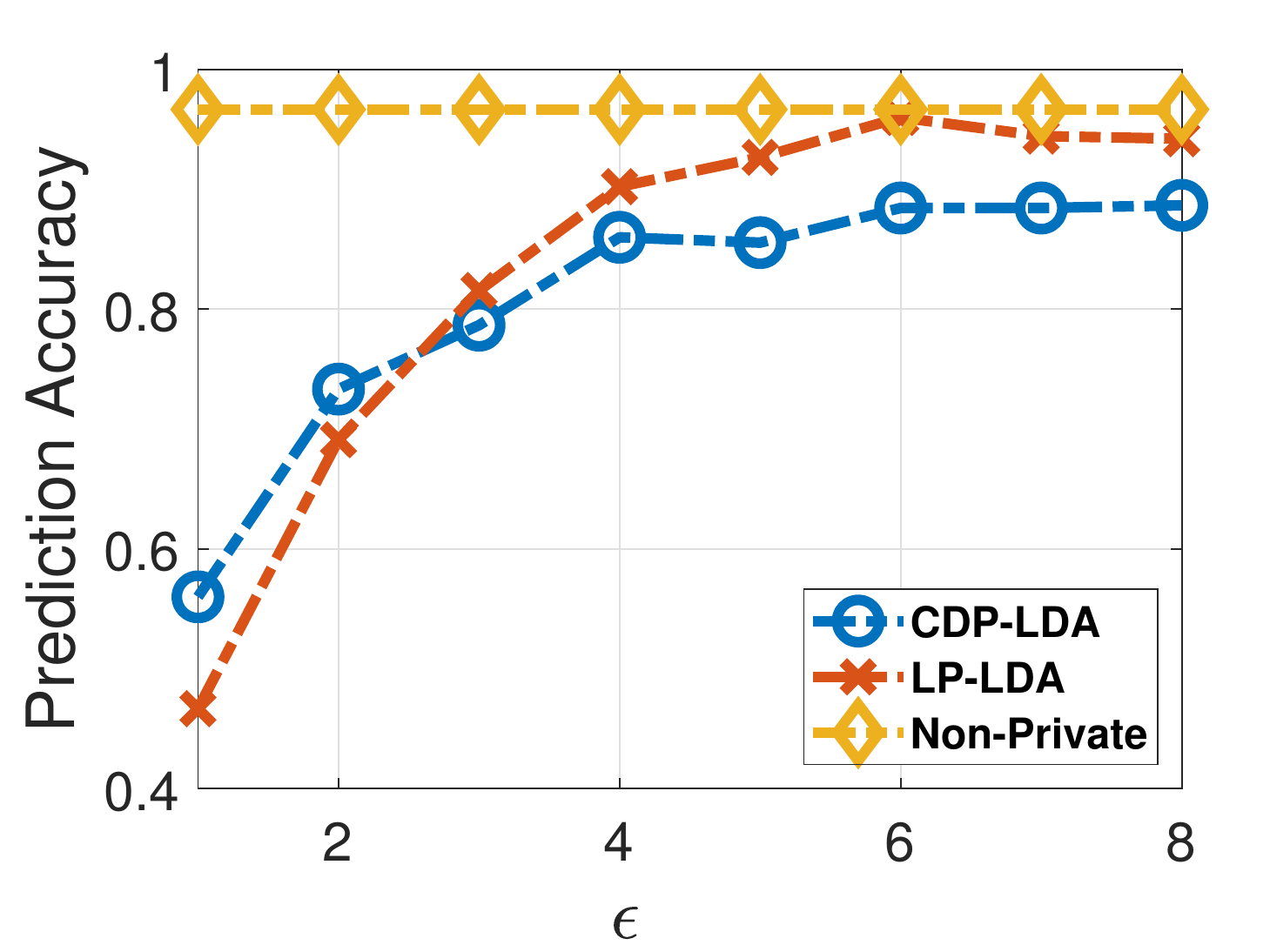}
	\label{model_prediction_accuracy}
}
\subfigure[\textsf{Prediction F1-Score}]{
	\includegraphics[width=5.5cm]{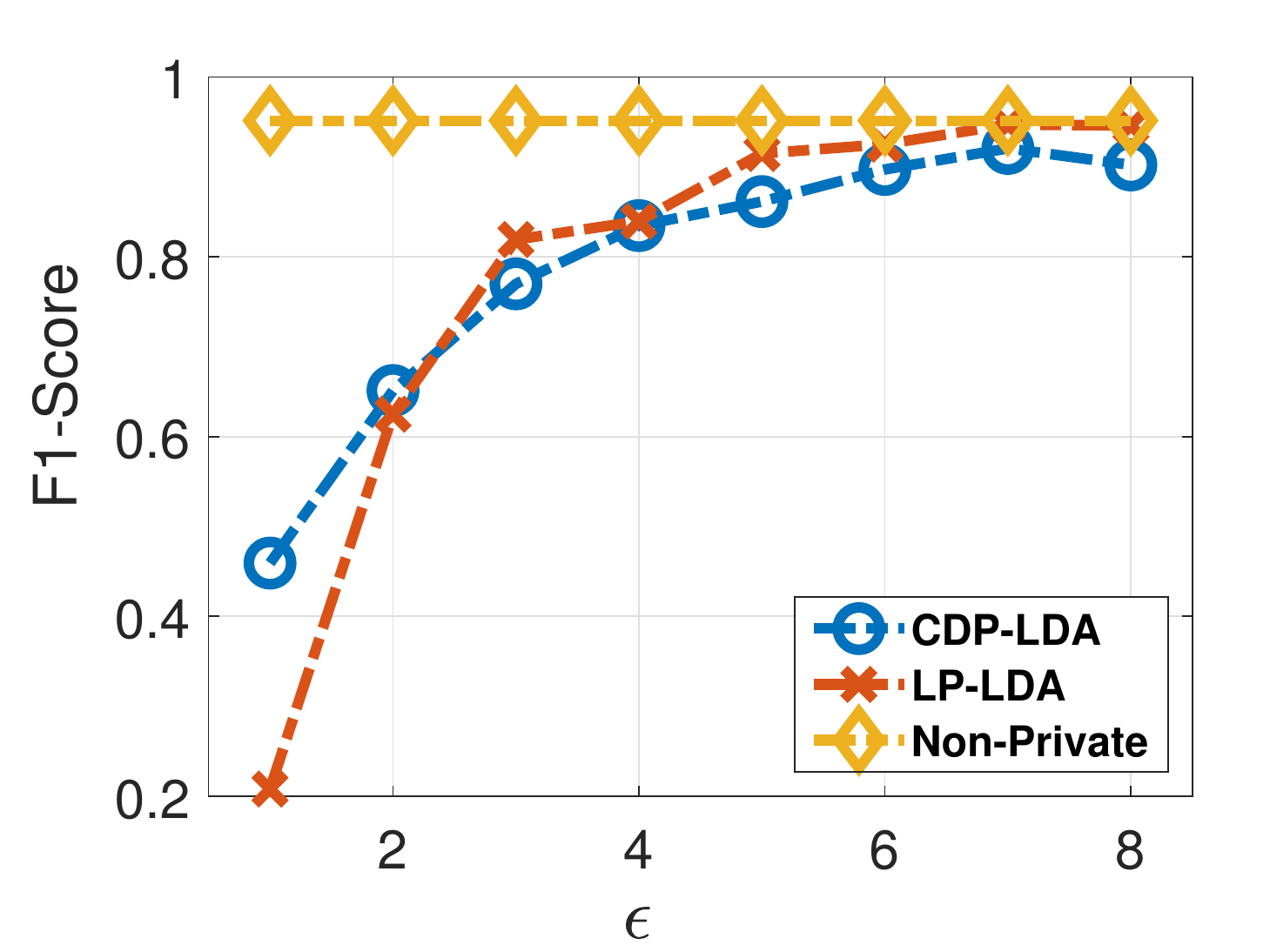}
	\label{model_F1_score}
}
\caption{Defense Against Membership Inference Attack}
\label{fig:MIA}
\end{figure*}	

Fig.~\ref{attack_accuracy} shows the membership inference accuracy of MIA on the LDA-based classification model generated by \textsf{LP-LDA}.
As shown, for both \textsf{LP-LDA} and \textsf{CDP-LDA} methods, the inference accuracy curves are below that of the CGS algorithm, which demonstrates that both privacy-preserving methods can effectively mitigate the inference of MIA. Furthermore, both inference accuracy curves show almost a monotonous increasing trend with the increase of $\epsilon$, which demonstrates that smaller $\epsilon$-DP could provide stronger defense ability.

Fig.~\ref{model_prediction_accuracy} and Fig.~\ref{model_F1_score} present the model utility (i.e., prediction accuracy and F1 score) of the LDA-based classification model generated by \textsf{LP-LDA}, versus the privacy levels $\epsilon$. As shown, both the prediction accuracy and F1-score for \textsf{LP-LDA} and \textsf{CDP-LDA} increase with $\epsilon$. This illustrates the general tradeoff between the privacy level and model utility. Similar to the results shown in Fig.~\ref{fig:LP-LDA}, the LDA-based classification model of \textsf{LP-LDA} performs better than \textsf{CDP-LDA} in the weaker privacy regime. As $\epsilon$ keeps increasing, \textsf{LP-LDA} can even approach to the plain CGS algorithm without extra noise.

\subsection{Performance of \textsf{OLP-LDA}}
We finally present the performance of our proposed \textsf{OLP-LDA} algorithm. In each experiment, the correlation factor $\lambda$ was set to $0.5$, the reconstruction factor $\omega$ was set to $0.4$, the mini-batch size was set to $160$ and the sizes of  the prior dataset for \textsf{KOS}, \textsf{NIPS} and \textsf{Enron} were set to $500$, $200$, and $1, 000$ respectively.

\subsubsection{Effect of Privacy Levels}

\begin{figure*}[htb]
\centering
\subfigure[\textsf{KOS}]{
	\includegraphics[width=5.5cm]{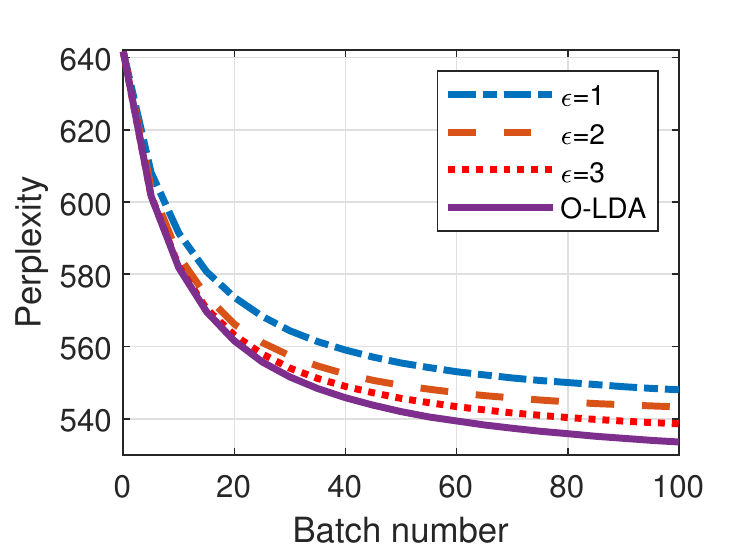}
	\label{on_kos}
}
\subfigure[\textsf{NIPS}]{
	\includegraphics[width=5.5cm]{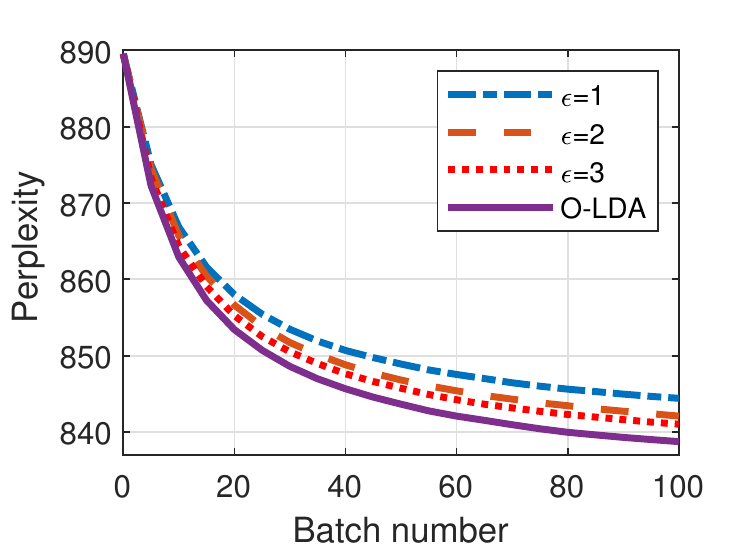}
	\label{on_nips}
}
\subfigure[\textsf{Enron}]{
	\includegraphics[width=5.5cm]{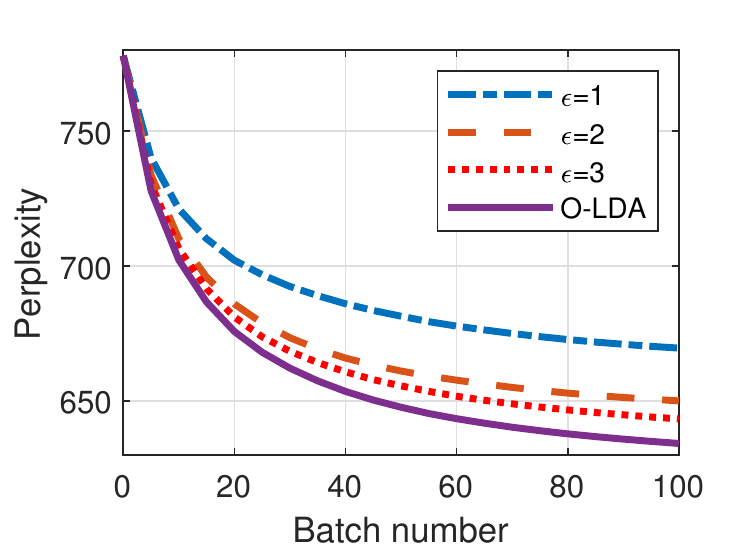}
	\label{on_enron}
}
\caption{Perplexity of OLP-LDA vs. Batch Number }
\label{fig:OLP-LDA}
\end{figure*}
Fig.~\ref{fig:OLP-LDA} shows the average perplexity (normalized by the number of mini-batches) of our proposed \textsf{OLP-LDA} versus the number of mini-batches, under different privacy levels. Particularly, the non-private solution \textsf{O-LDA} is also compared for reference. As we can see in all three subfigures, the perplexity curves for different privacy levels decrease monotonically with the increase of batch number, which demonstrates that \textsf{OLP-LDA} can iteratively refine the model with the continuous training batches. And also we can see, the different privacy level leads to different model performance in terms of perplexity, which also illustrates the tradeoff between the model performance and the privacy level.

\subsubsection{Effect of the Prior Dataset Size}
\begin{figure*}[htb]
\centering
\subfigure[\textsf{KOS}]{
	\includegraphics[width=5.5cm]{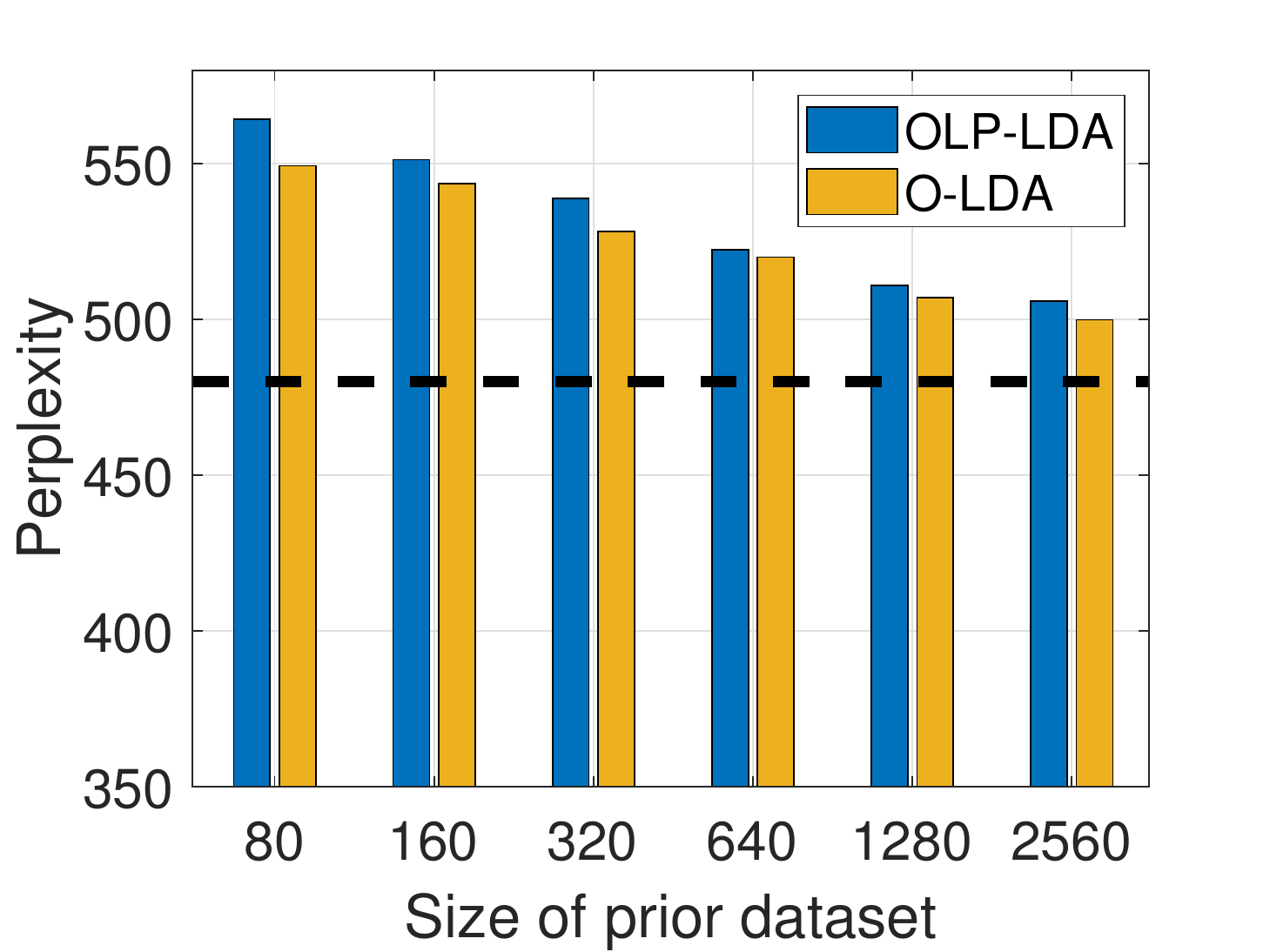}
	\label{on_nips}
}
\subfigure[\textsf{NIPS}]{
	\includegraphics[width=5.5cm]{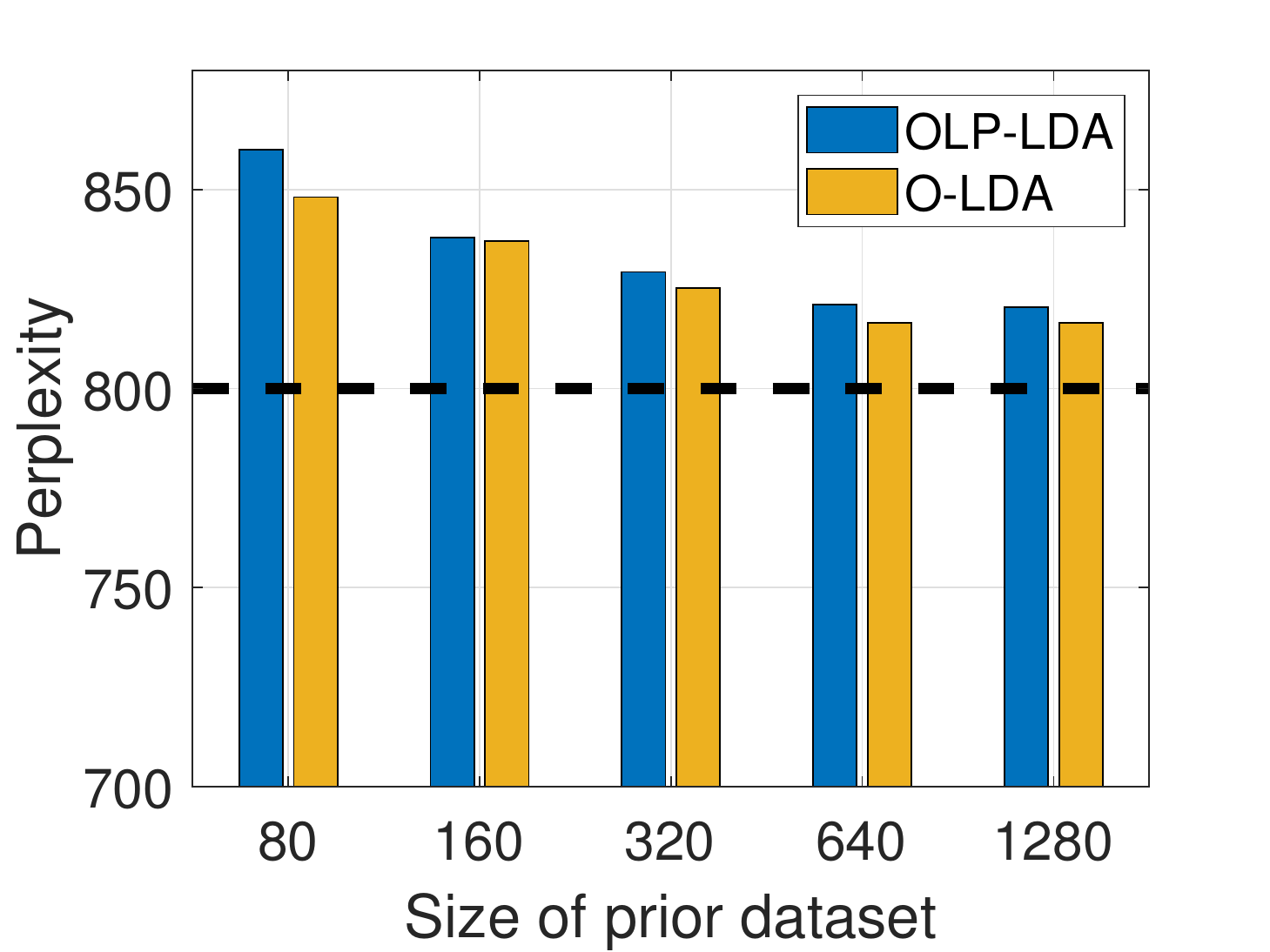}
	\label{on_nips}
}
\subfigure[\textsf{Enron}]{
	\includegraphics[width=5.5cm]{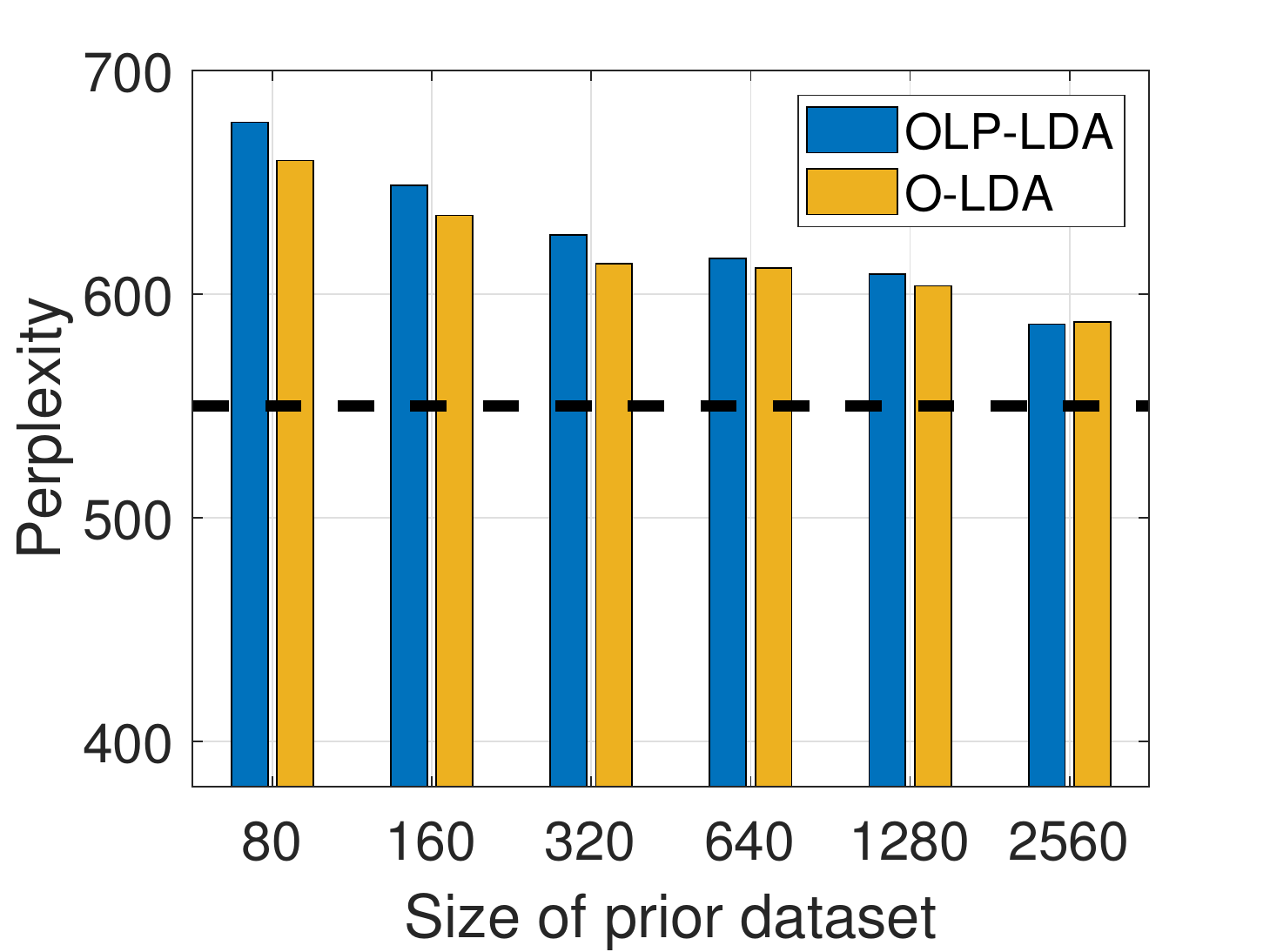}
	\label{on_enron}
}
\caption{Perplexity of \textsf{OLP-LDA} vs. Prior Data Size}
\label{fig:prior_OLP-LDA}
\end{figure*}
Fig.~\ref{fig:prior_OLP-LDA} depicts the performance of \textsf{OLP-LDA} in comparison with its non-private version \textsf{O-LDA}, under different sizes of the prior dataset. The privacy level in \textsf{OLP-LDA} was set to $3.0$ and the black horizontal dashed lines marks the perplexity obtained by the non-private batched CGS training algorithm. As expected, a larger scale of the prior dataset leads to a better model performance, that's because a larger prior dataset could provide more accurate prior knowledge for the word counts and reflect more representative information of the whole corpus.

\subsubsection{Effect of the Mini-batch Size}
\begin{figure*}[htp]
\centering
\subfigure[\textsf{KOS}]{
	\includegraphics[width=5.5cm]{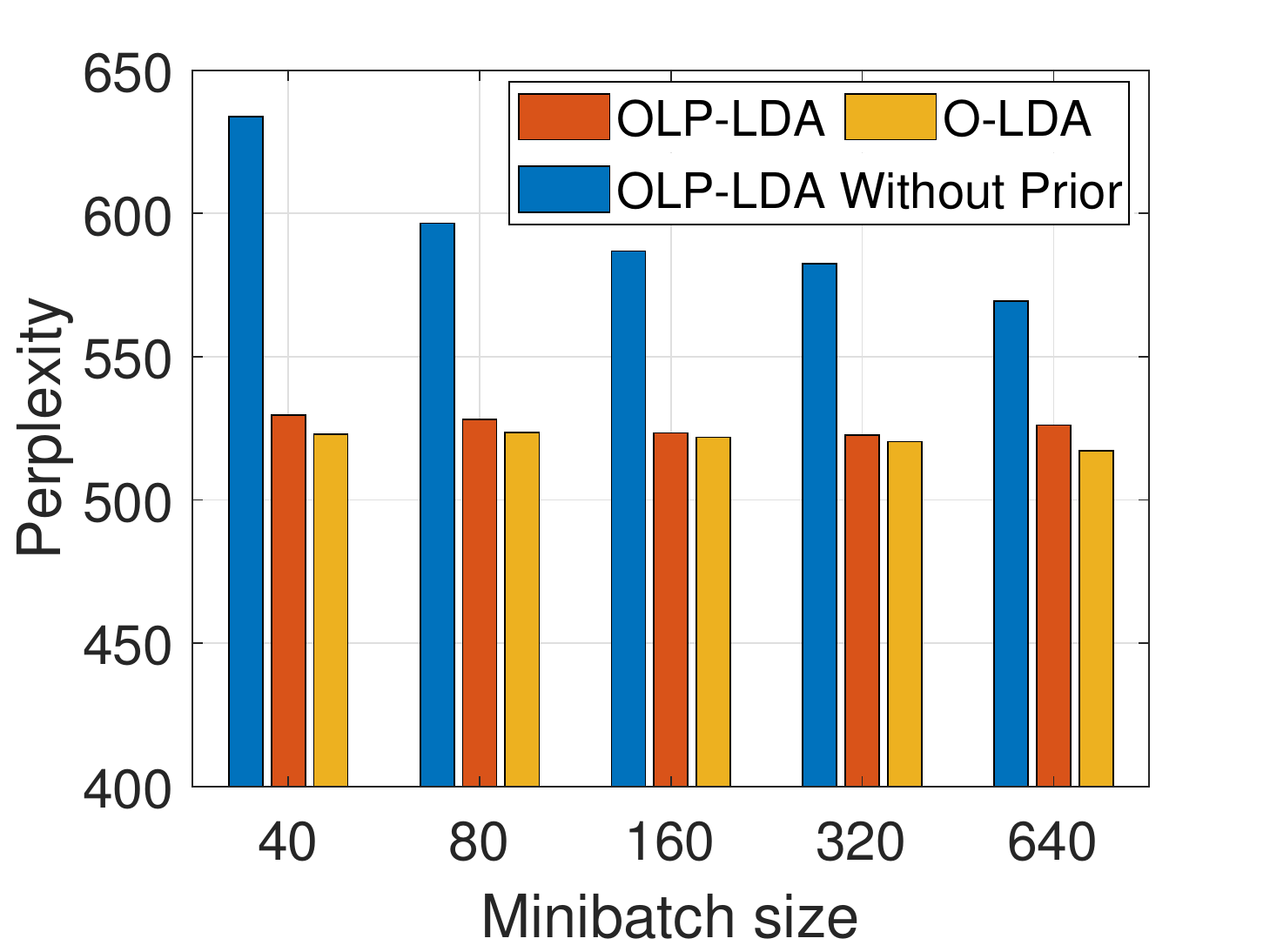}
	\label{minibatch_on_kos}
}
\subfigure[\textsf{NIPS}]{
	\includegraphics[width=5.5cm]{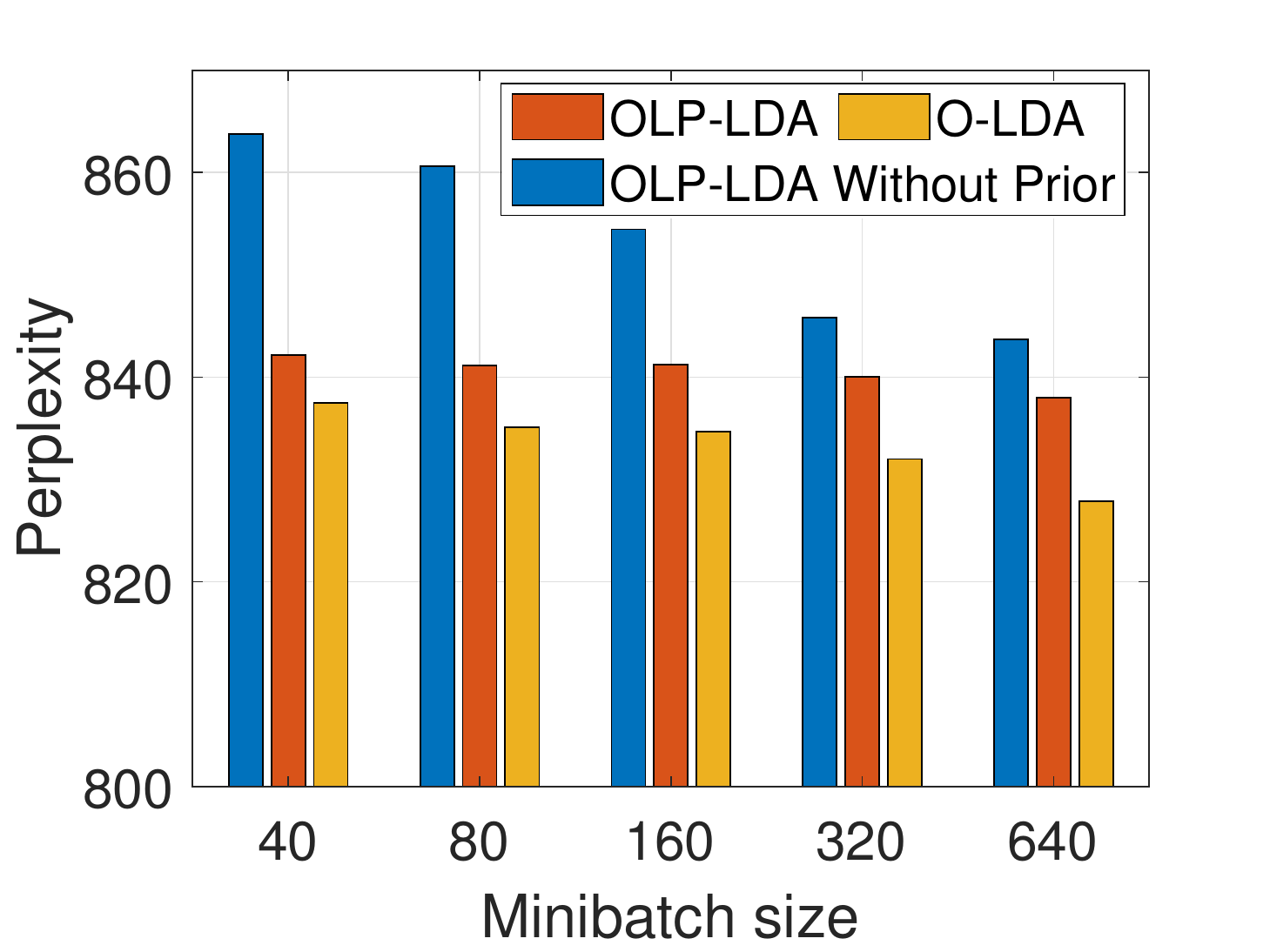}
	\label{minibatch_on_nips}
}
\subfigure[\textsf{Enron}]{
	\includegraphics[width=5.5cm]{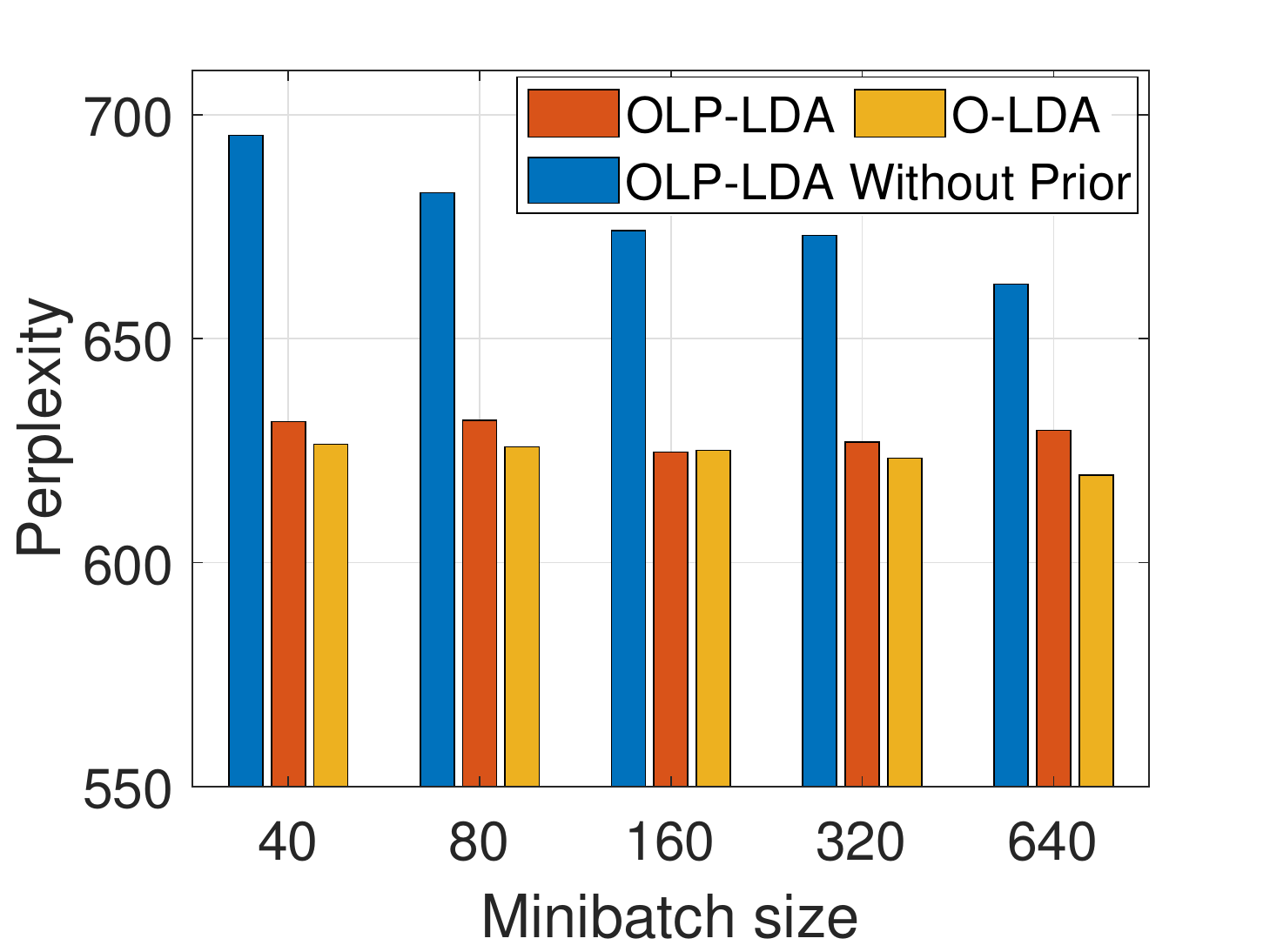}
	\label{minibatch_on_enron}
}
\caption{Perplexity of \textsf{OLP-LDA} vs. Mini-batch Size. }
\label{fig:minibatch_factor}
\end{figure*}
Fig.~\ref{fig:minibatch_factor} shows the impact of the mini-batch size on the model performance of \textsf{OLP-LDA}. For comparison, we also present the model performance of both \textsf{OLP-LDA} without the prior data and the non-private algorithm \textsf{O-LDA}.	
As shown, the perplexity of \textsf{OLP-LDA} without prior information is much larger than those of both \textsf{O-LDA} and \textsf{OLP-LDA} with prior dataset. Besides, the perplexity obtained by \textsf{OLP-LDA} with prior data can be very close to that in the non-private \textsf{O-LDA}. These observations show that the high effectiveness of our proposed \textsf{OLP-LDA} and validate that prior data can better enhance the performance of \textsf{OLP-LDA}.	
Furthermore, the perplexity of both \textsf{OLP-LDA} without prior data and \textsf{O-LDA} fall with the increase of mini-batch size respectively. However, for \textsf{OLP-LDA}, the perplexity does not show a clear decreasing trend with the growth of the mini-batch size, and even slightly increases, which means \textsf{OLP-LDA} is relatively not sensitive to the mini-batch size. This is because the prior dataset has sufficiently optimized the model of \textsf{OLP-LDA} and achieved the approximate optimal model as \textsf{O-LDA}.

\section{Conclusion}\label{sec:conclusion}
This paper investigates the privacy protection of Latent Dirichlet Allocation model training in the framework of differential privacy. We present the first analysis on the intrinsic privacy-preserving effect possessed by Collapsed Gibbs Sampling based LDA training algorithm and further propose a centralized privacy-preserving algorithm (HDP-LDA) that can prevent data inference from the intermediate statistics in the CGS training. Concerning the privacy risks at the central server, we also design an LDP solution of \textsf{LP-LDA} by sanitizing the individual documents at the local side and performing the model training on the reconstruction dataset at the server side. For LDA training in a stream setting, we further provide an online algorithm \textsf{OLP-LDA} that can efficiently refine the training model on continuously perturbed data batches. Both extensive analysis and experimental results on real-world datasets validate and demonstrate that our proposed algorithms can achieve high model utility under differential privacy.

\bibliographystyle{IEEEtran}
\small
\bibliography{references}

\appendices
\normalsize

\section{Proof of Proposition~\ref{proposition:privacy loss of part 1}}\label{proof proposition part1}
\begin{proof}
According to the observation in section~\ref{CGS-EM}, we need to compute an upper bound of the sensitivity $\Delta u=\max_k{\{\Delta \log{p_k}\}}$ of the utility function $u=\log{p}$.

As shown in Equation~(\ref{Equ:sampling equation}), the sampling probability on $w_i=t\in D$ for any given topic $k$ can be computed by
\begin{equation}\label{Equ:sampling equation on D}
p_k \propto r_k=\frac{n_{k}^{t}+\beta}{\sum_{t=1}^{V}(n_{k}^{t}+\beta)}\cdot\frac{n_{m }^{k}+\alpha}{\sum_{k=1}^{K}(n_{m}^{k}+\alpha)}
\end{equation}
And correspondingly, the sampling probability on $w'_i=t'\in D'$ can be computed by
\begin{equation}\label{Equ:sampling equation on D'}
p'_k \propto r'_k=\frac{n_{k}^{t'}+\beta}{\sum_{t=1}^{V}(n_{k}^{t}+\beta)}\cdot\frac{n_{m }^{k}+\alpha}{\sum_{k=1}^{K}(n_{m}^{k}+\alpha)}
\end{equation}
Notably, here we assume $n_m^k|D=n_m^k|D'$ since if the \textit{word replacement} causes any change on $n_m^k$, which also means some sampling process has been changed, then that change would be observed by the adversary.

Then taking into account the normalization constants, $\Delta \log{p_k}$ can be bounded by
\begin{equation}\label{sensitivity function of part1}
\begin{aligned}
\Delta \log{p_k}&=|\log\frac{p'_k}{p_k}|=|\log{(\frac{r_k}{r'_k}\cdot\frac{\sum_k{r'_k}}{\sum_k{r_k}})}|
\\&=|\log{(\frac{r_k}{r'_k})}+\log{(\frac{\sum_k{r'_k}}{\sum_k{r_k}})}|
\\&\leq|\log{(\frac{r_k}{r'_k})}|+|\log{(\frac{\sum_k{r'_k}}{\sum_k{r_k}})}|
\\&\leq\Delta \log{r_k}+\max_k{\{|\log{\frac{r'_k}{r_k}|}\}}
\\&\leq 2\max_k{\{\Delta \log{r_k}\}}=2\max_k{\{|\log{\frac{n_{k}^{t'}+\beta}{n_k^t+\beta}}|\}}
\end{aligned}
\end{equation}
Until now, proposition~\ref{proposition:privacy loss of part 1} has been proved.
\end{proof}

\section{Proof of Theorem~\ref{theorem:composition in single iteration}}\label{proofcomposition in single iteration}
\begin{proof}

Let $P(\cdot)$ and $P'(\cdot)$ denote the probability on $D$ and $D'$ respectively, and for convenience of derivation, the conditional probability $Pr[|]$ are simply represented by $Pr[]$,
then
\begin{equation}\label{Equ: chain effect}
\begin{aligned}
 &P'(\mathcal{S}_i(D')=\mathbf{o_{i}}|\mathcal{S}_{i-1}(D')=\mathbf{o_{i-1}})=P'(\mathcal{S}_i(w_r=t)=o_r)\cdot \\&\prod_{h=1}^{n_{t}-1}P'(\mathcal{S}_i(w_h=t)=o_h^t)
\cdot \prod_{k=1}^{n_{t'}}P'(\mathcal{S}_i(w_k=t')=o_i^{t'})\cdot \\&P'(\mathcal{S}_i(\mathbf{w^{-}})=\mathbf{o}^{-})
\\&\leq e^{\epsilon_i^r}P(\mathcal{S}_i(w_r=t')=o_r)\cdot e^{\sum_{h=1}^{n_{t}-1}{\epsilon_i^{t}}} \prod_{h=1}^{n_{t}-1}P(\mathcal{S}_i(w_h=t)=o_i^t)
\\&\cdot e^{\sum_{k=1}^{n_{t'}}{\epsilon_i^{t'}}}\prod_{k=1}^{n_{t'}}P(\mathcal{S}_i(w_k=t')=o_i^{t'})\cdot P(\mathcal{S}_i(\mathbf{w^{-}})=\mathbf{o}^{-})
\\&=e^{\epsilon_i^i+(n_t-1)\epsilon_i^{t}+n_{t'}\epsilon_i^{t'}}P(\mathcal{S}_i(D)=\mathbf{o_{i}}|\mathcal{S}_{i-1}(D)=\mathbf{o_{i-1}})
\end{aligned}
\end{equation}
where $n_t$ and $n_{t'}$ denote the counts of word $t$ and $t'$ in $D$ respectively and $\mathbf{w^{-}}$ denotes all the \textit{Unrelated words} in $D$.

Then, according to Proposition~\ref{proposition:privacy loss of part 1} and Proposition~\ref{proposition:privacy loss of part 2}, the privacy loss in the $i$-th iteration can be bounded by
\begin{equation*}
\begin{aligned}
\epsilon_i\leq 2\max_k{\{|\log{\frac{n_{k}^{t'}+\beta}{n_k^t+\beta}}|}+(n_{t'}+n_{t}-1)\cdot\log{(1+\frac{1}{\beta})}\}
\end{aligned}
\end{equation*}
Due to the arbitrariness of $t'$, $\epsilon_i$ can be further bounded by
\begin{equation*}
\begin{aligned}
\epsilon_i&\leq 2\{\log{(\frac{\max_{k,t'}{\{n_{k}^{t'}\}}}{\beta}+1)}
\\&+(\max_{t'}\{n_{t'}\}+n_{t}-1)\cdot\log{(1+\frac{1}{\beta})}\}
\end{aligned}
\end{equation*}

Until now, Theorem~\ref{theorem:composition in single iteration} has been proved.
\end{proof}

\section{Proof of Theorem~\ref{theorem:composition in total CGS}}\label{proof composition in total CGS}
\begin{proof}
Given the replaced word $w_r\in D$, the privacy loss of $w_r$ in the $i$-th iteration $\epsilon^r_i$ can be bounded by Theorem~\ref{theorem:composition in single iteration}. Now consider the privacy loss in multiple iterations. Let $\mathcal{S}$ denotes the CGS algorithm throughout the overall training process, $\mathcal{S}_i$ the algorithm in the $i$-th iteration.
\begin{equation*}
\begin{aligned}
&P'(\mathcal{S}(D')=\mathbf{o})
\\&=P'(\mathcal{S}_1(D')=\mathbf{o_1}...,\mathcal{S}_i(D')=\mathbf{o_i},...\mathcal{S}_n(D')=\mathbf{o_n})
\\&=\prod_{i=1}^n {P'(\mathcal{S}_i(D')=\mathbf{o_i}|\mathcal{S}_1(D')=\mathbf{o_1},...,\mathcal{S}_{i-1}(D')=\mathbf{o_{i-1}})}
\\&\leq \prod_{i=1}^n {e^{\epsilon_i}\cdot P(\mathcal{S}_i(D)=\mathbf{o_i}|\mathcal{S}_1(D)=\mathbf{o_1},...,\mathcal{S}_{i-1}(D)=\mathbf{o_{i-1}})}
\\&= e^{\sum_{i=1}^n{\epsilon_i}}P(\mathcal{S}_1(D)=\mathbf{o_1}...,\mathcal{S}_i(D)=\mathbf{o_i},...)
\\&= e^{\sum_{i=1}^n{\epsilon_i}}P(\mathcal{S}(D)=\mathbf{o})
\end{aligned}
\end{equation*}
Until now, we have found the privacy loss $\epsilon_{w_r}$ of $w_r$ over $n$ iterations.

However, due to the arbitrariness of $w_r$, the upper bound of the privacy loss incurred by CGS algorithm should be found by traversing all the words in $D$. Therefore, the total privacy loss $\epsilon$ can be bounded by
\begin{equation*}
\begin{aligned}
\epsilon \leq \max_{w\in D}{\{\sum_{j=1}^n{\epsilon_i^w}\}}
\end{aligned}
\end{equation*}
So far, Theorem~\ref{theorem:composition in total CGS} has been proved.
\end{proof}

\section{Proof of Theorem~\ref{chain effect mitigation theorem}}\label{proof chain effect mitigation theorem}
\begin{proof}

In each iteration of Algorithm~\ref{alg: Hybrid privacy preserving Algorithm}, the privacy loss should be divided into 2 parts: 1). privacy loss $\epsilon_L$ when protecting the word counts information $\mathbf{s}=\{n_k^t\}$, let $\mathcal{A}_1(D)$ be the protection process and $\mathbf{s'}$ the sanitized word counts; 2). privacy loss $\epsilon_I$ in the complete topic sampling process in this iteration, let $\mathcal{A}_2(s',D)$ be the sampling process and $\mathbf{o}$ the outputs.

First consider the first part. Suppose $D'$ is constructed by replacing $w_r=t\in D$ by $t'$, then this replacement will cause $n_k^t-1$ and $n_k^{t'}+1$ on $D'$ for some $k$, and the added noise will need to cover the two caused changes:
\begin{equation}\label{Equ:1st part in the proof of thm3}
\begin{aligned}
&Pr[\mathcal{A}_1(D)=\mathbf{s'})]
\\&=\sum_{\mathbf{s'}}Pr[...,(n_k^t)+\eta_k^t=s_k^t,...,(n_k^{t'})+\eta_k^{t'}=s_k^{t'},...|D]
\\&=\sum_{\mathbf{s'}}\prod_{h,j} Pr[(n_h^j)+\eta_h^j=s_h^j|D]
\\&\leq\sum_{\mathbf{s'}}\prod_{(h,j)\neq(k,t),(k,t')} Pr[(n_h^j)+\eta_h^j=s_h^j|D']\\&\cdot e^{\frac{\epsilon_L}{2}} Pr[(n_k^t-1)+\eta_k^t=s_k^t|D']\cdot e^{\frac{\epsilon_L}{2}} Pr[(n_k^{t'}+1)+\eta_k^{t'}=s_k^{t'}|D']
\\&\leq\sum_{\mathbf{s'}}e^{\epsilon_L}Pr[...,(n_k^t-1)+\eta_k^t=s_k^t,...,(n_k^{t'}+1)+\eta_k^{t'}=s_k^{t'},...|D']
\\&= e^{\epsilon_L}Pr[\mathcal{A}_1(D')=\mathbf{s'})]
\end{aligned}
\end{equation}
Then consider the second part. According to Theorem~\ref{theorem:composition in single iteration}, we focus on the privacy loss on the \textit{Related words} and the \textit{Replace word}. The privacy loss on the \textit{Related words} can be computed by
\begin{equation}\label{Equ:chain effect mitigation in proof}
\begin{aligned}
\epsilon&\leq\max_k\{\Delta\log{p_k}\}=\max_k\{\log\frac{p_k(w=t(t')|D)}{p_k(w=t(t')|D')}\}
\\&=\max_k\{\log\frac{\frac{(n_{k}^{t(t')})^{temp}+\beta}{\sum_{t=1}^{V}((n_k^t)'+\beta)}\cdot\frac{n_{m }^{k}+\alpha}{\sum_{k=1}^{K}(n_{m}^{k}+\alpha)}|D}{\frac{(n_{k}^{t(t')})^{temp}+\beta}{\sum_{t=1}^{V}((n_k^t)'+\beta)}\cdot\frac{n_{m }^{k}+\alpha}{\sum_{k=1}^{K}(n_{m}^{k}+\alpha)}|D'}\}
\\&=\max_k\{\log\frac{(n_{k}^{t(t')})^{temp}+\beta|D}{(n_{k}^{t(t')})^{temp}+\beta|D'}\}=0
\end{aligned}
\end{equation}
where $t(t')$ means $t$ or $t'$ and the last step is because $n_{k}^{t(t')}$ has been privatized by adding Laplace noise. Therefore, according to Theorem~\ref{theorem:composition in single iteration}, the privacy loss $\epsilon_I$ should be
\begin{equation}\label{Equ:2nd part in the proof of thm3}
\begin{aligned}
\epsilon_I\leq 2\cdot\log{(\frac{\max_{k,t'}{\{n_{k}^{t'}\}}}{\beta}+1)}\leq 2\cdot\log{(\frac{C}{\beta}+1)}
\end{aligned}
\end{equation}
where $C$ denotes the clipping bound on $n_k^t$.
Now, combine Equation~(\ref{Equ:1st part in the proof of thm3}) and Equation~(\ref{Equ:2nd part in the proof of thm3}),

\begin{equation*}
\begin{aligned}
&Pr[\mathcal{A}_2(\mathcal{A}_1(D),D)=\mathbf{o}]
\\&=\sum_{\mathbf{s'}}{Pr[\mathcal{A}_1(D)=\mathbf{s'}]\cdot Pr[\mathcal{A}_2(\mathbf{s'},D)=\mathbf{o}]}
\\&\leq\sum_{\mathbf{s'}}e^{\epsilon_L}{Pr[\mathcal{A}_1(D')=\mathbf{s'}]\cdot e^{\epsilon_I}Pr[\mathcal{A}_2(\mathbf{s'},D')=\mathbf{o}]}
\\&\leq e^{\epsilon_L+\epsilon_I}Pr[\mathcal{A}_2(\mathcal{A}_1(D'),D')=\mathbf{o}]
\end{aligned}
\end{equation*}
So far, Theorem~\ref{chain effect mitigation theorem} has been proved.
\end{proof}

\clearpage
\end{spacing}
\end{document}